\newcommand{\BigO}[1]{\ensuremath{\operatorname{O}\bigl(#1\bigr)}}
\newtheorem{theorem}{Theorem}
\newtheorem{corollary}[theorem]{Corollary}
\newtheorem{lemma}[theorem]{Lemma}
\newtheorem{prop}{Proposition}
\newcommand{\name}{GosInE}
\newcommand*{\algrule}[1][\algorithmicindent]{%
  \makebox[#1][l]{%
    \hspace*{.2em}
    \vrule height .75\baselineskip depth .25\baselineskip
  }
}
\def\ALG@printindent{%
    \ifnum \theALG@nested>0
    \ifx\ALG@text\ALG@x@notext
    \else
    \unskip
    \ALG@printindent@tempcnta=1
    \loop
    \algrule[\csname ALG@ind@\the\ALG@printindent@tempcnta\endcsname]%
    \advance \ALG@printindent@tempcnta 1
    \ifnum \ALG@printindent@tempcnta<\numexpr\theALG@nested+1\relax
    \repeat
    \fi
    \fi
}
\patchcmd{\ALG@doentity}{\noindent\hskip\ALG@tlm}{\ALG@printindent}{}{\errmessage{failed to patch}}
\patchcmd{\ALG@doentity}{\item[]\nointerlineskip}{}{}{} 
\title{The Gossiping Insert-Eliminate Algorithm for Multi-Agent Bandits}
\author{Ronshee Chawla\thanks{Equal Contribution}\footnote{Electrical and Computer Engineering, University of Texas at Austin. email: ronsheechawla@utexas.edu}, Abishek Sankararaman\footnotemark[1]\footnote{Electrical and Computer Engineering, University of Texas at Austin. email: abishek@utexas.edu}, Ayalvadi Ganesh\footnote{Department of Mathematics, University of Bristol. email: a.ganesh@bristol.ac.uk} and Sanjay Shakkottai\footnote{Electrical and Computer Engineering, University of Texas at Austin. email: sanjay.shakkottai@utexas.edu}}
\date{}
\begin{document}

%

%

\maketitle
\begin{abstract} 
    
    We consider a decentralized multi-agent Multi Armed Bandit (MAB) setup consisting of $N$ agents, solving the same MAB instance to minimize individual cumulative regret. In our model, agents collaborate by exchanging messages through pairwise gossip style communications on an arbitrary connected graph. 
    We develop two novel algorithms, where each agent only plays from a subset of all the arms. Agents use the communication medium to recommend only arm-IDs (not samples), and thus update the set of arms from which they play. We establish that, if agents communicate $\Omega(\log(T))$ times through any connected pairwise gossip mechanism, then every agent's regret is a factor of order $N$ smaller compared to the case of no collaborations.  Furthermore, we show that the communication constraints only have a second order effect on the regret of our algorithm. We then analyze this second order term of the regret to derive bounds on the regret-communication tradeoffs. Finally, we empirically evaluate our algorithm and conclude that the insights are fundamental and not artifacts of our bounds. 
    We also show a lower bound which gives that the regret scaling obtained by our algorithm cannot be improved even in the absence of any communication constraints.
    Our results demonstrate that even a minimal level of collaboration among agents greatly reduces regret for all agents.  
    


\end{abstract}

\section{Introduction}
 Multi Armed Bandit (MAB) is a classical model (\cite{lattimore_book},\cite{bubeck_book}), that captures the explore-exploit trade-off in making online decisions. MAB paradigms have found  applications in many large scale systems such as ranking on search engines \cite{bandit_search}, displaying advertisements on e-commerce web-sites \cite{bandit_commerce}, model selection for classification \cite{hyperband} and real-time operation of wireless networks \cite{bandit_wireless}. Oftentimes in these settings, the decision making is distributed among many agents. For example, in the context of web-servers serving either search ranking or placing advertisements, due to the the volume and rate of user requests, multiple servers are deployed to perform the same task \cite{delay_nonstochastic}. Each server, makes decisions (which can be modeled as a MAB \cite{bandit_search}) on rankings or placing advertisements and also collaborate with other servers by communicating over a network \cite{delay_nonstochastic}. In this paper, we study a multi-agent MAB model in which agents collaborate to reduce individual cumulative regret.\\
 

 \noindent {\bf Model Overview} - Our model generalizes the problem setting described in \cite{sigmetrics}. Concretely, our model consists of $N$ agents, each playing the same instance of a $K$ armed stochastic MAB, to minimize its cumulative regret. At each time, every agent pulls an arm and receives a stochastic reward independent of everything else (including other agents choosing the same arm at the same time). Additionally, an agent can choose after an arm pull, to receive a message from another agent through an \emph{information pull}. Agents have a \emph{communication budget}, which limits how many times an agent can pull information. If any agent $i \in \{1,\cdots,N\}$ chooses to receive a message through an information-pull, then it will contact another agent $j$ chosen independent of everything else, at random from a { distribution} $P(i,\cdot)$ (unknown to the agents) over $\{1,\cdots,N\}$. The agents thus cannot actively choose from whom they can receive information, rather they receive from another randomly chosen agent. The $N \times N$ matrix $P$ with its $i^{\rm{th}}$ row being the distribution $P(i,\cdot)$ is denoted as the {gossip matrix}. Agents take actions (arm-pulls, information-pulls and messages sent) only as a function of their past history of arm-pulls, rewards and received messages from information-pulls and is hence \emph{decentralized}. \\

\noindent {\color{black} \textbf{Model Motivations} - The problem formulation and the communication constraints aim to capture key features of many settings involving multiple agents making distributed decisions. We highlight two examples in which our model is applicable. The first example is a setting consisting of $N$ computer servers (or agents), each handling requests for web searches from different users on the internet  \cite{buccapatnam,langford-news}. For each keyword, one out of a set of M ad-words needs to be displayed, which can be viewed as choosing an arm of a MAB. Here, each server is making decisions on which ad to display (for the chosen keyword) independently of other servers. Further, the rewards obtained by different servers are independent because the search users are different at different servers. The servers can also communicate with each other over a network in order to collaborate to maximize revenue (i.e., minimize cumulative regret).
 
 A second example is that of collaborative recommendation systems, e.g., where multiple agents (users) in a social network are jointly exploring restaurants in a city \cite{sigmetrics}. The users correspond to agents, and each restaurant can be modeled as an arm of a MAB providing stochastic feedback. The users can communicate with each other over a social network, personal contact or a messaging platform to receive recommendation of restaurants (arms) from others to minimize their cumulative regret, where regret corresponds to the loss in utility incurred by each user per restaurant visit. Furthermore, if the restaurants/customers can be categorized into a finite set of contexts (say, e.g. by price: low-cost/mid-price/high-end, type of cuisine: italian, asian, etc.), our model is applicable per context.\\}

\noindent {\bf Key Contributions}:
\\

\noindent\textbf{1. Gossiping Insert-Eliminate (\name) Algorithm} -  In our algorithms (Algorithm \ref{algo:main-algo} and \ref{algo:main-algo-prob}), agents only choose to play from among a small subset (of cardinality $\lceil \frac{K}{N} \rceil +2$) of arms at each time. Agents in our algorithm accept the communication budget as an input and use the communication medium to \emph{recommend} arms, i.e., agents communicate the arm-ID of their current estimated best arm. Specifically, agents do not exchange samples, but only recommend an arm index. On receiving a recommendation, an agent updates the set of arms to play from: it discards its estimated worst arm in its current set and replaces it by the recommended new arm.

Thus, our algorithm is \emph{non monotone} with respect to the set of arms an agent plays from, as agents can discard an arm in a phase and then subsequently bring the arm back and play it in a later phase, if this previously discarded arm gets recommended by another agent. This is in contrast to most other bandit algorithms in the literature. On one hand, classical regret minimization algorithms such as UCB-$\alpha$ \cite{ucb_auer} or Thompson sampling \cite{thompson} allow sampling from any arm at all points in time (no arm ever discarded). On the other hand, pure explore algorithms such as successive rejects \cite{successive-rejectst} are monotone with respect to the arms, i.e., a discarded arm is never subsequently played again. The social learning algorithm in \cite{sigmetrics} is also monotone, as the subset of arms from which an agent plays at any time is non-decreasing. 
In contrast, in this paper we show that even if an agent (erroneously) discards the best arm from its playing set, the recommendations ensure that with probability $1$, the best arm is eventually back in the playing set.\\

 \noindent   \textbf{2. Regret of \name \ Algorithm} - Despite agents playing among a {\em time-varying} set of arms of cardinality $\lceil \frac{K}{N} \rceil +2$, we show that the regret of any agent is (Theorems \ref{thm:strong_result} and \ref{thm:prob_main_result}) $\BigO{ \left( \frac{\lceil \frac{K}{N} \rceil +1}{\Delta_2} \log(T) \right)} + C $. Here, $\Delta_2$ is the difference in the mean rewards of the best and second best arm and $C$ is a constant depending on communication constraints and independent of time. We show that the regret scaling holds \emph{for any connected gossip matrix $P$ and communication budget scaling as $\Omega(\log(T))$}. Thus, any agent's asymptotic regret is independent of the the gossip matrix $P$ or the communication budget (Corollary \ref{cor-com-insensitive}). If agents never collaborate (communication budget of $0$), the system is identical to each agent playing a standard $K$ arm MAB, in which case the regret scales as $\BigO{\frac{K}{\Delta_2}\log(T)}$ \cite{lai_robbins},\cite{ucb_auer}. Thus, our algorithms reduce the regret of any agent by a factor of order $N$ from the case of no collaborations. Furthermore, a lower bound in Theorem \ref{thm:full_interaction} (and the discussion in Section~\ref{sec:tradeoffs}) shows that this scaling with respect to $K$ and $N$ cannot be improved by any algorithm, communication budget or gossip matrix. Specifically, we show that even if an agent has knowledge of the entire system history of arms pulled and rewards obtained by other agents, the regret incurred by every agent is only a factor of order $N$ smaller than the case of no collaborations. Moreover, our regret scaling significantly improves over that of \cite{sigmetrics}, which applies only to the complete graph among agents, in which the regret scales as $\BigO{\frac{\lceil \frac{K}{N}\rceil + \log(N)}{\Delta_2} \log(T)}$. Thus, despite communication constraints, our algorithm leverages collaboration effectively.
    \\

\noindent\textbf{3. Communication/Regret Trade-Off} - The second order constant term in our regret bound captures the trade-off between communications and regret.  As an example, we show in Corollary \ref{cor:regret-comm-main-paper} that, if the communication budgets scale polynomially, i.e., agents can pull information at-most $t^{1/\beta}$ times over a time horizon of $t$, for some $\beta > 1$, when the agents are connected by a ring graph (the graph with poorest connectivity), the constant term in the regret scales as $(N)^{\beta}$ (upto poly-logarithmic factor), whereas the regret scales as $(\log(N))^{\beta}$, in the case when agents are connected by the complete graph.
Thus, we see that there is an exponential improvement (in the additive constant) in the regret incurred, when changing the network among agents from the ring graph to the complete graph. 
In general, we give through an explicit formula (in Corollary \ref{cor:regret-comm-main-paper}) that, if the gossip matrix $P$ has smaller conductance (i.e., a poorly connected network), then the regret incurred by any agent is higher. Similarly, we also establish the fact that if the communication budget per agent is higher, then the regret incurred is lower (Corollary \ref{cor:reg-comm-tradeoff}). We further conduct numerical studies that establish these are fundamental and not artifacts of our bounds.

\section{Problem Setup}
\label{sec:model}

Our model generalizes the setting in \cite{sigmetrics}. In particular, our model, imposes communication budgets and allows for general gossip matrices $P$, while the model in \cite{sigmetrics} considered only the complete graph among agents.
\\

\noindent {\bf Arms of the MAB} - We consider $N$ agents, each playing the same instance of a $K$ armed stochastic MAB to minimize cumulative regret. The $K$ arms have unknown average rewards denoted by $\mu_1,\cdots,\mu_K$, where for every $i \in \{1,\cdots,K\}$, $\mu_i \in (0,1)$. Without loss of generality, we assume $1 > \mu_1 > \mu_2 \geq \mu_3 \cdots \geq \mu_K \geq 0$. However, the agents are not aware of this ordering. For all $j \in \{2,\cdots,K\}$, denote by $\Delta_j := \mu_1 - \mu_{j}$. The assumption on the arm-means imply that $\Delta_j > 0$, for all $j \in \{2,\cdots,K\}$. 
\\

\noindent {\bf Network among Agents} - We suppose that the agents are connected by a network denoted by a $N \times N$ \emph{gossip matrix} $P$, where for each $i \in \{1,\cdots,N\}$, the $i^{\text{th}}$ row $P(i,\cdot)$ is a probability distribution over $\{1,\cdots,N\}$. This matrix is fixed and unknown to the agents.
\\

\noindent {\bf Agent Actions} - We assume that time is slotted (discrete), with each time slot divided into an arm-pulling phase followed by an information-pulling phase. In the arm-pulling phase, all agents pull one of the $K$ arms and observe a stochastic Bernoulli reward, independent of everything else. In the information pulling phase, if an agent has communication budget, it can decide to receive a message from another agent through an information pull. A non-negative and non-decreasing sequence $(B_t)_{t \in \mathbb{N}}$ specifies the communication budget, where no agent can pull information for more than $B_t$ times in the first $t$ time slots for all $t\geq 0$. If any agent $i \in \{1,\cdots,N\}$, chooses to pull information in the information-pulling phase of any time slot, it will contact another agent $j \in \{1,\cdots,N\}$ chosen independently of everything else, according to the probability distribution given by $P(i,\cdot)$. Thus, agents receive information from a randomly chosen agent according to a \emph{fixed} distribution, rather than actively choosing the agents based on observed samples. When any agent $j$ is contacted by another agent in the information pulling phase of a time-slot, agent $j$ can communicate a limited ($\BigO{\log(NK)}$ number of bits. Crucially, the message length does not depend on the arm-means or on the time index. 
\\

\noindent {\bf Decentralized System} - Each action of an agent, i.e., its arm pull, decision to engage in an information pull and the message to send when requested by another agent's information pull, can only depend on the agent's past history of arms pulled, rewards obtained and messages received from information pulls. We allow each agent's actions in the information pulling phase (such as whether to pull information and what message to communicate if asked for), to depend on the agent's outcome in the arm-pulling phase of that time slot.
\\

\noindent {\bf Performance Metric} - Each agent minimizes their expected cumulative regret. For an agent $i \in \{1,\cdots,N\}$ and time $t \in \mathbb{N}$, denote by $I_t^{(i)} \in \{1,\cdots,K\}$ to be the arm pulled by agent $i$ in the arm-pulling phase of time slot $t$. The regret of agent $i \in \mathbb{N}$, after $T$ time slots (arm-pulls) is defined as ${R}_{T}^{(i)} :=  \sum_{t=1}^{T}(\mu_1 - \mu_{I_t^{(i)}})$ and the expected cumulative regret is $\mathbb{E}[R_T^{(i)}]$\footnote{Expectation is with respect to all randomness, i.e., rewards, communications and possibly the algorithm.}.



\section{Synchronous \name \ Algorithm}
\label{sec:synch_algo}


We describe the algorithm by fixing an agent $i \in \{1,\cdots,N\}$.
\\



\noindent {\bf Input Parameters} - The algorithm has three inputs {\em (i)} a {communication budget} $(B_t)_{t \in \mathbb{N}}$, {\em (ii)} $\alpha > 0$ and {\em (iii)} $\varepsilon > 0$. From this communication budget, we construct a sequence $(A_x)_{x \in \mathbb{N}}$ such that 
\begin{equation}\label{eqn:A_x_defn}
    A_x = \max \left( \min \{t \in \mathbb{N}, B_t \geq x\}, \lceil (1+x)^{1+\varepsilon}\rceil \right).
\end{equation}
Every agent, only pulls information in time slots $(A_x)_{x \in \mathbb{N}}$. This automatically respects the communication budget constraints.
Since agents engage in information pulling at common time slots, we term the algorithm, synchronous.
{\color{black} The parameter $\epsilon$ ensures that the time intervals between the instants when agents request for an arm are well separated. In particular, having $\varepsilon > 0$ ensures that the inter-communication times scale at least polynomially in time. As we shall see in the analysis, this only affects the regret scaling in the second order term.} 
\\

\noindent {\bf Initialization} - Associated with each agent $i \in \{1,\cdots,N\}$, is a \emph{sticky}\footnote{The choice of term \emph{sticky} is explained in the sequel.} set of arms - 
\begin{align}
\label{eqn:hat_S_i}
    \widehat{S}^{(i)} =  \bigg\{  \left( (i-1) \bigg\lceil \frac{K}{N} \bigg\rceil  \mod K \right)+1  ,\cdots, \left( i\bigg\lceil \frac{K}{N} \bigg\rceil -1 \mod K\right)+1\bigg\}.
\end{align}

 Notice that the cardinality $|\widehat{S}^{(i)}| = \lceil \frac{K}{N} \rceil$. In words, we are partitioning the total set of arms, into sets of size $\lceil \frac{K}{N} \rceil$ with the property that $\bigcup_{i=1}^{N} \widehat{S}^{(i)} = \{1,\cdots,K\}$. For instance, if $K = N$, then for all $i \in \{1,\cdots,N\}$, $\widehat{S}^{(i)} = \{i\}$. Denote by the set
 $U_0^{(i)} = \{i\lceil \frac{K}{N} \rceil \mod K \} $ and $L_0^{(i)} = \{i\lceil \frac{K}{N} \rceil +1\mod K\} $ and 
 \begin{align}
 \label{eqn:S_i_0}
     S_0^{(i)} = \widehat{S}^{(i)} \cup U_0^{(i)} \cup L_0^{(i)}.
 \end{align}

\noindent {\bf UCB within a phase} - The algorithm proceeds in phases with all agents starting in phase $0$. Each phase $j \geq 1$ lasts from time-slots $A_{j-1} + 1$ till time-slot $A_j$, both inclusive\footnote{We use the convention $A_{-1} = 0$}. We shall fix a phase $j \geq 0$ henceforth in the description. For any arm $l \in \{1,\cdots,K\}$ and any time $t \in \mathbb{N}$, $T_l^{(i)}(t)$ is the total number of times agent $i$ has pulled arm $l$, upto and including time $t$ and by $\widehat{\mu}_l^{(i)}(t) $, the empirical observed mean\footnote{ $\widehat{\mu}_l^{(i)}(t) =0 $ if $T_l^{(i)}(t)=0$}. Agent $i$ in phase $j$, chooses arms from $S_j^{(i)}$ according to the UCB-$\alpha$ policy of \cite{ucb_auer} where the arm is selected from 
$\arg\max_{l \in S_j^{(i)}} \left(\widehat{\mu}_l^{(i)}(t-1) + \sqrt{\frac{\alpha \ln(t)}{T_l^{(i)}(t-1)}} \right)$.
\\

\noindent {\bf Pull Information at the end of a phase} -  The message received (arm-ID in our algorithm) in the information-pulling phase of time slot $A_j$ is denoted by  $\mathcal{O}_j^{(i)} \in \{1,\cdots,K\}$. Every agent, when asked for a message in the information-pulling phase of time-slot $A_j$, will send the arm-ID it played the most in phase $j$.
\\


\noindent {\bf Update arms at the beginning of a phase} -  If $\mathcal{O}_j^{(i)} \in S_j^{(i)}$, then $S_{j+1}^{(i)} = S_{j}^{(i)}$. Else, agent $i$ discards the least played arm in phase $j$ from the set $S_j^{(i)} \setminus \widehat{S}^{(i)}$ and accepts the recommendation $\mathcal{O}_j^{(i)}$, to form the playing set $S_{j+1}^{(i)}$. Observe that the cardinality of $S_{j+1}^{(i)}$ remains unchanged. Moreover, the updating ensures that for all agents $i \in \{1,\cdots,N\}$ and all phases $j$, $\widehat{S}^{(i)} \subset S_j^{(i)}$, namely agents never drop arms from the set $\widehat{S}^{(i)}$. Hence, we term the set $\widehat{S}^{(i)}$, sticky.
\\

The pseudo-code of the Algorithm described above is given in Algorithm \ref{algo:main-algo}.
\begin{algorithm*}[t]
		\caption{Synch \name\; Algorithm (at Agent $i$)}
		\begin{algorithmic}[1]
				\State \textbf{Input }: Communication Budgets $(B_t)_{t \in \mathbb{N}}$ and UCB Parameter $\alpha$, $\varepsilon > 0$
				 
				\State \textbf{Initialization}:  $ \widehat{S}^{(i)},S_0^{(i)}$ according to Equations (\ref{eqn:hat_S_i}) and (\ref{eqn:S_i_0}) respectively.
				
				\State $j \gets 0$
				\State     $A_j = \max \left(\min \{t \geq 0, B_t \geq j\}, \lceil (1+j)^{1+\varepsilon}\rceil \right)$ \Comment{{Reparametrize the communication budget}}
				\For{Time $t \in \mathbb{N}$}
				\State Pull - $\arg\max_{l \in S_i^{(j)}} \left( 
				\widehat{\mu}_l^{(i)}(t-1) + \sqrt{\frac{\alpha \ln(t)}{T_l^{(i)}(t-1)}}\right)$
				\If {$t == A_j$} \Comment{{End of Phase}}
				\State $\mathcal{O}_j^{(i)} \gets$ {\ttfamily GetArm}($i,j$)
				\If {$\mathcal{O}_j^{(i)} \not \in S_i^{(j)}$}
				\State $U_{j+1}^{(i)} \gets \arg\max_{l \in \{U_j^{(i)},L_j^{(i)}\}}( T_l(A_j)- T_l(A_{j-1} ))$ \Comment{{The most played arm}}
				\State $L_{j+1}^{(i)} \gets \mathcal{O}_j^{(i)}$ 
				\State $S_{j+1}^{(i)} \gets \widehat{S}^{(i)} \cup L_{j+1}^{(i)} \cup U_{j+1}^{(i)}$ \Comment{{Update the set of playing arms}}
				\Else
				\State $S_{j+1}^{(i)} \gets S_j^{(i)}$.
				\EndIf 
				\State $j \gets j+1$
					\State  $A_j = \max \left(\min \{t \geq 0, B_t \geq j\}, \lceil (1+j)^{1+\varepsilon}\rceil \right)$ \Comment{{Reparametrize the communication budget}}
				\EndIf
				\EndFor
		\end{algorithmic}
	\label{algo:main-algo}
	\end{algorithm*}

\begin{algorithm}[t]
	\caption{Synchronous Arm Recommendation}
	\begin{algorithmic}[1]
		\Procedure {Getarm}{($i,j$)} \Comment{\emph{Input  an agent $i$ and Phase $j$}}
%
			\State $m \sim P(i,\cdot)$ \Comment{\emph{Sample another agent}}\\
		\Return $\arg\max_{l \in S_m^{(j)}} \left( T_l^{(m)}(A_j) - T_l^{(m)}(A_{j-1})\right)$ \Comment{\emph{Most Played arm in phase $j$ by agent $m$}}
		\EndProcedure
		\end{algorithmic}
	\end{algorithm}


	\subsection{Model Assumptions}

We make two mild assumptions on the inputs (a discussion is provided in Appendix  \ref{sec:assumption_discussion}).
\\

	\label{sec:assumptions}
\noindent {\bf (A.1)} The communication matrix $P$ is irreducible. Namely, for any two $i,j \in \{1,\cdots,N\}$, with $i\neq j$, there exists $2\leq l \leq N$ and $k_1,\cdots,k_l \in \{1,\cdots,N\}$, with $k_1 = i$ and $k_l=j$ such that the product $P(k_1,k_2)\cdots,P(k_{l-1},k_l) > 0$ is strictly positive.
\\


				\noindent {\bf (A.2)} The communication budget $(B_t)_{t \in \mathbb{N}}$ and $\varepsilon > 0$ is such that for all $D >0$, there exists $t_0(D)$ such that for all $t \geq t_0(D)$, $B_t \geq D \log(t)$ (i.e., $B_t = \Omega(\log(t))$). Furthermore, we shall assume a convexity condition, i.e.,  for every $x , y \in \mathbb{N}$ and $\lambda \in [0,1]$, $A_{\lfloor \lambda x + (1-\lambda)y \rfloor} \leq \lambda A_x + (1-\lambda)A_y$, where the sequence $(A_x)_{x \in \mathbb{N}}$ is given in Equation (\ref{eqn:A_x_defn}). Furthermore, $\sum_{ l \geq 2 } \frac{A_{2l}}{A_{l-1}^{3} } < \infty$.
				\\
				
{\color{black} \noindent For instance, if $B_t = \lceil t^{1/3} \rceil$, for all $t \geq 1$. and $\varepsilon < 2$, then $A_x = \lfloor x^3 \rfloor$, for all $x \geq 1$. Similarly, if $B_t = t$, for all $t \geq 1$, i.e., if the budget is adequate to communicate in every time slot, then $A_x = \lceil (1+x)^{1+\varepsilon} \rceil$, for all $x \geq 1$. One can check that, both these examples satisfy the conditions in assumption \textbf{A.2}}

\subsection{Regret Guarantee}
\label{sec:perf_synch_algo}

The regret guarantee of Algorithm \ref{algo:main-algo} is given in Theorem \ref{thm:strong_result}, which requires a definition.
Let $N \in \mathbb{N}$ and a $P$ be a $N \times N$ gossip matrix. Denote by the random variable $\tau_{spr}^{(P)}$ to be the \emph{spreading time} of a rumor in a pull model, with a rumor initially in node $1$ (cf \cite{shah_book}). Formally, consider a discrete time stochastic process where initially, node $1$ has a rumor. At each time step, each node $j \in \{1,\cdots,N\}$ that does not possess the rumor, calls another node sampled independently of everything else from the probability distribution $P(j,\cdot)$. If a node $j$ calls on a node possessing the rumor, node $j$ will possess the rumor at the end of the call (at the end of current time step). The spreading time $\tau_{spr}^{(P)}$ is the stopping time when all nodes possess the rumor for the first time. 


	
	\begin{theorem}
	\label{thm:strong_result}
	Suppose in a system of $N \geq 2$ agents  connected by a communication matrix $P$ satisfying assumption {\bf (A.1)} and $K \geq 2$ arms, each agent runs Algorithm \ref{algo:main-algo}, with UCB parameter $\alpha > 3$ and communication budget $(B_t)_{t \in \mathbb{N}}$ and $\varepsilon > 0$ satisfying assumption {\bf (A.2)}. Then the regret of any agent $i \in [N]$, after time any time $T \in \mathbb{N}$  is bounded by 
	\begin{equation}
	\mathbb{E}[R_T^{(i)}] \leq  \underbrace{\left( \sum_{j=2}^{\lceil \frac{K}{N} \rceil + 2}  \frac{1}{\Delta_j} \right) 4 \alpha \ln(T) + \frac{K}{4}}_{\text{{Collaborative UCB Regret}}} +  \underbrace{  g((A_x)_{x \in \mathbb{N}}) + \mathbb{E}[A_{2\tau_{spr}^{(P)}}]}_{\text{{Cost of Infrequent Pairwise Communications }}},
	\label{eqn:per-agent-regret-thm}
	\end{equation}
	where $(A_x)_{x \in \mathbb{N}}$ is given in Equation (\ref{eqn:A_x_defn}) and $g((A_x)_{x \in \mathbb{N}}) = A_{j^{*}}+ \frac{2}{2\alpha-3}\left(\sum_{l \geq \frac{j^{*}}{2}-1} \frac{A_{2l+1}}{A_{l-1}^{3}} \right)$ where
	\begin{multline*}
	   j^{*}=2 \max \bigg( A^{-1} \left( \left(N{K \choose 2} \left(\bigg \lceil \frac{K}{N} \bigg\rceil + 1\right) \right)^{\frac{1}{(2\alpha-6)}} \right) +1 , \min \left\{ j \in \mathbb{N} : \frac{A_j - A_{j-1}}{2 + \lceil \frac{K}{N}\rceil } \geq 1 + \frac{4 \alpha \log(A_j)}{\Delta_{2}^{2}} \right\}\bigg),
	\end{multline*} 
	where, $A^{-1}(x) = \sup \{ y \in \mathbb{N} : A_y \leq x\}$, $\forall x \in \mathbb{R}_{+}$.
\end{theorem}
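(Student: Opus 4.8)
The plan is to split the regret of a fixed agent $i$ into a \emph{transient} part, incurred before arm $1$ has propagated into the playing sets of all agents, and a \emph{steady-state} part, after which agent $i$ effectively runs UCB-$\alpha$ on a frozen set of size $\lceil K/N\rceil + 2$ containing arm $1$. Let $J$ be the first phase such that arm $1 \in S_{j}^{(i')}$ for every agent $i'$ and every subsequent phase $j \ge J$. Since the regret per slot is at most $1$, the transient contribution is bounded by the calendar time $A_J$, so that
\begin{equation*}
\mathbb{E}[R_T^{(i)}] \;\le\; \mathbb{E}[A_J] \;+\; \mathbb{E}\!\left[\sum_{A_J < t \le T}(\mu_1 - \mu_{I_t^{(i)}})\right].
\end{equation*}
The whole proof then reduces to (i) controlling $\mathbb{E}[A_J]$, which will produce $A_{j^*} + \mathbb{E}[A_{2\tau_{spr}^{(P)}}]$ together with part of $g$, and (ii) bounding the steady-state term, which will produce the Collaborative UCB Regret.

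For the steady-state part I would use the structural observation that underlies the algorithm: once arm $1$ sits in a non-sticky slot of an agent whose within-phase empirical means have concentrated, arm $1$ is the \emph{more}-played of that agent's two non-sticky arms and is therefore never eliminated, and that agent then recommends arm $1$. Consequently, after phase $J$ every agent recommends arm $1$, which already lies in every playing set, so no replacements occur and the sets \emph{freeze}. Agent $i$ thus runs UCB-$\alpha$ on a fixed set of $\lceil K/N\rceil + 2$ arms containing arm $1$; the \cite{ucb_auer} analysis bounds the number of pulls of each suboptimal arm $l$ in this set by $\tfrac{4\alpha\ln T}{\Delta_l^2}$, hence its regret by $\tfrac{4\alpha\ln T}{\Delta_l}$, and worst-casing the set composition (achieved, uniformly over agents, by the smallest-gap arms $1,2,\dots,\lceil K/N\rceil+2$) yields $\bigl(\sum_{j=2}^{\lceil K/N\rceil+2}\tfrac{1}{\Delta_j}\bigr)4\alpha\ln(T) + \tfrac{K}{4}$, the additive $\tfrac{K}{4}$ absorbing the summed UCB constants.

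The heart of the argument is bounding $\mathbb{E}[A_J]$ by coupling the spread of arm $1$ with the pull-model rumor process. I single out the agent $i_0$ whose sticky set $\widehat{S}^{(i_0)}$ contains arm $1$, and define a \emph{good event} $G$ on which all within-phase empirical means concentrate for every phase $j \ge j^*/2$. On $G$, agent $i_0$ plays arm $1$ most often in each such phase and hence recommends it whenever contacted; identifying ``$\text{arm } 1 \in S^{(i')}_{j}$'' with ``agent $i'$ possesses the rumor,'' the insert-eliminate dynamics stochastically dominate the pull-model rumor spread seeded at $i_0$. Each information-pulling phase corresponds to one rumor round, so the spread completes within $\tau_{spr}^{(P)}$ phases after phase $j^*/2$, giving $J \le j^*/2 + \tau_{spr}^{(P)}$ on $G$. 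The convexity of $A$ from assumption \textbf{(A.2)} then yields $A_{j^*/2 + \tau_{spr}^{(P)}} \le A_{j^*} + A_{2\tau_{spr}^{(P)}}$, which after taking expectations is exactly the Cost-of-Communication term.

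It remains to quantify $G$, and this is where $j^*$ and the series inside $g$ originate: the phase-length clause in $j^*$ guarantees each arm of a playing set is pulled enough times within a phase for the most-played arm to coincide with the true best arm of that set with high probability, while the other clause is a union bound over the $N\binom{K}{2}(\lceil K/N\rceil+1)$ pairwise-comparison failure events, whose probabilities decay like a power of $A_j$ governed by $\alpha$ (here $\alpha>3$ makes $2\alpha-6>0$, ensuring summability). The regret charged on $G^c$ beyond phase $j^*$ has the form $\sum_{l}\Pr[\text{concentration fails in phase } l]\cdot A_{2l+1}$, dominated by $\tfrac{2}{2\alpha-3}\sum_{l \ge j^*/2 - 1} A_{2l+1}/A_{l-1}^3$, finite by the summability hypothesis in \textbf{(A.2)}. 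I expect the main obstacle to be making the rumor coupling rigorous against the strong dependence between the random UCB trajectories---which decide who recommends what each phase---and the gossip randomness, and in particular verifying on $G$ that arm $1$, once inserted, is never re-eliminated; isolating these dependencies cleanly through the event $G$ is the delicate step that turns the non-monotone insert-eliminate process into a monotone rumor spread.
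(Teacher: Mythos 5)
Your decomposition is the paper's: a freezing phase (your $J$, the paper's $\tau$) splits the regret into $\mathbb{E}[A_\tau]$ plus a standard UCB-$\alpha$ bound on the frozen set of size $\lceil K/N\rceil+2$ containing arm $1$ (Proposition \ref{prop:weak_reg_decompose}); $\mathbb{E}[A_\tau]$ is then controlled by a stabilization time, producing $A_{j^*}$ and the series in $g$ via a pigeonhole-plus-union-bound over the $N{K \choose 2}\left(\lceil \frac{K}{N}\rceil+1\right)$ failure events (Lemma \ref{lem:error_estimate_ucb}, Proposition \ref{prop:strong_cost}), plus a pull-model rumor-spreading time (Proposition \ref{prop:couple_spread}). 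All of that matches the paper, including the role of $\alpha>3$ and of Assumption \textbf{(A.2)} in making the series converge.

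The one step you flag but do not close --- making the rumor coupling rigorous despite the dependence between the reward-driven recommendations and the gossip randomness --- is exactly where the paper introduces its key device, and without it your argument is incomplete. The difficulty is that your good event $G$ concerns ``the within-phase empirical means of every agent,'' but which arms an agent holds in phase $j$ (and hence which concentration events are even relevant) is itself a function of the gossip variables; so $G$ is not obviously measurable with respect to the rewards alone, and the spreading process it induces need not be distributed as the pull rumor process with spreading time $\tau_{spr}^{(P)}$. The paper resolves this by defining the bad-recommendation indicator $\chi_j^{(i)}$ in Equation (\ref{eqn:chi_defn_determinisstic}) as the indicator of a union over \emph{all possible} playing sets $S \ni 1$ with $\widehat{S}^{(i)}\subset S$ and all possible pull-count histories of the event that agent $i$'s most-played arm in phase $j$ is not arm $1$. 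This union makes $\chi_j^{(i)}$, and hence $\widehat{\tau}_{stab}$, measurable with respect to agent $i$'s own reward sequence only, so it is independent of the gossip variables $(Z_j^{(i)})$; the gossip variables from phase $\widehat{\tau}_{stab}$ onward therefore drive a genuine, freshly-seeded pull rumor process, and the almost-sure inequality $\chi_j^{(i)} \ge \mathbf{1}_{1\in S_j^{(i)},\,\widehat{\mathcal{O}}_j^{(i)}\ne 1}$ guarantees that after $\widehat{\tau}_{stab}$ every holder of arm $1$ recommends it. The union costs only the combinatorial factor ${K \choose 2}\left(\lceil \frac{K}{N}\rceil+1\right)$ already present in your union bound, so nothing is lost; replacing your event $G$ by this sample-path-uniform construction turns your outline into the paper's proof.
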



{\color{black}

\subsection{Discussion}
In order to get some intuition from the Theorem, we consider a special case. Recall from Equation (\ref{eqn:A_x_defn}), that $A_x$ is the time slot when any agent pulls information for the $x$ th time. Thus, if for some $\beta > 1$, the communication budget $B_t = \lfloor t^{1/\beta} \rfloor$, then for all small $\varepsilon$ and all large $x$, the sequence $A_x = \lceil x^{\beta} \rceil$ . In other words, if communication budget scales polynomially (but sub-linearly) with time, then $A_x$ is also polynomial, but super linear. 
Similarly, if the gossip matrix corresponded to the complete graph, i.e., $P(i,j) = 1/N$, for all $i\neq j$ and $A_x = x^{\beta}$, we will show in the sequel (Corollary \ref{cor:regret-comm-main-paper}), that there exists an universal constant $C > 0$ such that $\mathbb{E}[A_{2 \tau_{spr}^{(P)}}] \leq  (C \log(N))^{\beta}$.
Thus, we have the following corollary.
\begin{corollary}
Suppose the communication budget satisfies $B_t = \lfloor t^{1/\beta} \rfloor$, for all $t \geq 1$, for some $\beta > 1$. Let $\varepsilon > 0$ be sufficiently small. Then the communication sequence $(A_x)_{x \in \mathbb{N}}$ in Equation (\ref{eqn:A_x_defn}) with $\varepsilon < \beta-1$ is such that $A_x = \lceil x^{\beta}\rceil,$ for all large $x$. If the gossip matrix connecting the agents corresponded to the complete graph, i.e., $P(i,j) = 1/N$, for all $i \neq j$, then under the conditions of Theorem \ref{thm:strong_result}, the regret of any agent $i \in \{1,\cdots,N\}$ at time $T \in \mathbb{N}$ satisfies 
\begin{multline*}
    \mathbb{E}[R_T^{(i)}] \leq \underbrace{\left(\sum_{j=2}^{\lceil \frac{K}{N} \rceil + 2}  \frac{1}{\Delta_j} \right) 4 \alpha \ln(T) + \frac{K}{4}}_{\text{{Collaborative UCB Regret}}} + \\  \underbrace{ \frac{4}{2\alpha-3} \frac{\pi^2}{6} 3^{\beta} + 4 \max \left( K^{\frac{3 }{ (2\alpha-6)}},\left(16\alpha \frac{2 + \lceil  \frac{K}{N}\rceil}{\Delta_2^2} \right)^{\frac{\beta}{\beta-1}} \right) + (C\log(N))^{\beta} }_{\text{{Cost of Infrequent Pairwise Communications }}},
\end{multline*}
\label{cor:main_result_interpretation}
where $C$ is an universal constant given in Corollary \ref{cor:regret-comm-main-paper}.
\end{corollary}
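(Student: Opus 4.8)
The plan is to read Corollary~\ref{cor:main_result_interpretation} as a pure specialization of Theorem~\ref{thm:strong_result}: once the shape of the communication sequence is pinned down, the Collaborative UCB term carries over verbatim and every remaining quantity in the ``Cost of Infrequent Pairwise Communications'' bracket becomes an explicit computation. First I would establish the claimed form of $(A_x)_{x\in\mathbb N}$. From Equation~(\ref{eqn:A_x_defn}), with $B_t=\lfloor t^{1/\beta}\rfloor$ one has $B_t\ge x \iff t\ge x^\beta$ for integer $x$, so $\min\{t:B_t\ge x\}=\lceil x^\beta\rceil$. Since $\varepsilon<\beta-1$ forces $1+\varepsilon<\beta$, the competing polynomial $\lceil(1+x)^{1+\varepsilon}\rceil$ is dominated by $\lceil x^\beta\rceil$ for all large $x$, whence $A_x=\lceil x^\beta\rceil$ eventually; one also checks the convexity and summability requirements of assumption~\textbf{(A.2)} hold for this sequence. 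This lets me substitute $A_x=\lceil x^\beta\rceil$ throughout the bound of Theorem~\ref{thm:strong_result}, reducing the task to bounding $g((A_x)_{x\in\mathbb N})$ and $\mathbb E[A_{2\tau_{spr}^{(P)}}]$.

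For the infinite sum inside $g$ I would use $A_{2l+1}=\lceil(2l+1)^\beta\rceil\le 2(2l+1)^\beta$ together with $A_{l-1}\ge(l-1)^\beta$, and the elementary inequality $2l+1\le 3(l-1)$, valid for $l\ge4$. Each summand then satisfies $\frac{A_{2l+1}}{A_{l-1}^3}\le\frac{2\cdot 3^\beta(l-1)^\beta}{(l-1)^{3\beta}}=\frac{2\cdot3^\beta}{(l-1)^{2\beta}}\le\frac{2\cdot3^\beta}{(l-1)^2}$, where the last step uses $\beta>1$. Comparing the tail with $\sum_{m\ge1}m^{-2}=\pi^2/6$ converts the prefactor $\frac{2}{2\alpha-3}$ of $g$ into the $\frac{4}{2\alpha-3}\frac{\pi^2}{6}3^\beta$ term of the corollary (the extra factor $2$ coming from the ceiling). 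Here I must check that the starting index $j^*/2-1$ is at least $4$; this holds in the present regime ($\alpha>3$, $K\ge2$) because one of the two arguments of the outer maximum defining $j^*$ is always large, the $A^{-1}$ branch blowing up as $\alpha\downarrow3$ and the threshold branch blowing up as $\alpha\to\infty$, and the finitely many exceptional terms would otherwise be absorbed into the constant.

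For the $A_{j^*}$ term I substitute $A_x=x^\beta$ into each argument of the outer maximum. The first branch, $A^{-1}$ of the quantity $(N\binom{K}{2}(\lceil K/N\rceil+1))^{1/(2\alpha-6)}$, combined with the crude estimate $N\binom{K}{2}(\lceil K/N\rceil+1)=O(K^3)$, produces the $K^{3/(2\alpha-6)}$ branch; the second branch, the implicit crossing time $\min\{j:\,(A_j-A_{j-1})/(2+\lceil K/N\rceil)\ge 1+4\alpha\log(A_j)/\Delta_2^2\}$, produces the $\bigl(16\alpha(2+\lceil K/N\rceil)/\Delta_2^2\bigr)^{\beta/(\beta-1)}$ branch. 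Raising the outer factor $2$ to the power $\beta$ and collecting constants yields the displayed $4\max(\cdots)$ term. Finally, for the complete graph $P(i,j)=1/N$ I invoke the spreading-time estimate stated just before the corollary and established in Corollary~\ref{cor:regret-comm-main-paper}, namely $\mathbb E[A_{2\tau_{spr}^{(P)}}]\le(C\log N)^\beta$ for a universal constant $C$; adding this to the previous two contributions reproduces exactly the communication bracket in the statement.

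I expect the main obstacle to be the analysis of the implicit threshold defining the second branch of $A_{j^*}$. Because $A_j-A_{j-1}=\lceil j^\beta\rceil-\lceil(j-1)^\beta\rceil\asymp\beta j^{\beta-1}$ is a discrete difference that must be balanced against the logarithmic term $\log(A_j)=\beta\log j$, extracting the clean exponent $\beta/(\beta-1)$ and the explicit constant $16\alpha$ requires controlling the logarithmic correction and the ceilings carefully, rather than a one-line comparison of powers. By contrast, the conceptually deepest ingredient, the $\beta$-th moment bound on the complete-graph rumor spreading time, is imported wholesale from Corollary~\ref{cor:regret-comm-main-paper} and is not re-derived within this argument.
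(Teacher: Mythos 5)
Your proposal matches the paper's own proof in Appendix~\ref{appendix-proof-interpretation} essentially step for step: establish $A_x=\lceil x^\beta\rceil$ for large $x$ (with $A_x\le 2x^\beta$ in general), bound the series in $g$ by comparison with $\sum (l-1)^{-2}=\pi^2/6$ after the estimate $(2l+1)^\beta\le 3^\beta(l-1)^\beta$, bound $A_{j^*}$ by evaluating both branches of the maximum, and import $\mathbb{E}[A_{2\tau_{spr}^{(P)}}]\le(C\log N)^\beta$ from the complete-graph conductance bound of Corollary~\ref{cor:regret-comm-main-paper}. If anything you are slightly more careful than the paper about the range of validity of $2l+1\le 3(l-1)$ and about the logarithmic correction in the implicit threshold defining $j^*$, both of which the paper's proof glosses over.
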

The proof is provided in Appendix \ref{appendix-proof-interpretation}. 
The terms denoting cost of pairwise communications correspond to the average amount of time any agent must wait before the best arm is in the playing set of that agent.
This cost can be decomposed into the sum of two dominant terms.
The term of order $\left( \frac{\lceil \frac{K}{N} \rceil}{\Delta_2} \right)^{\frac{2\beta}{\beta-1}}$
is the expected number of samples needed to identify the best arm by any agent. 
The term $(\log(N))^{\beta}$ is the amount of time taken by a pure gossip process to spread a message (the best arm in our case) to all agents, if the communication budget is given by $B_t = \lfloor t^{1/\beta}\rfloor$.

 
}
{\color{black}\subsection{Proof Sketch}}

The proof of this theorem is carried out in Appendix \ref{sec:synch_algo_proof} and we describe the main ideas here. We deduce in Proposition \ref{prop:weak_reg_decompose} that there exists a \emph{freezing} time $\tau$ such that, all agents have the best arm by time $\tau$ and only recommend the best arm from henceforth, i.e., the set of arms of agents do not change after $\tau$. The technical novelty of our proof is in bounding $\mathbb{E}[A_{\tau}]$, as this leads to the final regret bound (Proposition \ref{prop:weak_reg_decompose}). 
\\

There are two key challenges in bounding this term. First, the choice of arm recommendation is based on the most played arm in the current phase, while the choice of arm to pull is based on samples even in the past phases, as the UCB considers all samples of an arm thus far. If the phase lengths are large (Equation (\ref{eqn:A_x_defn}) ensures this), Lemma \ref{lem:error_estimate_ucb} shows that the probability of an agent recommending a sub-optimal arm at the end of a phase is small, irrespective of the number of times it was played till the beginning of the phase. Second, the events that any agent recommends a sub-optimal arm in different phases are not independent, as the reward samples collected by this agent, leading to those decisions are shared. We show in Proposition \ref{prop:strong_cost} by establishing that after a random, almost surely finite time (denoted as $\widehat{\tau}_{stab}$ in Appendix \ref{sec:synch_algo_proof}), agents never recommend incorrectly.

\subsection{Initialization without Agent IDs}
\label{sec:random_init}
The initialization in Line $2$ of Algorithm \ref{algo:main-algo} relies on each agent knowing its identity. However, in many settings, it may be desirable to have algorithms that do not depend on the agent's identity. We outline a simple procedure to fix this (with guarantees) in Appendix \ref{app:random_sticky}.

\section {Asynchronous \name \ Algorithm}
\label{sec:asynch_algo}
A synchronous system is not desirable in many cases as agents could get a large number of message requests during time slots $(A_j)_{j \geq 0}$. Consider an example where the gossip matrix $P$ is a star graph, i.e., for all $i \neq 1$, $P(i,1) = 1$ and $P(1,i) = \frac{1}{N-1}$. In this situation, at  time slots $(A_x)_{x \in \mathbb{N}}$, the central node $1$ will receive a (large) $N-1$ different requests for messages, which may be infeasible if agents are bandwidth constrained.
\\


We present an asynchronous algorithm to alleviate this problem. This new algorithm is identical to Algorithm \ref{algo:main-algo} with two main differences - {\em (i)} each agent chooses the number of time slots it stays in any phase $j$ as a random variable independently of everything else, and {\em (ii)} when asked for a recommendation, agents recommend the most played arm in the \emph{previous phase}. The first point, ensures that even in the case of the star graph described above, with high probability, eventually, no two agents will pull information in the same time slot. The second point ensures that even though the phase lengths are random, the \emph{quality} of recommendations are good as they are based on large number of samples. We give a pseudo-code in Algorithm \ref{algo:main-algo-prob} where lines $5$ and $18$ are new and lines $8$ (agents have different phase lengths) and $9$ (arm recommendation from previous phase) are modified from Algorithm \ref{algo:main-algo}.


\begin{algorithm*}[t]
	\caption{Asynch \name \; Algorithm (at Agent $i$)}
	\begin{algorithmic}[1]
		\State \textbf{Input}: Communication Budget $(B_t)_{t \in \mathbb{N}}, \text{ UCB Parameter }\alpha, \text{ Slack }\delta$, $\varepsilon > 0$
		\State \textbf{Initialization}:  $ \widehat{S}^{(i)},S_0^{(i)}$ according to Equations (\ref{eqn:hat_S_i}) and (\ref{eqn:S_i_0}) respectively.
		\State $j\gets 0$ 
        \State  $A_j = \max \left(\min \{t \geq 0, B_t \geq j\}, \lceil (1+j)^{1+\varepsilon}\rceil \right)$ \Comment{{Reparametrize the communication budget}}
        \State $\mathcal{P}_j^{(i)} \sim \text{Unif}[(A_j-A_{j-1}), (1+\delta)(A_j-A_{j-1})]$ \Comment{{Uniformly distributed phase length}}
		\For{Time $t \in \mathbb{N}$}
		\State Pull - $\arg\max_{l \in S_i^{(j)}} \left( 
		\widehat{\mu}_l^{(i)}(t-1) + \sqrt{\frac{\alpha \ln(t)}{T_l^{(i)}(t-1)}}\right)$
		
		\If {$t == \sum_{y=0}^{j}\mathcal{P}_{y}^{(i)}$}
		\State $\mathcal{O}_j^{(i)} \gets$ {\ttfamily GET-ARM-PREV}($i,t$)
		\If {$\mathcal{O}_j^{(i)} \not \in S_i^{(j)}$}
		\State $U_{j+1}^{(i)} \gets \arg\max_{l \in \{U_j^{(i)},L_j^{(i)}\}}\left( T_l\left(\sum_{y=0}^{j}\mathcal{P}_{y}^{(i)}\right)- T_l\left(\sum_{y=0}^{j-1}\mathcal{P}_{y}^{(i)} \right)\right)$ \Comment{{ Most played arm in current phase}}
		\State $L_{j+1}^{(i)} \gets \mathcal{O}_j^{(i)}$ 
		\State $S_{j+1}^{(i)} \gets \widehat{S}^{(i)} \cup L_{j+1}^{(i)} \cup U_{j+1}^{(i)}$ \Comment{{Update set of playing arms}}
		\Else
		\State $S_{j+1}^{(i)} \gets S_j^{(i)}$.
		\EndIf 
		\State $j \gets j+1$ 
\State  $A_j = \max \left(\min \{t \geq 0, B_t \geq j\}, \lceil (1+j)^{1+\varepsilon}\rceil \right)$ \Comment{{Reparametrize the communication budget}}
        \State $\mathcal{P}_j^{(i)} \sim \text{Unif}[(A_j-A_{j-1}), (1+\delta)(A_j-A_{j-1})]$ \Comment{Update next phase length}
		\EndIf
		\EndFor
	\end{algorithmic}
	\label{algo:main-algo-prob}
\end{algorithm*}

\begin{algorithm}[t]
	\caption{Asynch Arm Recommendation}
	\begin{algorithmic}
		\Procedure {Get-Arm-Prev}{($i$,$t$)} \Comment{\emph{Input  an agent $i$ and time $t$}}
	
		\State $m \sim P(i,\cdot)$ \Comment{\emph{Sample another agent}}
		\State $j \gets \inf \{r \geq 0: \sum_{y=0}^{r}\mathcal{P}_{y}^{(m)} \geq t \}$  \Comment{ \emph{Phase of agent $m$ at time $t$}}
		\State $\mathcal{Y}_{j-1}^{(m)} \gets \sum_{y=0}^{j-1}\mathcal{P}_{y}^{(m)}$, $\mathcal{Y}_{j-2}^{(m)} \gets \sum_{y=0}^{j-2}\mathcal{P}_{y}^{(m)}$ \\
		\Return $\arg\max_{l \in S_m^{(j-1)}} \bigg( T_l^{(m)}(\mathcal{Y}_{j-1}^{(m)}) - T_l^{(m)}(\mathcal{Y}_{j-2}^{(m)})\bigg)$ \Comment{\emph{Most played arm in phase $j-1$ }}
		\EndProcedure
	\end{algorithmic}
	\label{algo:get-prev-arm}
\end{algorithm}


\begin{theorem}

	Suppose in a system of $N \geq 2$ agents  connected by a communication matrix $P$ satisfying assumption {\bf (A.1)} and $K \geq 2$ arms, each agent runs Algorithm \ref{algo:main-algo-prob}, with UCB parameter $\alpha > 3$, $\delta >0$ and communication budget $(B_t)_{t \in \mathbb{N}}$ and $\varepsilon > 0$ satisfying assumption {\bf (A.2)}. Then the regret of any agent $i \in [N]$, after any time $T \in \mathbb{N}$ is bounded by 
	\begin{equation*}
	\mathbb{E}[R_T^{(i)}] \leq \underbrace{\left( \sum_{j=2}^{\lceil \frac{K}{N} \rceil + 2}  \frac{1}{\Delta_j} \right) 4 \alpha \ln(T) + \frac{K}{4}}_{\text{{Collaborative UCB Regret}}} +  \underbrace{ (1+\delta)\mathbb{E}[A_{2\lfloor 2+\delta\rfloor  \tau_{spr}^{(P)}}] + \widehat{g}((A_x)_{x \in \mathbb{N}},\delta)}_{\text{{Cost of Asynchronous Infrequent Pairwise Communications}}}, 
	\end{equation*}
	where $\widehat{g}((A_x)_{x \in \mathbb{N}},\delta) =  	2(1+\delta)\bigg(A_{2\lceil 2+\delta \rceil j^{*}} + \left(\frac{2}{2\alpha-3}\right)\sum_{l \geq3} \frac{A_{2l}}{A_{l-1}^{3}}  \bigg)$, where $j^{*}$ given in Theorem \ref{thm:strong_result} and $(A_x)_{x \in \mathbb{N}}$ is given in Equation (\ref{eqn:A_x_defn}).
	\label{thm:prob_main_result}
\end{theorem}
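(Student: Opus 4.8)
The plan is to follow the same two–part structure used for Theorem \ref{thm:strong_result}: decompose the per-agent regret into a ``collaborative UCB'' component and a ``cost of communication'' component governed by a freezing time, and then bound that freezing time under the asynchronous dynamics. Concretely, I would first establish an analogue of Proposition \ref{prop:weak_reg_decompose}: there is an almost surely finite random phase $\tau$ after which every agent has the optimal arm $1$ in its playing set and recommends only arm $1$, so that the playing sets never change again. Since $\widehat{S}^{(i^*)}$ (the unique sticky set containing arm $1$) guarantees that arm $1$ can never be dropped once it is the most-played arm, after $\tau$ each agent simply runs UCB-$\alpha$ on a fixed set of cardinality $\lceil K/N\rceil + 2$. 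The regret accrued after $\tau$ is therefore at most the standard UCB bound $\left(\sum_{j=2}^{\lceil K/N\rceil+2}\tfrac{1}{\Delta_j}\right)4\alpha\ln(T) + \tfrac{K}{4}$ --- identical to the synchronous case --- and the regret before $\tau$ is at most $\mathbb{E}\left[\sum_{y=0}^{\tau}\mathcal{P}_y^{(i)}\right] \le (1+\delta)\,\mathbb{E}[A_\tau]$ time units, using that each phase length $\mathcal{P}_y^{(i)}$ lies in the envelope $[A_y-A_{y-1},(1+\delta)(A_y-A_{y-1})]$. This last quantity is exactly the ``cost of asynchronous communication'' term to be controlled, and already exposes the prefactor $(1+\delta)$.

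Second, I would bound $\mathbb{E}[A_\tau]$ by splitting $\tau$ into a stabilization phase $\widehat{\tau}_{stab}$ and a subsequent spreading phase. For stabilization, I would prove asynchronous analogues of Proposition \ref{prop:strong_cost} and Lemma \ref{lem:error_estimate_ucb}: because a recommendation in Algorithm \ref{algo:main-algo-prob} is the most-played arm of the \emph{previous} phase, and each phase of index $y$ lasts at least $A_y - A_{y-1}$ slots, the number of samples underlying any recommendation is at least as large as in the synchronous analysis. Hence the per-phase probability that an agent whose playing set contains arm $1$ recommends a suboptimal arm is summable, and a Borel--Cantelli argument yields an almost surely finite $\widehat{\tau}_{stab}$ past which no agent ever recommends incorrectly. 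The threshold $j^{*}$ from Theorem \ref{thm:strong_result} controls this stabilization phase; the asynchrony inflates the corresponding number of synchronous rounds into phases by the factor $\lceil 2+\delta\rceil$ (explaining $A_{2\lceil 2+\delta\rceil j^{*}}$ in $\widehat{g}$), while the summable series $\sum_{l\ge 3} A_{2l}/A_{l-1}^{3}$ --- finite by assumption \textbf{(A.2)} --- absorbs the residual incorrect-recommendation probabilities.

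Third, and this is where the genuinely new work lies, I would control the spreading of arm $1$ after stabilization and relate it to the pull-model spreading time $\tau_{spr}^{(P)}$. In the synchronous algorithm all agents query at the common instants $(A_j)$, so one phase equals exactly one round of the pull process. In the asynchronous algorithm the phase boundaries of different agents are misaligned and an agent recommends from its previous phase, so a single round of the idealized pull process (one agent copying arm $1$ from a neighbour) may require several of the querying agent's phases to complete. The key estimate is to show that, because every agent's phase-$y$ boundary $\sum_{u=0}^{y}\mathcal{P}_u^{(\cdot)}$ falls in the window $[A_y,(1+\delta)A_y]$ and phase lengths are at least $A_y-A_{y-1}$, at most $\lfloor 2+\delta\rfloor$ consecutive asynchronous phases suffice to guarantee that one pull-model round completes; iterating over the $\tau_{spr}^{(P)}$ rounds of a coupled pull process bounds the number of post-stabilization phases by $\lfloor 2+\delta\rfloor\,\tau_{spr}^{(P)}$. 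Converting phases back to wall-clock time through $(A_x)$ (with the factor $2$ exactly as in the synchronous bound, and the prefactor $(1+\delta)$ from maximal phase inflation) then produces the term $(1+\delta)\,\mathbb{E}[A_{2\lfloor 2+\delta\rfloor\tau_{spr}^{(P)}}]$. Combining the stabilization and spreading contributions and passing expectations through $A_{(\cdot)}$ using the convexity hypothesis of \textbf{(A.2)} gives the stated bound.

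I expect the main obstacle to be this third step: coupling the asynchronous, previous-phase-based recommendation dynamics to a standard synchronous pull process so that the clean factor $\lfloor 2+\delta\rfloor$ emerges. The difficulty is that phase boundaries are random and correlated with the (random) number of samples each agent has collected, so one must argue that the coupling holds uniformly on the event $\{\text{phase}\ge\widehat{\tau}_{stab}\}$, and that errors before stabilization are fully absorbed into the summable series defining $\widehat{g}$. The remaining steps run parallel to the synchronous proof, with the random phase lengths handled throughout by the deterministic envelopes $[A_y,(1+\delta)A_y]$ and convexity to control $\mathbb{E}[A_{2\tau}]$.
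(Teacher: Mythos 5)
Your proposal is correct and follows essentially the same route as the paper: a freeze-time decomposition of the regret, stabilization of recommendations via summable per-phase error probabilities at threshold $j^{*}$, and a coupling of the post-stabilization dynamics to a pull-model rumor process in which the previous-phase recommendation rule and the convexity of $(A_x)_{x\in\mathbb{N}}$ together yield the $\lfloor 2+\delta\rfloor$ phase-inflation factor (this is exactly the paper's Proposition \ref{prop:convex} and Lemma \ref{lem:stoch_dom}). The only organizational difference is that the paper first proves the bound for a more general Poisson-phase-length variant (Theorem \ref{thm:poisson_result}) and then specializes to uniform phase lengths by noting $H^{*}=\Gamma=0$ almost surely, whereas you argue directly with the deterministic envelope $[A_y-A_{y-1},(1+\delta)(A_y-A_{y-1})]$ --- which is precisely what the paper's proof reduces to in this case.
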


{\subsection{Proof Sketch}}
The proof of this theorem is carried out in Appendices \ref{sec:poisson_asynch_algo},\ref{sec:proof_poisson_result} and
\ref{sec:prob_algo_proof}. In order to prove this, we find it effective to give a more general algorithm (Algorithm \ref{algo:poisson-algo} in Appendix \ref{sec:poisson_asynch_algo}) where the agents choose the phase lengths $\mathcal{P}_j$ as a Poisson distributed random variable. This algorithm does not satisfy the budget constraint exactly, but only in expectation, over the randomization used in the algorithm. We analyze this in Theorem \ref{thm:poisson_result} stated in Appendix \ref{sec:poisson_asynch_algo} and proved in Appendix \ref{sec:proof_poisson_result}. The main additional technical challenge is that the phase lengths of different agents are staggered. We crucially use the convexity of the sequence  $(A_x)_{x \in \mathbb{N}}$ (Assumption {\bf A.2}) in Proposition \ref{prop:convex}, along with more involved coupling argument to a rumor spreading process (Proposition \ref{prop:couple_spread}).
The proof of Theorem \ref{thm:prob_main_result} is a corollary of the proof of Theorem \ref{thm:poisson_result} in Appendix \ref{sec:prob_algo_proof}.
\\


\section{Lower Bound}
\label{sec:lower_bound}
In order to state the lower bound, we will restrict ourselves to a class of \emph{consistent policies} \cite{lai_robbins}. A policy (or algorithm) is consistent, if {\color{black}for} any agent $i \in [N]$, and any sub-optimal arm $l \in \{2,\cdots,K\}$, the expected number of times agent $i$ plays arm $l$ up-to time $t \in \mathbb{N}$ (denoted by $T_l^{(i)}(t)$) satisfies for all $a > 0$, $\lim_{t \rightarrow \infty} \frac{\mathbb{E}[T_l^{(i)}(t)]}{t^a} = 0$. 
\begin{theorem}
The regret of any agent $i \in [N]$ after playing arms for $T$ times under any {\color{black} consistent policy} played by the agents and any communication matrix $P$ satisfies
\begin{equation}
    \liminf_{T \rightarrow \infty} \frac{\mathbb{E}[R_{T}^{(i)}]}{\ln(NT)} \geq  \left( \frac{1}{N}\sum_{j =2}^{K} \frac{\Delta_j}{\text{KL}(\mu_j,\mu_1)} \right),
    \label{eqn:full_interaction_lb_m}
    \end{equation}
    
where for any $a,b \in [0,1]$, $\text{KL}(a,b)$ is the Kullback-Leibler distance between two Bernoulli distributions with mean $a$ and $b$.

\label{thm:full_interaction}
\end{theorem}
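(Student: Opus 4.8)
The plan is to prove the stronger \emph{genie-aided} version of the bound, in which every agent is granted access to the entire system history (all arm-pulls and rewards of all $N$ agents). Since additional information can only decrease an agent's regret, a lower bound in this setting implies the stated bound for the communication-constrained model under \emph{any} matrix $P$; it also explains why the right-hand side is $P$-free. Fix an agent $i$ and a suboptimal arm $j \in \{2,\dots,K\}$. I will first lower bound the \emph{pooled} number of pulls of arm $j$, namely $\widetilde{T}_j(T) := \sum_{m=1}^{N} T_j^{(m)}(T)$, and then pass to the per-agent count $T_j^{(i)}(T)$ for every $i$.

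The core is a change-of-measure argument in the spirit of \cite{lai_robbins}. Fix $\eta>0$ and let $\mu^{(j)}$ be the instance identical to $\mu$ except that $\mu^{(j)}_j = \mu_1 + \eta$, so that arm $j$ is the unique optimum under $\mu^{(j)}$ while every mean other than the $j$-th is unchanged. Because only arm $j$'s reward law differs and the genie-aided trajectory is a function of the arms pulled and rewards observed, the divergence decomposition yields $\text{KL}\bigl(\mathbb{P}_\mu \,\|\, \mathbb{P}_{\mu^{(j)}}\bigr) = \mathbb{E}_\mu[\widetilde{T}_j(T)]\,\text{KL}(\mu_j,\mu^{(j)}_j)$, where $\mathbb{P}_\mu$ is the law of the whole system. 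Consider the event $E := \{T_j^{(i)}(T) < T/2\}$, which is measurable with respect to the system history. Under $\mu$, arm $j$ is suboptimal, so the consistency of agent $i$ gives $\mathbb{E}_\mu[T_j^{(i)}(T)] = o(T^a)$ for every $a>0$, whence $\mathbb{P}_\mu(E)\to 1$ by Markov's inequality. Under $\mu^{(j)}$, every arm $l\neq j$ is suboptimal, so consistency of agent $i$ gives $\mathbb{E}_{\mu^{(j)}}\bigl[\sum_{l\neq j} T_l^{(i)}(T)\bigr] = o(T^a)$, i.e. $\mathbb{E}_{\mu^{(j)}}[T_j^{(i)}(T)] \geq T - o(T^a)$, and Markov then gives $\mathbb{P}_{\mu^{(j)}}(E) = o(T^{a-1})$.

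Applying the data-processing inequality for the binary relative entropy to $E$, $\text{kl}\bigl(\mathbb{P}_\mu(E),\mathbb{P}_{\mu^{(j)}}(E)\bigr) \leq \text{KL}\bigl(\mathbb{P}_\mu \,\|\, \mathbb{P}_{\mu^{(j)}}\bigr) = \mathbb{E}_\mu[\widetilde{T}_j(T)]\,\text{KL}(\mu_j,\mu^{(j)}_j)$. Using $\mathbb{P}_\mu(E)\to 1$ together with $\mathbb{P}_{\mu^{(j)}}(E)=o(T^{a-1})$, the left-hand side is at least $(1-o(1))(1-a)\ln T$. Rearranging, dividing by $\ln(NT)$ (note $\ln(NT)/\ln T \to 1$ for fixed $N$, so the two normalizations are interchangeable in the limit), and then letting $T\to\infty$, $a\downarrow 0$, and $\eta\downarrow 0$ so that $\text{KL}(\mu_j,\mu^{(j)}_j)\to\text{KL}(\mu_j,\mu_1)$, yields the pooled bound $\liminf_{T}\frac{\mathbb{E}_\mu[\widetilde{T}_j(T)]}{\ln(NT)} \geq \frac{1}{\text{KL}(\mu_j,\mu_1)}$.

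It remains to pass from the pooled count to the individual count for \emph{every} $i$, which is the crux. In the decentralized model each agent executes the same local decision rule, so the agents' roles are exchangeable under the gossip dynamics; consequently their expected pull counts coincide, $\mathbb{E}_\mu[T_j^{(i)}(T)] = \frac{1}{N}\mathbb{E}_\mu[\widetilde{T}_j(T)]$ for every $i$. Combined with the pooled bound this gives $\liminf_{T}\frac{\mathbb{E}_\mu[T_j^{(i)}(T)]}{\ln(NT)} \geq \frac{1}{N\,\text{KL}(\mu_j,\mu_1)}$ for each agent $i$, and since $\mathbb{E}_\mu[R_T^{(i)}] = \sum_{j=2}^{K}\Delta_j\,\mathbb{E}_\mu[T_j^{(i)}(T)]$, summing over $j$ produces the claimed bound for every agent. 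I expect the main obstacle to be exactly this final passage: the distinguishing information about arm $j$ is pooled across all agents, so the change-of-measure step controls only $\widetilde{T}_j$, and it is the exchangeability of the agents' loads under the common protocol that promotes the pooled estimate to a guarantee for each individual agent rather than merely for the average $\frac{1}{N}\sum_i \mathbb{E}_\mu[R_T^{(i)}]$; care is needed to justify this symmetry for the given $P$ and to make the $\text{kl}$-to-$\ln T$ comparison uniform as the limits in $a$ and $\eta$ are taken.
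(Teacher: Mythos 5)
Your genie-aided reduction and the change-of-measure step are sound, and they match the paper's starting point (the ``full interaction'' system in Appendix J): the divergence decomposition together with the event $E=\{T_j^{(i)}(T)<T/2\}$ and the two consistency estimates correctly yields the pooled bound $\liminf_{T\to\infty}\mathbb{E}_\mu[\widetilde{T}_j(T)]/\ln(NT)\geq 1/\text{KL}(\mu_j,\mu_1)$, and hence a valid lower bound on the \emph{average} regret $\frac{1}{N}\sum_{i=1}^N\mathbb{E}[R_T^{(i)}]$. (Your normalization remark about $\ln(NT)$ versus $\ln T$ for fixed $N$ is also fine.) In effect you reprove the Lai--Robbins bound from scratch where the paper invokes it as a black box; up to this point the argument is correct.

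The final passage, however, is a genuine gap, and it is exactly the one you flagged. The claimed exchangeability $\mathbb{E}_\mu[T_j^{(i)}(T)]=\frac{1}{N}\mathbb{E}_\mu[\widetilde{T}_j(T)]$ is not available under the theorem's hypotheses: the statement quantifies over \emph{all} consistent policies and \emph{all} gossip matrices $P$, and nothing forces the agents to run a common local rule or to occupy symmetric positions (take $P$ a star graph; note also that the paper's own Algorithm 1 initializes agents with distinct sticky sets $\widehat{S}^{(i)}$, so even the algorithms under study are not agent-exchangeable). Concretely, a consistent ``free-rider'' profile in the genie system --- agent $1$ performs all forced exploration while agents $2,\dots,N$ play the empirical leader --- satisfies the consistency definition yet has $\mathbb{E}_\mu[T_j^{(1)}(T)]\gg\frac{1}{N}\mathbb{E}_\mu[\widetilde{T}_j(T)]$, so your change of measure, which controls only the pooled count $\widetilde{T}_j$, cannot be promoted to a per-agent guarantee by symmetry. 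The paper closes this hole by a different device: it asserts that in the full-information system one may restrict attention to profiles in which all agents play the same arm in each slot, identifies such a profile with a single fictitious agent playing $NT$ interleaved steps whose arm choice is constant on blocks of length $N$ (the class $\widetilde{\boldsymbol{\Pi}}_{fictitious}$), observes that within this class every residue-class regret $\mathcal{R}_a^{(f)}$ equals the common per-agent regret, so $\mathbb{E}[R_T^{(i)}]=\frac{1}{N}\mathbb{E}[R_{NT}^{(\text{fictitious})}]$, and only then applies the single-agent bound over $NT$ pulls. There the division by $N$ comes from a structural restriction on the policy class (itself asserted via an optimality claim rather than assumed of arbitrary policies), which is precisely the ingredient your proof would need to supply; as written, the exchangeability step is false, and without it your argument proves the bound only for the agent-averaged regret, not for every agent $i$.
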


The proof of the theorem is carried out in Appendix \ref{sec:lb_proof}. The proof of this lower bound is based on a system where there are no communication constraints. From standard inequalities for KL divergence, we get from Equation (\ref{eqn:full_interaction_lb_m}) that
\begin{equation}
    \liminf_{T \rightarrow \infty} \frac{\mathbb{E}[R_{T}^{(i)}]}{\ln(NT)} \geq  \mu_1(1-\mu_1)\left( \frac{1}{N}\sum_{j =2}^{K} \frac{1}{\Delta_j} \right).
    \label{eqn:full_interaction_lb}
    \end{equation}

\section{Insights}
\label{sec:tradeoffs}




{\bf1. Insensitivity to Communication Constraints - }
 The following corollary follows directly from Theorems \ref{thm:strong_result} and \ref{thm:prob_main_result}.

 \begin{corollary}
 \label{cor:asym_regret}
  	Suppose in a system of $N \geq 2$ agents each running Algorithm \ref{algo:main-algo} or \ref{algo:main-algo-prob} with parameters satisfying conditions in Theorems \ref{thm:strong_result} and \ref{thm:prob_main_result} respectively. Then, for every agent $i \in [N]$ and time $T \in \mathbb{N}$,
 	\begin{align*}
 	   \limsup_{T \rightarrow \infty} \frac{E[R_T^{(i)}]}{\ln(T)} \leq \left( \sum_{j=2}^{\lceil \frac{K}{N} \rceil + 2} \frac{4\alpha}{\Delta_j} \right).
 	\end{align*}
 	\label{cor-com-insensitive}
 \end{corollary}
Thus, as long as the gossip matrix $P$ is connected (Assumption {\bf A.1}) and the communication budget over a horizon of $T$ is at-least $\Omega (\log(T))$, (Assumption {\bf A.2}), the asymptotic regret of any agent, is insensitive to $P$ and the communication budget. 
\\

{\bf \noindent 2. Benefit of Collaboration - } As an example, consider a system where $K=N$ and arm-means such that $\forall j \in \{2,\cdots,K\}$, $\Delta_j := \Delta > 0$. Let $\boldsymbol{\Pi}$ be any consistent policy for the agents in the sense of Theorem \ref{thm:full_interaction}. Then Equation (\ref{eqn:full_interaction_lb}) and Corollary \ref{cor:asym_regret} implies that $\sup_{\boldsymbol{\pi}}\limsup_{T \rightarrow \infty}  \frac{\mathbb{E}[R_T^{(i)}]}{\mathbb{E}_{\boldsymbol{\pi}}[R_T^{(i)}]} \leq \frac{16 \alpha}{\mu_1(1-\mu_1)}$, where in the numerator is the regret obtained by our algorithms and the denominator is by the policy $\boldsymbol{\pi}$. As ratio of asymptotic regret in our algorithm and the lower bound is a constant \emph{independent of the size of the system}, (does not grow with $N$), our algorithms benefit from collaboration. Recall that the lower bound is obtained from the full interaction setting where all agents communicate with every other agent, after every arm pull while in our model, every agent pulls information, a total of at most $o(T)$ times over a time horizon of $T$. Thus, we observe that, despite communication constraints, any agent in our algorithm performs nearly as good as the best possible algorithm when agents have no communication constraints, i.e., the regret ratio is a constant independent of $N$.
\\

{\bf \noindent 3. Impact of Gossip Matrix $P$ - }  The second order constant term in the regret bounds in Theorems \ref{thm:strong_result} and \ref{thm:prob_main_result} provides a way of quantifying the impact of $P$, based on its conductance, which we define now. Given an undirected finite graph $G$ on vertex set $V$, denote for any vertex $u \in V$, $\text{deg}(u)$ to be the degree of vertex $u$ in $G$. For any set $H \subseteq V$, denote by $\text{Vol}(H) = \sum_{u \in H} \text{deg}(u)$. For any two sets $H_1,H_2 \subseteq V$, denote by $\text{Cut}(H_1,H_2)$, to be the number of edges in $G$ with one end in $H_1$ and the other in $H_2$. The conductance of $G$, denoted by $\phi$ is defined as 
\begin{align*}
    \phi := \min_{H \subset V: 0 < \text{Vol}(H) \leq \text{Vol}(V)/2} \frac{\text{Cut}(H,V\setminus H)}{\text{Vol}(H)}.
\end{align*}
The following corollary, illustrates the intuitive fact that if the conductance of the gossip matrix is higher, then the regret (the second order constant term) is lower. {\color{black} For sake of clarity, we give the corollary in the special case of polynomially scaling communication budgets and provide a general result in the Appendix in Corollary \ref{cor:regret-comm}.}

\begin{corollary}
Suppose $N \geq 2$ agents are connected by a $d$-regular graph with adjacency matrix $\boldsymbol{A}_G$ having conductance $\phi$ and the gossip matrix $P = d^{-1}\boldsymbol{A}_G$. Suppose the communication budget scales as $B_t = \lfloor t^{1/\beta} \rfloor$, for all $t \geq 1$, where $\beta > 1$ is arbitrary. 
If the agents are using Algorithm \ref{algo:main-algo}
with parameters satisfying assumptions in Theorem \ref{thm:strong_result}, then for any $i \in [N]$ and $T \in \mathbb{N}$ 
\begin{align*}
\mathbb{E}[R_T^{(i)}] \leq \underbrace{{4 \alpha \ln(T)}\left( \sum_{j=2}^{\lceil \frac{K}{N} \rceil + 2} \frac{1}{\Delta_j} \right) + \frac{K}{4}}_{\text{\clap{Collaborative UCB Regret}}} +    \underbrace{\left(2C\frac{ \log(N)}{\phi}\right)^{\beta}}_{\text{\clap{Impact of Gossip Matrix}}} +  \underbrace{  2\frac{3^{\beta}}{2\alpha-3} \frac{\pi^2}{6} + (j^*)^{\beta}+1}_{\text{\clap{Constant Independent of $P$}}},
\end{align*}
where $j^*$ is a constant independent of the gossip matrix $P$, depending only on $N,K$ and $\Delta_2$ (given in Theorem \ref{thm:strong_result}).

\label{cor:regret-comm-main-paper}
\end{corollary}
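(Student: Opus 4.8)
The plan is to specialize Theorem~\ref{thm:strong_result} to the stated budget and gossip matrix, and then estimate each of the three terms comprising its second-order ``Cost of Infrequent Pairwise Communications'' contribution, namely $A_{j^*}$, the series inside $g((A_x)_{x\in\mathbb{N}})$, and $\mathbb{E}[A_{2\tau_{spr}^{(P)}}]$. The first step is to pin down the communication sequence: for $B_t = \lfloor t^{1/\beta}\rfloor$ one has $\min\{t\in\mathbb{N}: B_t \geq x\} = \lceil x^\beta\rceil$, and taking $\varepsilon < \beta - 1$ forces $\lceil (1+x)^{1+\varepsilon}\rceil \leq \lceil x^\beta\rceil$ for all large $x$, so Equation~(\ref{eqn:A_x_defn}) reduces to $A_x = \lceil x^\beta\rceil$; this is precisely the reduction already recorded in Corollary~\ref{cor:main_result_interpretation}. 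I would also verify that assumption \textbf{A.2} (convexity of $(A_x)_{x\in\mathbb{N}}$ and summability of $\sum_l A_{2l}/A_{l-1}^3$) holds, which is immediate once $\beta>1$.

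Next I would dispose of the two $P$-independent pieces. Writing $A_{j^*} = \lceil (j^*)^\beta\rceil \leq (j^*)^\beta + 1$ accounts for the summands $(j^*)^\beta$ and $+1$ in the final bound. For the series $\tfrac{2}{2\alpha-3}\sum_{l\geq j^*/2-1} A_{2l+1}/A_{l-1}^3$, I would combine $A_{2l+1}\leq (3l)^\beta$ (valid since $2l+1\leq 3l$) with $A_{l-1}^3\geq (l-1)^{3\beta}$ to obtain a ratio decaying like $l^{-2\beta}$, hence bounded by $3^\beta/l^2$ for every $l$ in the summation range (the range begins at $j^*/2-1$, which is large, so the boundary terms cause no difficulty). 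Summing and using $\sum_{l\geq1} l^{-2} = \pi^2/6$ yields the term $\tfrac{2}{2\alpha-3}\,3^\beta\,\tfrac{\pi^2}{6}$. These are routine estimates.

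The crux is the term $\mathbb{E}[A_{2\tau_{spr}^{(P)}}]$, which must be shown to be at most $(2C\log(N)/\phi)^\beta$. Since $A_x=\lceil x^\beta\rceil$, this equals $\mathbb{E}[\lceil (2\tau_{spr}^{(P)})^\beta\rceil]$, so the task reduces to bounding the $\beta$-th moment $\mathbb{E}[(\tau_{spr}^{(P)})^\beta]$ of the pull-model spreading time on the $d$-regular graph of conductance $\phi$. Here I would invoke the general conductance-based estimate for rumor spreading: on a graph of conductance $\phi$, the spreading time concentrates at order $\phi^{-1}\log N$ with exponential upper tails of the form $\Pr(\tau_{spr}^{(P)} > k\,c\,\phi^{-1}\log N) \leq N^{-\Omega(k)}$. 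Integrating such a tail over $k$ controls every polynomial moment, and absorbing the resulting constants into the universal constant $C$ of Corollary~\ref{cor:regret-comm} gives $\mathbb{E}[(\tau_{spr}^{(P)})^\beta] \leq (C\log(N)/\phi)^\beta$, whence $\mathbb{E}[A_{2\tau_{spr}^{(P)}}] \leq (2C\log(N)/\phi)^\beta$. Adding this to the two preceding terms and to the Collaborative UCB Regret of Theorem~\ref{thm:strong_result} produces the claimed inequality.

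I expect the main obstacle to be exactly this moment bound. The subtlety is twofold: first, the high-probability conductance estimate for $\tau_{spr}^{(P)}$ must be sharpened to exponential tail decay strong enough that the moment integral converges with a constant that does not blow up with $\beta$; and second, the $N$- and $\phi$-dependence must be kept clean as $\log(N)/\phi$, without accumulating spurious spectral-gap or poly-logarithmic factors when passing from the tail bound to the $\beta$-th moment. Everything else is bookkeeping of the ceilings and the $P$-independent constants.
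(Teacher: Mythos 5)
Your proposal follows essentially the same route as the paper: specialize Equation~(\ref{eqn:A_x_defn}) to $A_x=\lceil x^\beta\rceil$, reuse the estimates of $A_{j^*}$ and of the series $\sum_l A_{2l+1}/A_{l-1}^3$ from the proof of Corollary~\ref{cor:main_result_interpretation}, and control $\mathbb{E}[A_{2\tau_{spr}^{(P)}}]$ via the conductance tail bound $\mathbb{P}[\tau_{spr}^{(P)}\geq Cl\log(N)/\phi]\leq N^{-4l}$ (Lemma~\ref{lem:rumor_spreading_lattanzi}, from \cite{rumor_lattanzi}). The one step you flag but do not resolve is exactly where the paper does something slightly different: rather than bounding the $\beta$-th moment $\mathbb{E}[(\tau_{spr}^{(P)})^\beta]$ and absorbing the correction multiplicatively into $C$ --- which is problematic, since the factor $\bigl(1+\sum_{l\geq 1}(l+1)^\beta N^{-4l}\bigr)^{1/\beta}$ grows with $\beta$, so $C$ would no longer be universal --- the paper (Corollary~\ref{cor:regret-comm}) writes $\mathbb{E}[A_{2\tau_{spr}^{(P)}}]\leq A_{2C\log(N)/\phi}+\sum_{l\geq 1}\mathbb{P}[2\tau_{spr}^{(P)}\geq 2Cl\log(N)/\phi]\,A_{2Cl\log(N)/\phi}$ and uses the subexponential growth $A_x\leq e^{Dx}$ with $D\leq\phi/C$ to dominate each summand by $N^{-2l}$, so the entire tail contributes only an additive $+1$ rather than a multiplicative constant. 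With that additive splitting in place of your moment bound, your argument matches the paper's proof; the remaining bookkeeping (the $3^\beta\pi^2/6$ series bound and $A_{j^*}\leq (j^*)^\beta+1$) is as you describe.
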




\noindent The proof is provided in Appendix \ref{appendix_proof_cor_reg_comm_tradeoff}. {\color{black} Notice, that the only term in the regret that depends on the graph is the conductance $\phi$. 
In order to derive some intuition,
we consider two examples - one wherein the $N$ agents are connected by a complete graph, and one wherein they are connected by the ring graph. The conductance of the complete graph is $\frac{N}{2(N-1)}$, while that of the ring graph is $\frac{2}{N}$. Thus, the cost of communications scales as $(4C \log(N))^{\beta}$
for the complete graph, but scales as $(4C \log(N) N )^{\beta}$ in the ring graph. This shows the reduction in regret that is possible by a `more' connected gossip matrix, where the regret is reduced from order $(N\log(N))^{\beta}$ to $(\log(N))^{\beta}$ on moving from the ring graph to the complete graph. This is also demonstrated empirically in Figures \ref{fig1} and \ref{fig2}.}
\\

{\bf \noindent 4. Regret/Communication Trade-off -} For a fixed problem instance and gossip matrix $P$, reducing the total number of information pulls, i.e., reducing the rate of growth of $(B_x)_{x \in \mathbb{N}}$ increases the per-agent regret. This can be inferred by examining the cost of communications in Equation (\ref{eqn:per-agent-regret-thm}), which we state in the following corollary.
\begin{corollary}
\label{cor:reg-comm-tradeoff}
Suppose, Algorithm \ref{algo:main-algo} is run with $K$ arms and $N$ agents connected by a gossip matrix $P$, with two different communication schedules $(A_x^{(1)})_{x \in \mathbb{N}}$ and $(A_x^{(2)})_{x \in \mathbb{N}}$, such that $\lim_{x \to \infty}\frac{A_x^{(1)}}{A_x^{(2)}} = 0$. Then there exist positive constants $N_0,K_0 \in \mathbb{N}$ (depending on the two communication sequences), such that for all $N \geq N_0$ and $K \geq K_0$, and $P$, the cost of communications in the regret bound in Equation (\ref{eqn:per-agent-regret-thm}) is ordered as 
\begin{align*}
    g((A_x^{(1)}))+\mathbb{E}[A_{2 \tau_{spr}^{(P)}}^{(1)}] \geq     g((A_x^{(2)}))+\mathbb{E}[A_{2 \tau_{spr}^{(P)}}^{(2)}].
\end{align*}
\end{corollary}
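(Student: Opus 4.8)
The plan is to show that, for a fixed pair of schedules with $A^{(1)}_x/A^{(2)}_x\to 0$, the cost of communications $\mathrm{cost}^{(s)}:=g((A^{(s)}_x))+\mathbb{E}[A^{(s)}_{2\tau_{spr}^{(P)}}]$ appearing in Equation~(\ref{eqn:per-agent-regret-thm}) is, for $N$ and $K$ large, dominated by the single term $A_{j^*}$, and that within $j^*$ the binding constraint is the phase-length (best-arm identification) condition $j_2^* := \min\{ j : (A_j - A_{j-1})/(2+\lceil K/N\rceil) \geq 1 + 4\alpha\log(A_j)/\Delta_2^2\}$ rather than the $A^{-1}(\cdot)$ term. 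The point that drives the claimed (and at first counterintuitive) ordering is that a schedule which communicates \emph{more} frequently has \emph{smaller} per-phase increments $A_j-A_{j-1}$, and so must pass through many more phases before a single phase is long enough to reliably recommend the best arm among the $\lceil K/N\rceil+2$ arms in play; the real time $A_{j_2^*}$ at which reliable recommendations begin is then larger. Thus, for a fixed pair of schedules, taking $K$ large pushes both into an \emph{over-communication} regime in which shorter phases hurt, and the more frequently communicating schedule $A^{(1)}$ incurs the larger cost.

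Concretely, I would first recall that $g((A_x)) = A_{j^*} + \tfrac{2}{2\alpha-3}\sum_{l\geq j^*/2-1} A_{2l+1}/A_{l-1}^3$, and that by the summability hypothesis in Assumption \textbf{(A.2)} the tail sum is $\BigO{1}$ uniformly, while $\mathbb{E}[A_{2\tau_{spr}^{(P)}}]$ depends only on $N$ and $P$ and not on $K$. Hence as $K\to\infty$ the whole cost equals $A_{j^*}(1+o(1))$. Second, I would establish that for $K$ large the maximum defining $j^*$ is attained at $j_2^*$: the $A^{-1}$-term grows like $(A)^{-1}\bigl((N\binom K2(\lceil K/N\rceil+1))^{1/(2\alpha-6)}\bigr)$, whereas $j_2^*$ is governed by the increasing threshold $\Theta(\lceil K/N\rceil)$ on the increments, and for the relevant growth rates the latter forces the larger index; composing, $A_{j^*}\asymp A_{2 j_2^*}$.

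The heart of the argument is a monotonicity lemma: if $A^{(1)},A^{(2)}$ are increasing and convex (Assumption \textbf{(A.2)}) with $A^{(1)}_x/A^{(2)}_x\to 0$, then for every large threshold $\theta$ the first-passage indices $\rho^{(s)}(\theta):=\min\{j:A^{(s)}_j-A^{(s)}_{j-1}\geq\theta\}$ satisfy $A^{(1)}_{2\rho^{(1)}(\theta)} \geq A^{(2)}_{2\rho^{(2)}(\theta)}$. I would prove this by noting that convexity makes the increments $\delta^{(s)}(j)=A^{(s)}_j-A^{(s)}_{j-1}$ nondecreasing; that $A^{(1)}_x/A^{(2)}_x\to0$ forces $\delta^{(1)}(j)\le\delta^{(2)}(j)$ on a set of full density (otherwise summing increments would contradict the ratio limit), whence $\rho^{(1)}(\theta)\ge\rho^{(2)}(\theta)$; and finally lower-bounding $A^{(1)}_{2\rho^{(1)}(\theta)}=\sum_{j\le 2\rho^{(1)}(\theta)}\delta^{(1)}(j) \geq \rho^{(1)}(\theta)\,\theta$ while upper-bounding $A^{(2)}_{2\rho^{(2)}(\theta)}\le 2\rho^{(2)}(\theta)\,\delta^{(2)}(2\rho^{(2)}(\theta))$ and controlling that last increment through the growth-rate gap. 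Applying this with $\theta=\theta_K\asymp\lceil K/N\rceil$ yields $A^{(1)}_{j^{*,(1)}}\ge A^{(2)}_{j^{*,(2)}}$ with a gap that diverges as $K\to\infty$.

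Finally, I would assemble the estimate: since the cost is $A_{j^*}$ plus nonnegative terms, $\mathrm{cost}^{(1)} \geq A^{(1)}_{j^{*,(1)}}$, whereas $\mathrm{cost}^{(2)} = A^{(2)}_{j^{*,(2)}} + \BigO{\text{terms growing strictly slower in }K}$; choosing $N_0,K_0$ so large that the diverging gap from the monotonicity lemma dominates all schedule-$2$ lower-order terms gives $\mathrm{cost}^{(1)}\ge\mathrm{cost}^{(2)}$ for all $N\ge N_0$, $K\ge K_0$, uniformly in $P$. The main obstacle is exactly the monotonicity lemma: one must convert the \emph{value} limit $A^{(1)}_x/A^{(2)}_x\to0$ into the reversed ordering of the \emph{composed} quantities $A_{j^*}$, and $j^*$ depends on the schedule through both $A^{-1}$ (which favors the opposite ordering) and the increment threshold $j_2^*$ (which favors the claimed ordering). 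The entire proposal therefore rests on showing that, for $K$ large, the increment-threshold contribution strictly dominates, so that these two competing effects do not cancel.
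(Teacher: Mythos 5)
Your proposal proves the wrong direction relative to the paper's own argument. The proof in Appendix \ref{appendix-com-reg-tradeoff} establishes $\mathbb{E}[A^{(1)}_{2\tau_{spr}^{(P)}}] \leq \mathbb{E}[A^{(2)}_{2\tau_{spr}^{(P)}}]$ (immediate, since $P$ is fixed and $A^{(1)}_x \leq A^{(2)}_x$ eventually) and then shows $g((A_x^{(2)})) - g((A_x^{(1)})) > 0$ for all large $N$ or $K$; that is, the \emph{more frequently communicating} schedule (schedule $1$, since $A^{(1)}_x/A^{(2)}_x \to 0$ means its $x$-th pull occurs earlier) has the \emph{smaller} cost, consistent with the surrounding claim that reducing the budget growth increases regret. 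The ``$\geq$'' in the displayed statement is a sign slip, and you have taken it literally and engineered an argument toward it. Mechanically, the paper asserts that for $N$ or $K$ large the max defining $j^*$ is attained at the $A^{-1}(\cdot)$ term, so that $A^{(i)}_{(j^*)^{(i)}} \approx c_i \Theta$ with $\Theta = \bigl(N\binom{K}{2}(\lceil K/N\rceil+1)\bigr)^{1/(2\alpha-6)}$ and $c_i$ larger for the faster-growing schedule (e.g.\ $c = 2^{\beta}$ when $A_x \asymp x^{\beta}$); superlinearity plus the ratio condition makes $A^{(2)}_{(j^*)^{(2)}} - A^{(1)}_{(j^*)^{(1)}}$ diverge, swamping the tail series, which Assumption \textbf{(A.2)} bounds by a constant. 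You instead claim the increment-threshold index $j_2^*$ binds, which drives you to the opposite ordering.

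Beyond the direction issue, your key monotonicity lemma is false as stated. Take $A^{(1)}_x = x^3$ (budget $B_t = \lceil t^{1/3}\rceil$) and $A^{(2)}_x = 2^x$ (the paper's logarithmic-budget example); both satisfy \textbf{(A.2)} and $A^{(1)}_x/A^{(2)}_x \to 0$. Then $\rho^{(1)}(\theta) \asymp \theta^{1/2}$ gives $A^{(1)}_{2\rho^{(1)}(\theta)} \asymp \theta^{3/2}$, while $\rho^{(2)}(\theta) \asymp \log_2\theta$ gives $A^{(2)}_{2\rho^{(2)}(\theta)} \asymp \theta^{2}$, so $A^{(1)}_{2\rho^{(1)}(\theta)} < A^{(2)}_{2\rho^{(2)}(\theta)}$ for all large $\theta$. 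The failure is precisely at your step ``controlling that last increment through the growth-rate gap'': for fast schedules the final increment $\delta^{(2)}(2\rho^{(2)}(\theta))$ is of the same order as $A^{(2)}_{2\rho^{(2)}(\theta)}$ itself, so the upper bound $2\rho^{(2)}(\theta)\,\delta^{(2)}(2\rho^{(2)}(\theta))$ yields nothing. Your premise that $j_2^*$ binds for large $K$ is also parameter-dependent rather than generic: with $A_x \asymp x^{\beta}$ one gets $A_{2j_1^*} \asymp K^{3/(2\alpha-6)}$ versus $A_{2j_2^*} \asymp (K/N)^{\beta/(\beta-1)}$ up to logarithms, so for $\alpha$ close to $3$ the $A^{-1}$ term dominates and the ordering reverts to the paper's direction. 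To your credit, the competing-effects tension you identify is real --- it shows the paper's own claim that the $A^{-1}$ term always binds for large $N,K$ itself needs restrictions on $\alpha$ and the schedules --- but it cannot be resolved uniformly in the direction you propose, and so the proposal does not yield the corollary.
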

The proof of this corollary is provided in the Appendix \ref{appendix-com-reg-tradeoff}. 
In light of Equation (\ref{eqn:per-agent-regret-thm}) in Theorem \ref{thm:strong_result}, the above corollary makes precise the qualitative fact that if agents are allowed more communication budget, then they experience lesser regret. We demonstrate this empirically in Figure \ref{fig3}.


\section{Numerical Results}
\label{sec:simulations}

We evaluate our algorithm and the insights empirically.
Each plot is the regret averaged over all agents, produced after $30$ and $100$ random runs for Algorithms \ref{algo:main-algo} and  Algorithm \ref{algo:main-algo-prob} (with $\delta =0.5$) respectively, along with $95\%$ confidence intervals.
We also plot the two benchmarks of no interaction among agents (where a single agent is running the UCB-$4$ algorithm of \cite{ucb_auer}) and the system corresponding to complete interaction, where all agents are playing the UCB-$4$ algorithm with entire system history of all arms pulled and rewards obtained by all agents as described in Section \ref{sec:lower_bound}. 
\\

\noindent {\bf Synthetic Experiments} -
We consider a synthetic setup with $\Delta=0.1$, $\mu_1 = 0.95, \mu_2 =0.85$, rest of the arm means sampled uniformly in $(0, 0.85]$. In Figures \ref{fig1} and \ref{fig2}, we consider the impact of gossip matrix by fixing the communication budget $B_t = \lfloor t^{1/3} \rfloor$ ($A_x = x^3$) and varying $P$ to be the complete and cycle graph among agents. We see that our algorithms are effective in leveraging collaboration in both settings and experiences a lower regret in the complete graph case as opposed to the cycle graph, as predicted by our insights.
\\

\begin{figure}[t]
\centering
\includegraphics[width=\columnwidth]{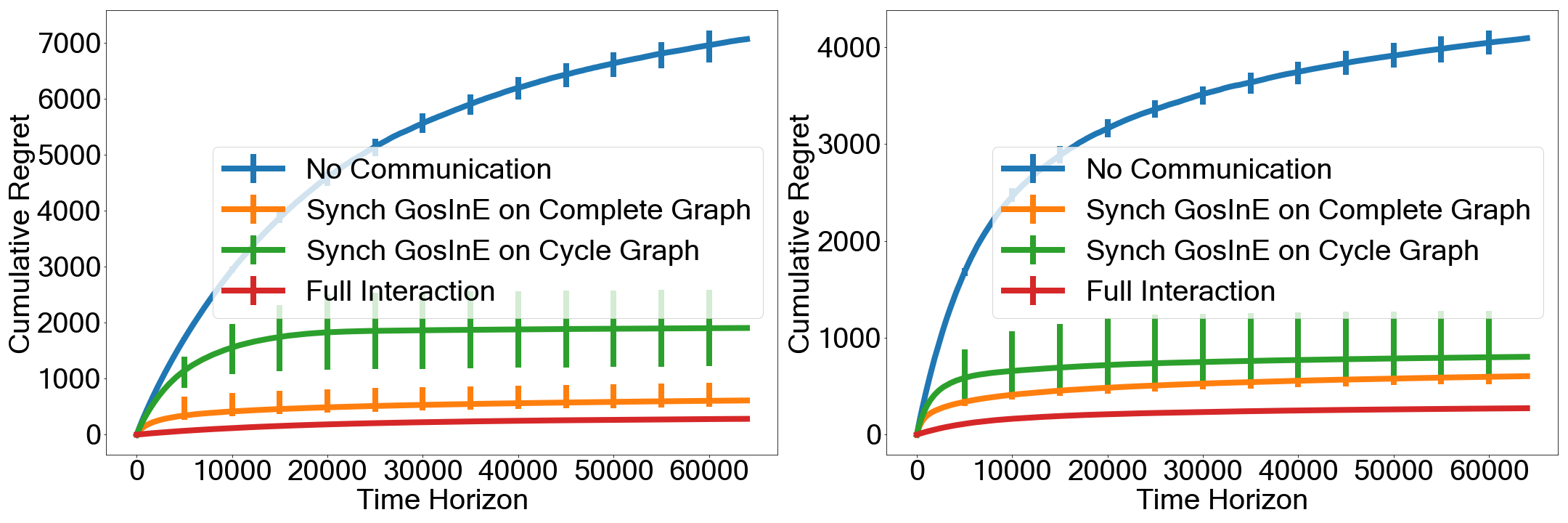}
\caption{$(N, K)$ are $(25, 75)$ and $(15, 50)$ respectively.}
\label{fig1}
\end{figure}

\begin{figure}[t]
\centering
\includegraphics[width=\columnwidth]{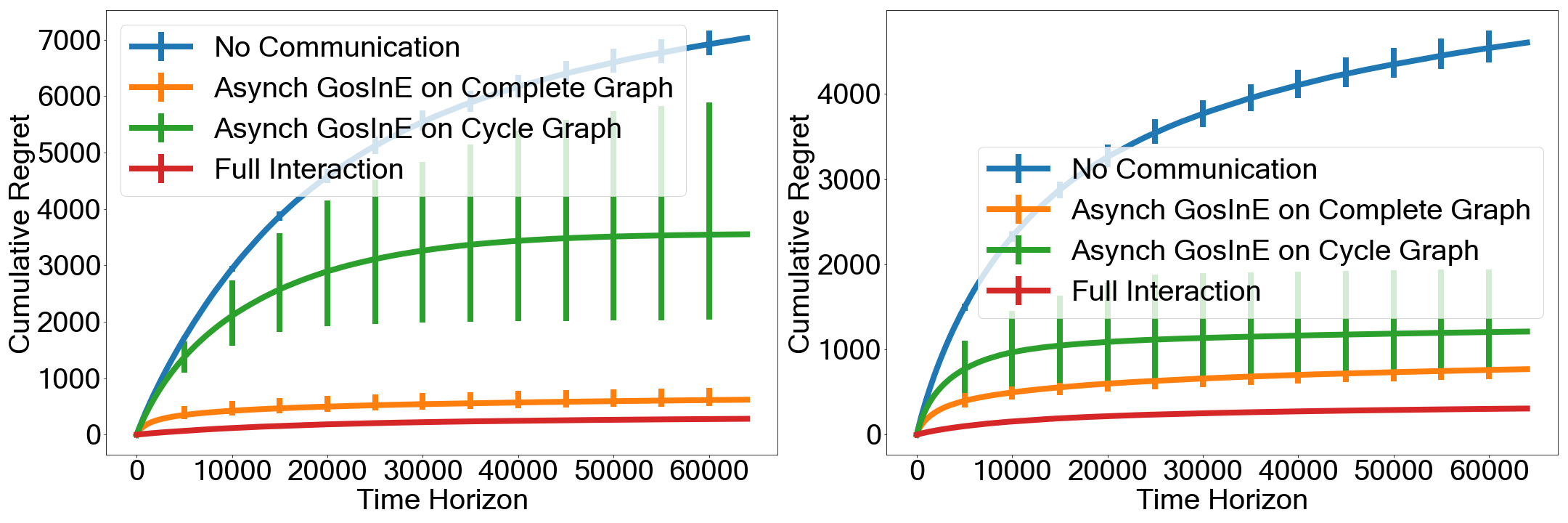}
\caption{ $(N, K)$ are $(25, 75)$ and $(15, 50)$ respectively.}
\label{fig2}
\end{figure}

In Figure \ref{fig3}, we compare the effect of communication budget by considering two scenarios - polynomial budget $B_t = \lfloor t^{1/3} \rfloor$ ($A_x = x^3$) and logarithmic budget $B_t = \lfloor \log_2(t) \rfloor$ ($A_x = 2^x$). We see that even under a logarithmic communication budget, our algorithms achieve significant regret reduction.
\\

\noindent {\bf Real Data} - In Figure \ref{fig:sigmetrics-comparison}, we run our Algorithms on MovieLens data \cite{movielens} using the methodology in \cite{sigmetrics}. This dataset contains $6k$ movies rated by $4k$ users. We treat the movies as arms and estimate the arm-means from the data by averaging the ratings of a section of similar users (same age, gender and occupation and have rated at-least $30$ movies). We further select only those movies that have at least $30$ ratings by users in the chosen user category. We estimate the missing entries in the sub-matrix (of selected users and movies) using matrix completion \cite{fancyimpute} and choose a random set of $30$ and $40$ movies, in Figure \ref{fig:sigmetrics-comparison}. We compare against \cite{sigmetrics} (hyperparameter $\varepsilon = 0.1$) for the setting of complete graph among agents and communication budget $B_t = \lfloor t^{1/3} \rfloor$. We see that in all settings, our algorithm has superior performance and strongly benefits from limited collaboration.



\begin{figure}[t]
\centering
\includegraphics[width=\columnwidth]{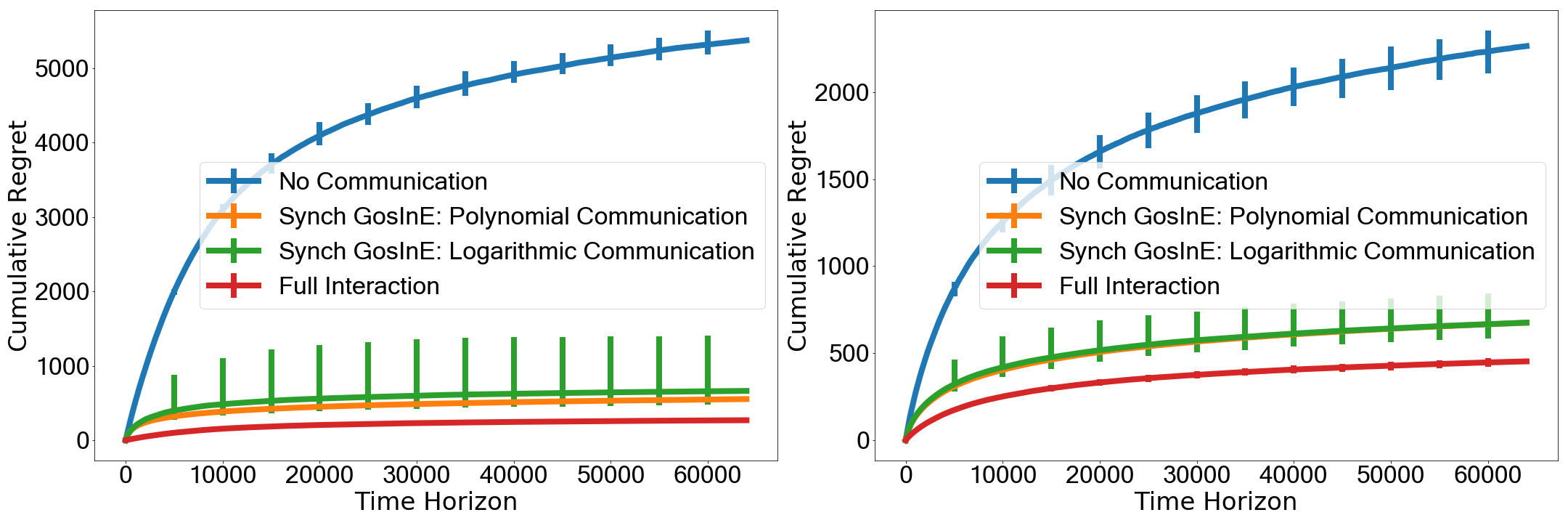}
\caption{ $(N, K)$ as $(20, 70)$  and $(5, 20)$ and the graphs are complete and cycle respectively.}
\label{fig3}
\end{figure}



\section{Related Work}
\label{sec:related_work}

The closest to our work is \cite{sigmetrics} which introduced a model similar to ours. However, the present paper improves on the algorithm in \cite{sigmetrics} in three aspects: {\em (i)} our algorithm can handle any gossip matrix $P$, while that of \cite{sigmetrics} can only handle complete graphs and {\em (ii)}, the algorithm in \cite{sigmetrics}, needs as an input, a lower bound on the arm gap between the best and the second best arm, while our algorithms do not require any such knowledge and {\em (iii)},
our regret scaling is superior even on complete graphs. 



 
\begin{figure}[t]
\centering
\includegraphics[width=\columnwidth]{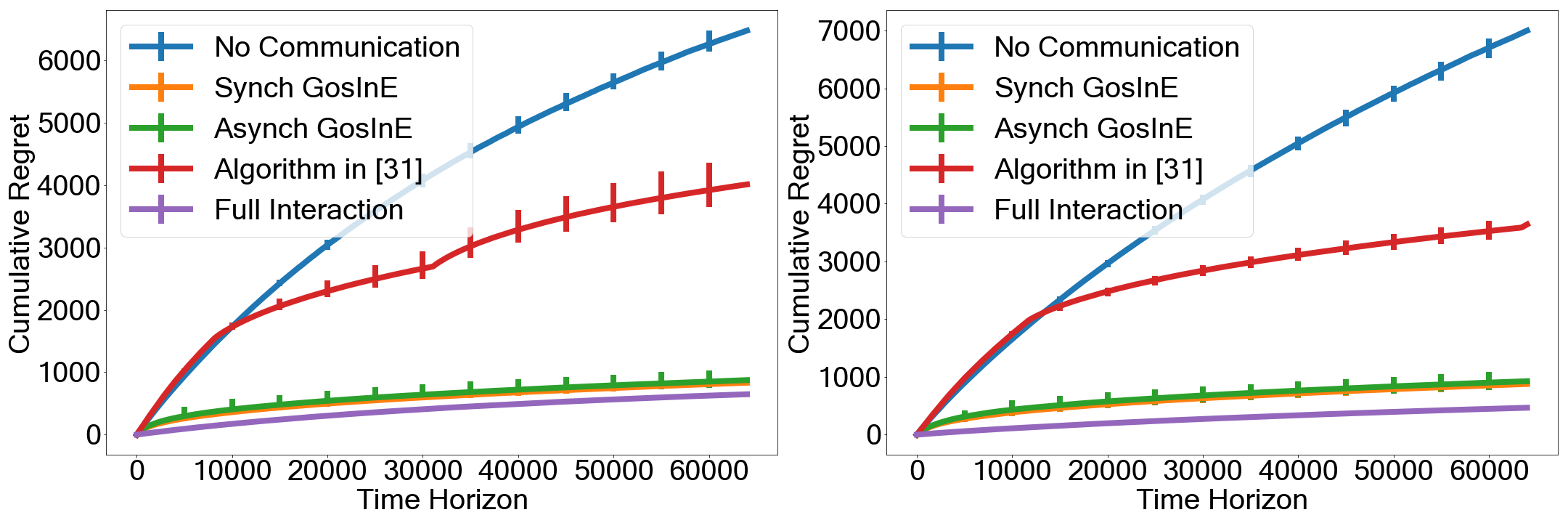}
\caption{ $(N, K)$ are $(10, 30)$ and $(15, 40)$ respectively.}
\label{fig:sigmetrics-comparison}
\end{figure} 
 
The multi-agent MAB was first introduced in the non-stochastic setting in \cite{kleinberg_collaborative_learning} and further developed in  \cite{delay_nonstochastic}. However, there was no notion of communication budgets in these models. Subsequently, \cite{kanade2012distributed} considered the regret/communication trade-off in the non-stochastic setting, different from our stochastic MAB model. In the stochastic setting, the papers of \cite{stochastic_team},\cite{buccapatnam}, \cite{kanade_stochastic}, \cite{Kolla2018}, \cite{frequentist_bayesian} consider a collaborative multi agent model where agents minimize individual regret in a decentralized manner. In these models, communications is not an active decision made by agents, rather agents can observe neighbor's actions and are, therefore, different from our setup, where agents actively choose to communicate depending on a budget. 
The papers of \cite{hillel} and \cite{p2p_simple_regret} study the benefit of collaboration in reducing simple regret, unlike the cumulative regret considered in our paper. The paper of \cite{dist_clustering} considers a distributed version of contextual bandits, in which agents could share information, whose length grows with time and thus different from our setup.
There has also been a lot of recent interest in `competitive' multi-agent bandits (\cite{anima_distributed}, \cite{distributed_no_com}, \cite{musical_chair}, \cite{avner_cognitive},
\cite{decentralized_multi_agent_collision},
\cite{distributed_no_communication},\cite{mansour_compete},\cite{matching_market}), where if multiple agents choose the same arm in a time slot, then they experience a `collision' and receive small reward (only a subset (possibly empty) gets a reward). This differs from our setup where even on collision, agents receive independent rewards.



\section{Conclusions}


We introduced novel algorithms for multi agent MAB, where agents play from a subset of arms and recommend arm-IDs. Our algorithms leverage collaboration effectively and in particular, its performance (asymptotic regret) is insensitive to the communication constraints. Furthermore, our algorithm exhibits a regret communication trade-off, namely achieves lower regret (finite time) with increased communications (budget or conductance of $P$), which we characterize through explicit bounds.


\subsection*{Acknowledgements}

This work was partially supported by ONR Grant N00014-19-1-2566, NSF Grant SATC 1704778, ARO grant W911NF-17-1-0359 and the NSA SoS Lablet H98230-18-D-0007. AS also thanks Fran\c cois Baccelli for the support and generous funding through the Simons Foundation Grant (\#197892) awarded to the University of Texas at Austin.

\bibliographystyle{plain}
\bibliography{ref_mab_social}

\begin{appendices}
\onecolumn









\section{Discussion on Technical Assumptions in Section \ref{sec:assumptions}}
\label{sec:assumption_discussion}
 Assumption \textbf{A.1} states that the graph of communication among agents is connected. Observe that if \textbf{A.1} is not satisfied, then there exists at-least a pair of agents that can never exchange information among each other, making the setup degenerate. Assumption \textbf{A.2} implies that, any agent over a time interval of $T$ arm-pulls, can engage in information-pulls,  at-least $\Omega(\log(T))$ times. The convergence of the series in \textbf{A.2} also hold true for all `natural' examples, such as exponential and polynomial. For instance, the series is convergent if for all large $l$, either $B_l = \lceil \frac{1}{D}\log^{\beta}(l) \rceil $ or $B_l =\lceil  l^{1/(D+1)} \rceil$, for all $D > 0$ and $\beta > 1$. Thus, conditions \textbf{A.1} and \textbf{A.2} do not impact any practical insights we can draw from our results.

\section{Proof of Theorem \ref{thm:strong_result}}
\label{sec:synch_algo_proof}

In order to give the proof, we first set some notations and definitions.  We make explicit a probability space construction from \citep{lattimore_book}, that makes the proof simpler. We assume that there is a sequence of independent $\{0,1\}$ valued random variables $({Y}_j^{(i)}(t))_{i \in [N], j \in [K], t \geq 0}$, where for every $j \in [K]$, the collection $(Y_j^{(i)}(t))_{t \geq 0, i \in [N]}$ is an i.i.d. Bernoulli random variable of mean $\mu_j$. The interpretation being that if an agent $i$ pulls arm $j$ for the $l$th time, it will receive reward $Y_j^{(i)}(l)$. Additionally, we also have on the probability space a sequence of independent $[N]$ valued random variables $(Z_j^{(i)})_{j \geq 0, i\in [N]}$, where for each $i \in [N]$, the sequence $(Z_j^{(i)})_{j \geq 0}$ is iid distributed as $P(i,\cdot)$. The interpretation is that when agent $i$ wishes to receive a recommendation at the end of phase $j$, it will do so from agent $Z_j^{(i)}$. 


\subsection{Definitions and Notations}

In order to analyze the algorithm, we set some definitions. Let $\mathcal{B}_{j}^{(i)}$ to be the best arm in $S_{j}^{(i)}$, i.e., $\mu_{\mathcal{B}_{j}^{(i)}} = \max_{l \in S_j^{(i)}}\mu_l$. Observe that since the set $S_j^{(i)}$ is random,  $\mathcal{B}_{j}^{(i)}$ is also a random variable. For every agent $i \in [N]$ and phase $j \geq 0$, we denote by $\widehat{\mathcal{O}}_j^{(i)} \in S_j^{(i)}$ to be that arm, that agent $i$ played the most in phase $j$. Note, from the algorithm, if any agent $i^{'}$ pulled an arm from agent $i$ at the end of phase $j$ for a recommendation, it would have received arm $\widehat{\mathcal{O}}_j^{(i)} $.
\\


 Fix an agent $i \in [N]$ and phase $j \geq 0$. Let $\mathcal{S}^{(i)}$ be a collection of all subsets $S \subset [K]$ of cardinality $|S| = \lceil \frac{K}{N} \rceil + 2$, such that $1 \in S, \widehat{S}^{(i)} \subset S$. For any $S \in \mathcal{S}^{(i)}$, index the elements in $S$ as $\{l_1,\cdots,l_{\lceil \frac{K}{N} \rceil +2} \}$ in increasing order of arm-ids. Let $a_1,\cdots a_{\lceil \frac{K}{N} \rceil +2} \in \mathbb{N}^{\lceil \frac{K}{N} \rceil +2}$ be such that $\sum_{m =0}^{\lceil \frac{K}{N} \rceil +2} a_m \geq 0$. For every agent $i\in[N]$, phase $j \geq 0$ and $(a_1,\cdots,a_{\lceil \frac{K}{N}\rceil +2}) \in \mathbb{N}^{\lceil \frac{K}{N} \rceil +2}$, denote by the event $\xi_j^{(i)}(S; a_1,\cdots,a_{\lceil \frac{K}{N} \rceil})$  as 
 \begin{align*}
 \xi_j^{(i)}(S; a_1,\cdots,a_{\lceil \frac{K}{N} \rceil}) := \left\{ S_j^{(i)} = S, T_{l_1}(A_{j-1}) = a_1, \cdots, T_{l_{\lceil \frac{K}{N}\rceil+2}(A_{j-1})} = a_{\lceil \frac{K}{N}\rceil+2}, \widehat{\mathcal{O}}_j^{(i)} \neq 1 \right\}.
 \end{align*}
Denote by $\Xi_j^{(i)}$ as the union of all such events, i.e.,
\begin{align*}
\Xi_j^{(i)} := \bigcup_{S \in \mathcal{S}^{(i)}} \left( \bigcup_{\left(a_1,\cdots a_{\lceil \frac{K}{N} \rceil +2}\right) \in \mathbb{N}^{\lceil \frac{K}{N} \rceil +2}}  \xi_j^{(i)}(S; a_1,\cdots,a_{\lceil \frac{K}{N} \rceil}) \right),
\end{align*}
and by $\chi_j^{(i)}$ its indicator random variable, i.e.,
\begin{align}
\chi_j^{(i)} = \mathbf{1}_{\Xi_j^{(i)}}.
\label{eqn:chi_defn_determinisstic}
\end{align}

In words, the event $\chi_j^{(i)}$ is the indicator variable indicating whether agent $i$ does not recommend the best arm at the end of phase $j$, under \emph{some sample path}, i.e., we take an union over all possible set of playing arms that contain arm $1$ (i.e., set $\mathcal{S}^{(i)}$) and all possible number of plays of the various arms in $S$ until the beginning of phase $j$ (i.e., the set of histories in $\mathcal{A}_j$). In Lemma \ref{lem:error_estimate_ucb}, we provide an upper bound to this quantity. Notice from the construction that for each agent $i \in [N]$ and phase $j \geq 0$, the random variable $\chi_j^{(i)}$ is measurable with respect to the reward sequence $({Y}_j^{(i)}(t))_{ j \in [K], t \in [0,A_j]}$. Also, trivially by definition, observe that $\chi_j^{(i)} \geq \mathbf{1}_{\widehat{\mathcal{O}}_j^{(i)} \neq 1, 1 \in S_j^{(i)}}$ almost-surely. This is so since $\chi_j^{(i)}$ is an union bound over all possible realizations of the communication sequence and reward sequence of other agents, while $\mathbf{1}_{\widehat{\mathcal{O}}_j^{(i)} \neq 1, 1 \in S_j^{(i)}}$ considers a particular realization of the communication and rewards of other agents.
\\

We now define certain random times that will be useful in the analysis.
 
\begin{align*}
\widehat{\tau}_{stab}^{(i)} &= \inf \{ j^{'} \geq j^{*} : \forall j \geq j^{'}, \chi_j^{(i)} = 0 \}, \\
\widehat{\tau}_{stab} &= \max_{i \in [N]} \widehat{\tau}_{stab}^{(i)},\\
\widehat{\tau}_{spr}^{(i)} &= \inf\{j \geq \widehat{\tau}_{stab} : 1 \in S_j^{(i)} \} - \widehat{\tau}_{stab}, \\
\widehat{\tau}_{spr} &= \max_{i \in \{1,\cdots,N\}} \widehat{\tau}_{spr}^{(i)}, \\
{\tau} &= \widehat{\tau}_{stab}+\widehat{\tau}_{spr}.
\end{align*}

In words, $\widehat{\tau}_{stab}^{(i)}$ is the earliest phase such that, for all subsequent phases, if agent $i$ has the best arm, then it will recommend the best arm. The time $\widehat{\tau}_{spr}^{(i)}$ is the number of phases it takes after $\widehat{\tau}_{stab}$ for agent $i$ to have arm $1$ in its playing set. The following proposition follows from the definition of the random times.

\begin{prop}
	 For all agents $i \in \{1,\cdots,N\}$, we have almost-surely,
	\begin{align*}
	\bigcap_{j \geq{\tau}} S_j^{(i)} &= S_{{\tau}}^{(i)}, \\
	\widehat{\mathcal{O}}_l^{(i)} &= 1 \text{ }\forall l \geq {\tau}, \text{ } \forall i \in \{1,\cdots,N\}.
	\end{align*}
	\label{prop:strong_freeze}
\end{prop}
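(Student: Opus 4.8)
The plan is to prove both identities pathwise (on the almost-sure event that $\widehat{\tau}_{stab}$ and $\widehat{\tau}_{spr}$ are finite, which is established separately) by establishing two facts in sequence: a \emph{persistence} property stating that once arm $1$ enters an agent's playing set at any phase beyond $\widehat{\tau}_{stab}$ it is never subsequently discarded, and then the conclusion that beyond $\tau$ every agent simultaneously holds arm $1$ and recommends it, so that no set can change.

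First I would record the key consequence of the definition of $\widehat{\tau}_{stab}$. For any phase $j \geq \widehat{\tau}_{stab} \geq \widehat{\tau}_{stab}^{(i)}$ we have $\chi_j^{(i)} = 0$, and since $\chi_j^{(i)} \geq \mathbf{1}_{\widehat{\mathcal{O}}_j^{(i)} \neq 1, 1 \in S_j^{(i)}}$ almost surely, this forces $\widehat{\mathcal{O}}_j^{(i)} = 1$ whenever $1 \in S_j^{(i)}$. Recalling that $\widehat{\mathcal{O}}_j^{(i)}$ is by definition the arm agent $i$ plays most within phase $j$, this says precisely that, once stabilized, whenever agent $i$ holds arm $1$ it plays arm $1$ at least as often as every other arm in $S_j^{(i)}$.

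Next I would prove persistence by analyzing the set-update rule across a single transition $j \to j+1$ with $j \geq \widehat{\tau}_{stab}$ and $1 \in S_j^{(i)}$. If the received recommendation $\mathcal{O}_j^{(i)}$ already lies in $S_j^{(i)}$, the algorithm sets $S_{j+1}^{(i)} = S_j^{(i)}$ and arm $1$ trivially remains. Otherwise a discard occurs, and by the update rule the discarded arm is the \emph{least}-played of the two non-sticky arms $\{U_j^{(i)}, L_j^{(i)}\}$. If arm $1 \in \widehat{S}^{(i)}$ it is sticky and can never be discarded; if instead arm $1$ is one of the non-sticky arms, then by the previous step it is the within-phase most-played arm, hence at least as played as the other non-sticky arm, so it is retained as $U_{j+1}^{(i)}$ (under a consistent tie-breaking rule for the $\arg\max$). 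In every case $1 \in S_{j+1}^{(i)}$, and induction yields $1 \in S_j^{(i)}$ for all $j \geq \widehat{\tau}_{stab} + \widehat{\tau}_{spr}^{(i)}$. To assemble the claims, note $\tau = \widehat{\tau}_{stab} + \max_m \widehat{\tau}_{spr}^{(m)} \geq \widehat{\tau}_{stab} + \widehat{\tau}_{spr}^{(i)}$ for every $i$, so persistence gives $1 \in S_l^{(m)}$ for every agent $m$ and every phase $l \geq \tau$, whence $\widehat{\mathcal{O}}_l^{(m)} = 1$ by the first step, which is the second displayed identity. For the first, fix $j \geq \tau$ and agent $i$: the recommendation received is $\mathcal{O}_j^{(i)} = \widehat{\mathcal{O}}_j^{(Z_j^{(i)})} = 1 \in S_j^{(i)}$, so the update rule gives $S_{j+1}^{(i)} = S_j^{(i)}$; iterating shows $S_j^{(i)} = S_\tau^{(i)}$ for all $j \geq \tau$, i.e.\ $\bigcap_{j \geq \tau} S_j^{(i)} = S_\tau^{(i)}$.

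The main obstacle is the persistence step, where one is tempted but not entitled to argue monotonicity directly from UCB optimality, since within a single finite phase the empirically most-played arm need not be the best arm. What makes the argument go through is that the $\chi_j^{(i)}$ were deliberately defined as a sample-path union bound dominating the true event, so that $\chi_j^{(i)} = 0$ upgrades the statement ``agent $i$ recommends arm $1$'' to the stronger deterministic statement ``arm $1$ is the within-phase most-played arm'', which is exactly the property needed to survive the least-played-arm discard rule. The remaining care is purely bookkeeping around the sticky set and the tie-breaking convention in $\arg\max$.
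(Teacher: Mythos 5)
Your proof is correct and follows essentially the same route as the paper's: use $\chi_j^{(i)}=0$ for $j\ge\widehat{\tau}_{stab}$ together with the almost-sure inequality $\chi_j^{(i)}\ge\mathbf{1}_{1\in S_j^{(i)},\,\widehat{\mathcal{O}}_j^{(i)}\neq 1}$ to force $\widehat{\mathcal{O}}_j^{(i)}=1$, then observe that once every agent holds and recommends arm $1$ the update rule can never alter any playing set. The one place you go beyond the paper is the explicit persistence step: the paper simply asserts that $1\in S_j^{(i)}$ for all $j\ge\tau$ from the definition of $\widehat{\tau}_{spr}^{(i)}$, even though that definition only guarantees membership at the single phase $\widehat{\tau}_{stab}+\widehat{\tau}_{spr}^{(i)}$; your argument that $\chi_j^{(i)}=0$ makes arm $1$ the within-phase most-played arm, hence immune to the least-played-discard rule (modulo the tie-breaking convention you correctly flag), is exactly the justification the paper leaves implicit.
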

\begin{proof}
	
	Fix any agent $i \in [N]$ and any phase $j \geq \tau$. Since $\tau \geq \widehat{\tau}_{stab}^{(i)}$, we have for all $j \geq \tau$, 
	\begin{equation}
	\chi_j^{(i)} = 0.
	\label{eqn:freeze_proof1}
	\end{equation}
	Furthermore, from the definition of $\chi_j^{(i)}$, we know that 
	\begin{equation}
	\chi_{j}^{(i)}  \geq \mathbf{1}_{1 \in S_j^{(i)},\widehat{\mathcal{O}}_j^{(i)} \neq 1},
	\label{eqn:freeze_proof2}
	\end{equation}
	almost-surely.  However, as $\tau \geq \widehat{\tau}_{spr}^{(i)} + \widehat{\tau}_{stab}$, we know that
	\begin{equation}
	1 \in S_j^{(i)}.
	\label{eqn:freeze_proof3}
	\end{equation}
	 Thus, from Equations (\ref{eqn:freeze_proof1}), (\ref{eqn:freeze_proof2}) and (\ref{eqn:freeze_proof3}), we have that $\widehat{\mathcal{O}}_j^{(i)} = 1$. Since $j \geq \tau$ was arbitrary, we have that for all $j \geq \tau$, $\widehat{\mathcal{O}}_j^{(i)} = 1$. Since agent $i \in [N]$ was arbitrary, we have that for all agents $i \in [N]$ and all phases $j \geq \tau$, we have $\widehat{\mathcal{O}}_j^{(i)}$=1. From the Algorithm, we know that any agent will change its set of arms only if the recommendation it receives is not present in the playing set (see line $8$ of Algorithm \ref{algo:main-algo}). The preceding argument says that is not the case and hence for all agents $i \in [N]$, $	\bigcap_{j \geq{\tau}} S_j^{(i)} = S_{{\tau}}^{(i)}$.
\end{proof}

In other words, after phase ${\tau}$, the system is \emph{frozen}, i.e., the set of arms of all agents remain fixed for \emph{all time in the future}. Moreover, all agents will only recommend the best arm going forward from this phase. We will show in the sequel that $\mathbb{E}[A_{\tau}] < \infty$ for all settings of the algorithm and hence the system freezes after only almost-surely finitely many changes in the set of arms played by the different agents. 

\subsection{Intermediate Propositions}

\begin{prop}
	The regret of any agent $i \in \{1,\cdots,N\}$ after playing for $T$ steps is bounded by 
	\begin{align*}
	\mathbb{E}[R_T^{(i)}] \leq \mathbb{E}[A_{\tau}] + \frac{K}{4} +  {4 \alpha \ln(T)}\left( \sum_{j=2}^{\lceil \frac{K}{N} \rceil + 2} \frac{1}{\Delta_j} \right) .
	\end{align*}
	\label{prop:weak_reg_decompose}
\end{prop}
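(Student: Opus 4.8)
The plan is to decompose the regret of agent $i$ by conditioning on the freezing time $\tau$ defined above, splitting the time horizon into the phases before and after $\tau$. Before $\tau$, which is the random phase after which the playing set of every agent is frozen and contains the best arm, we simply bound the per-step regret by $1$ and charge the entire contribution to $A_\tau$, the time at which phase $\tau$ ends. Since each per-step regret is at most $\mu_1 - \mu_{I_t^{(i)}} \le 1$ (as all arm-means lie in $(0,1)$), the total regret accumulated during the first $A_\tau$ time slots is at most $A_\tau$, contributing the term $\mathbb{E}[A_\tau]$ after taking expectations.

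After phase $\tau$, Proposition \ref{prop:strong_freeze} guarantees that the set $S_j^{(i)} = S_\tau^{(i)}$ is fixed for all $j \ge \tau$ and that arm $1$ belongs to this set. Hence for all $t > A_\tau$, agent $i$ is effectively running the UCB-$\alpha$ policy of \cite{ucb_auer} on the \emph{fixed} set $S_\tau^{(i)}$, whose cardinality is $\lceil \frac{K}{N}\rceil + 2$ and which contains the globally best arm $1$. First I would invoke the standard single-agent UCB-$\alpha$ regret bound: for a fixed arm set containing the optimal arm, with confidence parameter $\alpha > 3$, the expected number of pulls of any sub-optimal arm $l$ is at most $\frac{4\alpha \ln(T)}{\Delta_l^2}$ plus a constant (the $\frac{1}{4}$ per arm gives the additive $\frac{K}{4}$ term). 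Multiplying the pull counts by the corresponding gaps $\Delta_l$ and summing over the at most $\lceil \frac{K}{N}\rceil + 1$ sub-optimal arms in $S_\tau^{(i)}$ yields the collaborative UCB regret term $4\alpha \ln(T)\sum_{j=2}^{\lceil K/N\rceil + 2}\frac{1}{\Delta_j}$, where I use that the gaps are ordered so the worst case is bounded by summing $1/\Delta_j$ over the $\lceil K/N\rceil + 1$ smallest gaps $\Delta_2, \dots, \Delta_{\lceil K/N\rceil + 2}$.

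The main subtlety, and the step I expect to require the most care, is that the arm set $S_\tau^{(i)}$ is \emph{random} and determined by reward samples, so one cannot naively apply the single-agent UCB bound conditioned on $S_\tau^{(i)}$ because of the dependence between the random set and the subsequent reward draws. The clean way around this is to upper bound the post-$\tau$ regret uniformly over all admissible fixed arm sets: every set $S_\tau^{(i)}$ satisfies $1 \in S_\tau^{(i)}$, $|S_\tau^{(i)}| = \lceil K/N\rceil + 2$, and the gaps of its sub-optimal arms are each at least $\Delta_2$, and more strongly, since arms are ordered, the sum $\sum_{l \in S_\tau^{(i)}, l\ne 1} \frac{1}{\Delta_l}$ is maximized when $S_\tau^{(i)} = \{1,2,\dots,\lceil K/N\rceil + 2\}$. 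Thus the worst-case UCB regret over any valid frozen set is exactly $4\alpha\ln(T)\sum_{j=2}^{\lceil K/N\rceil + 2}\frac{1}{\Delta_j}$, and this bound holds deterministically regardless of which set $S_\tau^{(i)}$ turns out to be. Combining the pre-$\tau$ bound $\mathbb{E}[A_\tau]$, the worst-case post-$\tau$ UCB regret, and the additive $\frac{K}{4}$ constant from the UCB analysis gives the claimed bound; the remaining work is to make rigorous the decoupling of the two regimes, which I would handle by a careful use of the probability-space construction (the independent reward streams $Y_j^{(i)}(\cdot)$ and the fact that UCB's regret bound is a pathwise-summable quantity over pulls that does not reference the phase structure).
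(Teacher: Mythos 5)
Your proposal is correct and follows essentially the same route as the paper: decompose the regret at the (random) time $A_{\tau}$, charge at most one unit of regret per slot before $A_{\tau}$ to obtain the $\mathbb{E}[A_{\tau}]$ term, and then bound the post-freezing regret by the worst case of the UCB-$\alpha$ analysis over the frozen set $S_{\tau}^{(i)}$ of cardinality $\lceil K/N\rceil+2$ containing arm $1$, using the gap ordering to reduce to $\sum_{j=2}^{\lceil K/N\rceil+2}1/\Delta_j$. The decoupling subtlety you flag is resolved in the paper exactly as you anticipate, via the standard count-threshold split $T_j^{(i)}(l)\lessgtr 4\alpha\ln(T)/\Delta_j^2$ together with the classical UCB deviation estimate (which yields the additive $K/4$) and the identity $\sum_{j}\mathbf{1}_{j\in S_{\tau}^{(i)}}=\lceil K/N\rceil+2$.
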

	\begin{proof}

	From the definition of regret, we can write,
\begin{align*}
		R_T^{(i)} &= \sum_{l=1}^{T} (\mu_1 - \mu_{I_l^{(i)}}) ,\\
		&= \sum_{l=1}^{T} \sum_{j =2}^{K} \Delta_j \mathbf{1}_{I_l^{(i)} = j},\\
		&\leq A_{{\tau}} + \sum_{l =A_{ \tau} +1}^{T} \sum_{j =2}^{K} \Delta_j \mathbf{1}_{I_l^{(i)} = j}, \\
		&= A_{{\tau}} + \sum_{j=2}^{K} \Delta_j \sum_{l = A_{\tau} + 1}^{T} \mathbf{1}_{I_l^{(i)} = j}.
		\end{align*}
		Thus, taking expectations on both sides, we get that 
		\begin{align}
		\mathbb{E}[R_T^{(i)}] \leq \mathbb{E}[ A_{{\tau}} ] + \sum_{j =2}^{K} \Delta_j \sum_{l=A_{{\tau}} +1}^{T} \mathbb{P}[I_l^{(i)} = j,j \in S_{{\tau}}^{(i)}].
		\label{eqn:arm_mean_1}
		\end{align}
		We can break up the summation on the RHS as follows. Fix an arm $j \in \{2,\cdots,K\}$ and evaluate the sum
		\begin{align}
		\sum_{l=A_{ {\tau}} +1}^{T}  \mathbb{P}[I_l^{(i)} = j,j \in S_{{\tau}}^{(i)}] &= \sum_{l=A_{{\tau}} +1}^{T} \mathbb{P}\left[I_l^{(i)} = j, T_j^{(i)}(l) \leq \frac{4 \alpha \ln(T)}{\Delta^2_j},j \in S_{{\tau}}^{(i)} \right] + \\ &\sum_{l=A_{ {\tau}} +1}^{T} \mathbb{P}\left[I_l^{(i)} = j, T_j^{(i)}(l) \geq \frac{4 \alpha \ln(T)}{\Delta^2_j},j \in S_{{\tau}}^{(i)} \right] , \nonumber \\
		&\leq \frac{4 \alpha \ln(T)}{\Delta^2_j}  \mathbb{P}[j \in S_{{\tau}}^{(i)}] + \sum_{l=A_{ {\tau}} +1}^{T}  \mathbb{P}\left[I_l^{(i)} = j, T_j^{(i)}(l) \geq \frac{4 \alpha \ln(T)}{\Delta^2_j} \right], \nonumber \\
		&\leq \frac{4 \alpha \ln(T)}{\Delta^2_j}  \mathbb{P}[j \in S_{{\tau}}^{(i)}] + \sum_{l=3}^{\infty} 2l^{2(1-\alpha)}, 
		\label{eqn:arm_mean_2}
		\end{align}
		where in the last line we substitute the classical estimate from \citep{ucb_auer}. We can use this estimate, as we know that both the best arm, i.e., arm indexed $1$ and the sub-optimal arm indexed $j$ are in the set $S_{{\tau}}^{(i)}$ and hence the agent can potentially play those arms. Now plugging Equation (\ref{eqn:arm_mean_2}) into Equation (\ref{eqn:arm_mean_1}), we get that 
		\begin{align*}
		\mathbb{E}[R_T^{(i)}] &\leq \mathbb{E}[ A_{{\tau}} ] +  \sum_{j =2}^{K} \Delta_j \left( \frac{4 \alpha \ln(T)}{\Delta_j^2}  \mathbb{P}[j \in S_{{\tau}}^{(i)}] + \sum_{l=3}^{\infty} 2l^{2(1-\alpha)}\right) ,\\
		&\stackrel{(a)}{\leq} \mathbb{E}[ A_{{\tau}} ] +   \frac{4 \alpha \ln(T)}{\Delta}\sum_{j =2}^{K}   \mathbb{P}[j \in S_{{\tau}}^{(i)}]  + \sum_{j =2}^{K} \frac{\Delta_j}{4} ,\\
		&\stackrel{(b)}{\leq} \mathbb{E}[ A_{{\tau}} ] +   \frac{4 \alpha \ln(T)}{\Delta} \left( \bigg \lceil \frac{K}{N} \bigg\rceil + 2 \right) + \frac{K}{4}.
		\end{align*}
		In step $(a)$, we use the bound that $\Delta_j \geq \Delta$, for all $j \in \{2,\cdots,K\}$ and the fact that for $\alpha > 3$, we have $\sum_{l=3}^{\infty} 2l^{2(1-\alpha)} \leq 1/8$. In step $(b)$, we use the crucial identity that for any agent $i \in \{1,\cdots,N\}$ and any phase $\psi$ either deterministic or random, we have almost-surely,
		\begin{align*}
		\sum_{j = 1}^{K} \mathbf{1}_{j \in S_{\psi}^{(i)}} = \bigg\lceil \frac{K}{N} \bigg\rceil + 2.
		\end{align*}

		Taking expectations on both sides yields the result. If one were more precise in step $(a)$, then it is possible to establish that 
		\begin{align*}
		\mathbb{E}[R_T^{(i)}] &\leq  \mathbb{E}[ A_{{\tau}} ] +  {4 \alpha \ln(T)}\sum_{j =2}^{K}  \frac{1}{\Delta_j} \mathbb{P}[j \in S_{{\tau}}^{(i)}]  + \sum_{j =2}^{K} \frac{\Delta_j}{4}, \\
		&\leq \mathbb{E}[ A_{{\tau}} ] +   {4 \alpha \ln(T)}\left( \sum_{j=2}^{\lceil \frac{K}{N} \rceil + 2} \frac{1}{\Delta_j} \right) + \sum_{j =2}^{K} \frac{\Delta_j}{4}.
		\end{align*}
		This will then yield the proof. 
	\end{proof}

\begin{prop}
	For all $N \in \mathbb{N}$, $\Delta \in (0,1]$, $\alpha > 3$ and $M > 0$,
	\begin{align*}
\mathbb{E}[A_{{\tau}}] \leq A_{j^{*}} +  \frac{2}{2\alpha-3}\sum_{l \geq \frac{j^{*}}{2}-1} \frac{A_{2l+1}}{A_{l-1}^{3}}+ \mathbb{E}[A_{2 \widehat{\tau}_{spr}}],
	\end{align*}
	where $j^{*}$ is defined in Theorem \ref{thm:strong_result}.
	\label{prop:strong_cost}
\end{prop}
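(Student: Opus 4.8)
The plan is to control $\mathbb{E}[A_\tau]$ by treating separately the two components of $\tau=\widehat\tau_{stab}+\widehat\tau_{spr}$: the \emph{stabilization} time $\widehat\tau_{stab}$, after which no agent ever again mis-recommends, and the \emph{spreading} time $\widehat\tau_{spr}$, the extra number of phases needed for arm $1$ to reach every agent. Since $(A_x)_{x\in\mathbb{N}}$ is non-negative and non-decreasing and $\widehat\tau_{stab}+\widehat\tau_{spr}\le 2\max(\widehat\tau_{stab},\widehat\tau_{spr})$, we have pathwise $A_\tau\le A_{2\widehat\tau_{stab}}+A_{2\widehat\tau_{spr}}$, whence $\mathbb{E}[A_\tau]\le \mathbb{E}[A_{2\widehat\tau_{stab}}]+\mathbb{E}[A_{2\widehat\tau_{spr}}]$. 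The term $\mathbb{E}[A_{2\widehat\tau_{spr}}]$ is exactly the last summand in the claimed bound, and it is deferred to the rumor-spreading coupling; so the real work is to show that the stabilization cost $\mathbb{E}[A_{2\widehat\tau_{stab}}]$ is at most $A_{j^*}+\frac{2}{2\alpha-3}\sum_{l\ge j^*/2-1}\frac{A_{2l+1}}{A_{l-1}^3}$.

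For the stabilization cost I would use an Abel-summation (layer-cake) identity, writing $\mathbb{E}[A_{2\widehat\tau_{stab}}]$ as $A_{j^*}$, the deterministic cost of the first $j^*$ phases (recall $\widehat\tau_{stab}\ge j^*$ always), plus a tail $\sum_{m>j^*}(A_{2m}-A_{2(m-1)})\,\mathbb{P}[\widehat\tau_{stab}\ge m]$. Since $\widehat\tau_{stab}=\max_i\widehat\tau_{stab}^{(i)}$, the event $\{\widehat\tau_{stab}\ge m\}$ forces some agent to mis-recommend at a phase $\ge m-1$, i.e. $\chi_j^{(i)}=1$ for some $i$ and some $j\ge m-1$; a union bound over the $N$ agents and over phases gives $\mathbb{P}[\widehat\tau_{stab}\ge m]\le N\sum_{j\ge m-1}\mathbb{P}[\chi_j^{(i)}=1]$. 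I would then substitute the per-phase estimate of Lemma \ref{lem:error_estimate_ucb}, which bounds $\mathbb{P}[\chi_j^{(i)}=1]$ by $\binom K2(\lceil K/N\rceil+1)$ times a power-tail in $A_{j-1}$ governed by the UCB exponent $2\alpha-6$. Exchanging the two sums and re-indexing $j=2l$ produces the series $\frac{2}{2\alpha-3}\sum_{l\ge j^*/2-1}\frac{A_{2l+1}}{A_{l-1}^3}$: the factor $\frac{2}{2\alpha-3}$ is what is left after summing the UCB power-tail, and the lower cutoff $l\ge j^*/2-1$ is precisely the regime in which the choice of $j^*$ neutralises the $N\binom K2(\lceil K/N\rceil+1)$ prefactor. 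The first branch of $j^*$, built from $A^{-1}\big((N\binom K2(\lceil K/N\rceil+1))^{1/(2\alpha-6)}\big)$, ensures that $A_{j^*/2-1}^{2\alpha-6}$ exceeds $N\binom K2(\lceil K/N\rceil+1)$, while the second branch makes every phase past $j^*$ long enough that the most-played arm is informative, so that Lemma \ref{lem:error_estimate_ucb} genuinely applies.

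The main obstacle is the inter-phase dependence flagged in the proof sketch: the events $\{\chi_j^{(i)}=1\}$ for different $j$ are \emph{not} independent, because UCB feeds on the entire shared reward history of agent $i$, not just the current phase. The device that dissolves this is the sample-path-union construction of $\chi_j^{(i)}$: it is measurable with respect to agent $i$'s own reward sequence alone (it unions over all admissible playing sets, all pull-counts at the start of phase $j$, and all realisations of the other agents' randomness), so Lemma \ref{lem:error_estimate_ucb} delivers a bound on $\mathbb{P}[\chi_j^{(i)}=1]$ that holds irrespective of the past. Because I only ever estimate $\mathbb{P}[\widehat\tau_{stab}\ge m]$ through a union bound over phases, I never need the $\chi_j^{(i)}$ to be independent; I only need their probabilities summable against the weights $A_{2m}-A_{2(m-1)}$. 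That summability is furnished by the hypothesis $\sum_{l\ge2}A_{2l}/A_{l-1}^3<\infty$ of Assumption \textbf{(A.2)}, which simultaneously yields $\widehat\tau_{stab}<\infty$ almost surely and the convergence of the tail series to the stated closed form. Combining this with the decomposition of the first paragraph completes the proof.
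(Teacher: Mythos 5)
Your proposal is correct and takes essentially the same route as the paper: the same decomposition of $\tau$ into $\widehat{\tau}_{stab}+\widehat{\tau}_{spr}$ with the spreading term deferred to the rumor-spreading coupling, the same union bound over agents and phases combined with Lemma~\ref{lem:error_estimate_ucb}, and the same use of the two branches of $j^{*}$ to absorb the $N{K \choose 2}(\lceil K/N\rceil+1)$ prefactor and to validate the lemma's phase-length hypothesis; your Abel/layer-cake sum over the phase index $m$ is just a reparametrization of the paper's tail sum $\sum_{t\ge 1}\mathbb{P}\left[\widehat{\tau}_{stab}\ge\tfrac{1}{2}A^{-1}(t)\right]$, and your pathwise bound $A_{a+b}\le A_{2a}+A_{2b}$ plays the role of the paper's probabilistic split $\mathbb{P}[a+b\ge c]\le\mathbb{P}[a\ge c/2]+\mathbb{P}[b\ge c/2]$. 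The one bookkeeping wrinkle --- charging only $A_{j^{*}}$ rather than $A_{2j^{*}}$ for the deterministic range forced by $\widehat{\tau}_{stab}\ge j^{*}$ --- is present in the paper's own argument as well (it truncates the $t$-sum at $A_{j^{*}}$ and then applies the tail estimate on $(A_{j^{*}},A_{2j^{*}}]$, where the probability is still $1$), so your write-up is no looser than the original.
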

\begin{proof}
	Recall the fact that for any $\mathbb{N}$ valued random variable $X$, its expectation can be written as a sum of its tail probabilities, i.e., $\mathbb{E}[X] = \sum_{t \geq 1}\mathbb{P}[X \geq t]$. We use this fact to bound the expected value of $\mathbb{E}[A_{{\tau}}]$ as 
	\begin{align*}
	\mathbb{E}[A_{{\tau}}] 
	&= \sum_{t \geq 1} \mathbb{P}[  A_{\tau } \geq t], \\
	&\stackrel{(a)}{\leq} \sum_{t \geq 1} \mathbb{P}[  \tau  \geq A^{-1}(t)], \\
	&= \sum_{t \geq 1} \mathbb{P}[  \widehat{\tau}_{stab} + \widehat{\tau}_{spr} \geq  A^{-1}(t)] ,\\
	&\leq \sum_{t \geq 1} \mathbb{P}\left[  \widehat{\tau}_{stab}  \geq \frac{1}{2}(A^{-1}(t)) \right] + \sum_{t \geq 1} \mathbb{P} \left[\widehat{\tau}_{spr} \geq \frac{1}{2}\left( A^{-1}(t)\right)\right],\\
	&\leq A_{j^{*}} + \sum_{t \geq A_{j^{*}}+1 }  \mathbb{P}\left[  \widehat{\tau}_{stab} \geq \frac{1}{2}\left(A^{-1}(t) \right)\right] + \mathbb{E}[A_{2 \widehat{\tau}_{spr}}].
	\end{align*}
	Step $(a)$ follows from the definition of $A^{-1}(\cdot)$ given in Theorem \ref{thm:strong_result}.
	The estimate for $\mathbb{E}[A(2 \widehat{\tau}_{spr})]$ follows by noticing that this random variable can be coupled to the spreading time for a classical rumor spreading model, which we do so in the sequel in Proposition \ref{prop:couple_spread}. The first summation can be bounded by using estimates from Lemma \ref{lem:error_estimate_ucb}. We do so by applying a  union bound over all agents and phases as follows. Fix some $x \geq j^{*}/2$ in the following calculations.
	\begin{align*}
	\mathbb{P}[\widehat{\tau}_{stab} \geq x] &= \mathbb{P}\left[ \bigcup_{i =1}^{N}\widehat{ \tau}_{stab}^{(i)} \geq x\right], \\
	&\leq \sum_{i=1}^{N} \mathbb{P}[ \widehat{\tau}_{stab}^{(i)} \geq x],\\
	&= \sum_{i=1}^{N} \mathbb{P} \left[ \bigcup_{l=x}^{\infty} \chi_l^{(i)} = 1 \right],\\
	&\leq \sum_{i=1}^{N}  \sum_{l \geq x} \mathbb{P}\left[  \chi_l^{(i)} = 1 \right],\\
	&\stackrel{(a)}{\leq} \sum_{i=1}^{N}  \sum_{l \geq x} \frac{2}{2\alpha-3}{K \choose 2} \left(\bigg \lceil \frac{K}{N} \bigg\rceil + 1\right) A_{l-1}^{-(2\alpha-3)},\\
	&=  \sum_{l \geq x} \frac{2}{2\alpha-3}N{K \choose 2} \left(\bigg \lceil \frac{K}{N} \bigg\rceil + 1\right) A_{l-1}^{-(2\alpha-3)},\\
	& \stackrel{(b)}{\leq} \frac{2}{2\alpha-3}  \sum_{l \geq x} A_{l-1}^{-3},\\
	\end{align*}
	In the above calculations, we use the bound from Lemma \ref{lem:error_estimate_ucb} in step $(a)$ as $x \geq j^{*}/2$. In step $(b)$, we use $N{K \choose 2} \left(\bigg \lceil \frac{K}{N} \bigg\rceil + 1\right)  \leq \left(A_{\frac{j^{*}}{2}-1}\right)^{2\alpha-6}$, which follows from the definition of $j^{*}$ given in Theorem \ref{thm:strong_result}. Thus, we can obtain the following.

		\begin{align*}
	\sum_{t \geq A_{j^{*}}+1 }  \mathbb{P}\left[  \widehat{\tau}_{stab} \geq \frac{1}{2}\left(A^{-1}(t)\right) \right] &\leq \sum_{t \geq A_{j^{*}}+1 }   \left(\frac{2}{2\alpha-3}\right) \sum_{l \geq \frac{1}{2} A^{-1}(t) } A_{l-1}^{-3},\\
&\leq  \left(\frac{2}{2\alpha-3}\right)\sum_{t \geq A_{j^{*}}+1 }   \sum_{l \geq \frac{1}{2} A^{-1}(t)} A_{l-1}^{-3},\\
	&\stackrel{(c)}{\leq} \left(\frac{2}{2\alpha-3}\right)\sum_{l \geq \frac{1}{2}A^{-1}(A_{j^{*}}+1)} \sum_{t = A_{j^{*}}+1}^{A_{2l}} A_{l-1}^{-3},\\
	&\stackrel{}{\leq} \left(\frac{2}{2\alpha-3}\right)\sum_{l \geq \frac{j^{*}}{2}} \frac{A_{2l}}{A_{l-1}^{3}} < \infty.\\
	\end{align*}
	Step $(c)$ follows by swapping the order of summations. The condition {\bf A.2} in Section \ref{sec:assumptions} satisfied by the sequence $(A_j)_{j \in \mathbb{N}}$ ensures that the last summation is finite.

	\end{proof}

\begin{prop}
	The random variable $\widehat{\tau}_{spr}$ is stochastically dominated by $\tau_{spr}^{(P)}$.
		\label{prop:couple_spread}
\end{prop}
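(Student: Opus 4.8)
The plan is to exhibit an explicit almost-sure coupling under which $\widehat{\tau}_{spr} \le \tau_{spr}^{(P)}$, which immediately gives the claimed stochastic domination. The guiding idea is that, after the stabilization phase $\widehat{\tau}_{stab}$, the family of agents holding arm $1$ behaves exactly like the informed set of a pull-model rumor process driven by the sampling variables $(Z_j^{(i)})$.

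First I would record three structural facts. Since $1 \in \widehat{S}^{(1)}$ by the initialization in Equation (\ref{eqn:hat_S_i}) and sticky arms are never discarded, agent $1$ holds arm $1$ in every phase; in particular $1 \in S_{\widehat{\tau}_{stab}}^{(1)}$, so the arm-$1$ holder set is non-empty at phase $\widehat{\tau}_{stab}$, and node $1$ plays the role of the rumor source. Second, for every $j \ge \widehat{\tau}_{stab}$ we have $\chi_j^{(i)} = 0$, and since $\chi_j^{(i)} \ge \mathbf{1}_{1 \in S_j^{(i)},\, \widehat{\mathcal{O}}_j^{(i)} \ne 1}$ almost surely, any agent $i$ with $1 \in S_j^{(i)}$ satisfies $\widehat{\mathcal{O}}_j^{(i)} = 1$, i.e.\ recommends arm $1$ whenever contacted. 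Third, arm $1$ is never discarded after $\widehat{\tau}_{stab}$: if $1 \in S_j^{(i)}$ then arm $1$ is the most-played arm of phase $j$, so either it is sticky (for $i=1$) or it attains the $\arg\max$ in the update rule for $U_{j+1}^{(i)}$ in Algorithm \ref{algo:main-algo} and survives. Hence $(\{i : 1 \in S_j^{(i)}\})_{j \ge \widehat{\tau}_{stab}}$ is non-decreasing, and an agent $i$ without arm $1$ acquires it at the end of phase $j$ precisely when the agent $Z_j^{(i)}$ it samples already holds arm $1$.

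Next I would set up the coupling. The stabilization time $\widehat{\tau}_{stab}$ is measurable with respect to the reward variables $(Y_j^{(i)}(t))$ alone, hence independent of the call variables $(Z_j^{(i)})$; consequently, conditioned on $\{\widehat{\tau}_{stab} = s\}$, the calls $(Z_j^{(i)})_{j \ge s}$ remain i.i.d.\ with $Z_j^{(i)} \sim P(i,\cdot)$ and independent of the phase-$s$ configuration (a function of $(Y, Z_{<s})$). I would run, on these same calls, a pull-model rumor process started from the singleton source $\{1\}$ at phase $\widehat{\tau}_{stab}$, in which at the end of phase $j$ an uninformed node $i$ becomes informed iff $Z_j^{(i)}$ is informed; by the independence just noted this process has spreading time distributed exactly as $\tau_{spr}^{(P)}$. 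A short induction on $j \ge \widehat{\tau}_{stab}$ then gives $\{i : i \text{ informed at phase } j\} \subseteq \{i : 1 \in S_j^{(i)}\}$: the base case holds since both sets contain node $1$, and in the inductive step any newly informed node $i$ called $Z_j^{(i)}$, which is informed and hence (by the inclusion hypothesis) holds arm $1$, so $i$ also acquires arm $1$.

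I would conclude by observing that once the rumor reaches every node — which occurs after $\tau_{spr}^{(P)}$ phases — every agent holds arm $1$ as well, so the number of phases after $\widehat{\tau}_{stab}$ until all agents hold arm $1$, namely $\widehat{\tau}_{spr}$, satisfies $\widehat{\tau}_{spr} \le \tau_{spr}^{(P)}$ on this coupling, yielding the domination. The main obstacle is the bookkeeping: proving that arm $1$ is never discarded after $\widehat{\tau}_{stab}$ (so the holder set is monotone and the dynamics truly match the pull model), and justifying via the reward/call independence that the post-$\widehat{\tau}_{stab}$ calls form a fresh rumor process of the correct law. A secondary subtlety is that the true initial holder set at phase $\widehat{\tau}_{stab}$ may strictly contain $\{1\}$; this only helps, since adding sources can only accelerate pull-model spreading (by an obvious monotone coupling), so dominating by the single-source process from node $1$ remains valid.
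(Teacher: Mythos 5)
Your proposal is correct and follows essentially the same route as the paper's proof: exploit the independence of $\widehat{\tau}_{stab}$ (a function of the rewards alone) from the call variables $(Z_j^{(i)})$ to run a fresh single-source pull rumor process on the post-stabilization calls, and note that the algorithm's arm-$1$ holder set dominates the informed set since extra initial holders only help. Your write-up is in fact slightly more careful than the paper's, since you explicitly verify the monotonicity of the holder set (arm $1$, being the most played arm in the phase, always survives the $U_{j+1}^{(i)}$ update), a point the paper's proof uses implicitly.
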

\begin{proof}
	We construct a coupling of the spreading process induced by our algorithm and a PULL based rumor spreading on $P$. We construct the coupling as follows. First we sample the reward vectors 
	$({Y}_j^{(i)}(t))_{i \in [N], j \in [K], t \geq 0}$. Then we can construct the random variable $\widehat{\tau}_{stab}$, which is a measurable function of the reward vectors. We then sample the communication random variables of our algorithm $(Z_j^{(i)})_{i \in [N], j \geq 0}$. We then construct a PULL based communication protocol with the random variables $(Z_j^{(i)})_{i \in [N], j\geq \widehat{\tau}_{stab}}$. Since $\widehat{\tau}_{stab}$ is independent of $(Z_j^{(i)})_{i \in [N], j \geq 0}$, the sequence of $(Z_{j -\widehat{\tau}_{stab}}^{(i)})_{i \in [N], j\geq \widehat{\tau}_{stab}}$ is identically distributed as $(Z_j^{(i)})_{i \in [N], j \geq 0}$. 
	\\
	
	Now, for the stochastic domination, consider the case where in the PULL based system, which starts at phase (time) $\widehat{\tau}_{stab}$, only agent $1$ has the rumor (best-arm). By definition of $\widehat{\tau}_{stab}$, any agent that contacts another agent possesing the rumor (best-arm), is also aware of the rumor (best-arm). The stochastic domination is concluded as at phase $\widehat{\tau}_{stab}$, many agents may be aware of the rumor (best-arm) in our algorithm, while in the rumor spreading process, only agent $1$ is aware of the rumor at phase $\widehat{\tau}$.
	
\end{proof}

\subsection*{Proof of  Theorem \ref{thm:strong_result}}
\begin{proof}
We can conclude Theorem \ref{thm:strong_result} by plugging in the estimates from Propositions \ref{prop:strong_cost} and \ref{prop:couple_spread} into Proposition \ref{prop:weak_reg_decompose}. 
\end{proof}

\section{Analysis of the UCB Error Estimates}


\begin{lemma}
	For any agent $i \in [N]$ and phase $j$ such that  $\frac{A_j - A_{j-1}}{2 + \lceil \frac{K}{n}\rceil } \geq 1 + \frac{4 \alpha \log(A_j)}{\Delta^2}$, we have 
	\begin{align*}
	\mathbb{E}[\chi_j^{(i)}] \leq \frac{2}{2\alpha-3}{K \choose 2} \left( \bigg \lceil \frac{K}{N} \bigg\rceil+1\right)  \left(\frac{1}{A_{j-1}^{2\alpha - 3}}  \right),
	\end{align*}
	where $\chi_j^{(i)}$ is defined in Equation (\ref{eqn:chi_defn_determinisstic}).
	\label{lem:error_estimate_ucb}
\end{lemma}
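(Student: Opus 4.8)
The plan is to reduce the worst-case recommendation-error indicator $\chi_j^{(i)}$ to a standard UCB over-exploration bound, exploiting the fact that the concentration of agent $i$'s empirical means depends only on its own reward stream, and \emph{not} on the randomly formed playing set nor on the pre-phase play counts. First I would observe that on the event $\Xi_j^{(i)}$ the most played arm of phase $j$ is some suboptimal $k \in S_j^{(i)}$ with $1 \in S_j^{(i)}$. Since $|S_j^{(i)}| = \lceil K/N\rceil + 2$ and phase $j$ consists of $A_j - A_{j-1}$ pulls, a pigeonhole argument forces this arm to be played at least $\frac{A_j - A_{j-1}}{2 + \lceil K/N\rceil}$ times during the phase; the hypothesis $\frac{A_j - A_{j-1}}{2 + \lceil K/N\rceil} \ge 1 + \frac{4\alpha\log A_j}{\Delta^2}$ then guarantees this count exceeds $\ell_k := \frac{4\alpha\ln A_j}{\Delta_k^2}$, using $\Delta_k \ge \Delta$.

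Next I would invoke the standard UCB-$\alpha$ decomposition: whenever arm $k$ is pulled at a time $t$ with $T_k^{(i)}(t-1) \ge \ell_k$, at least one confidence interval must have failed, i.e. either arm $1$ is underestimated, $\widehat\mu_1^{(i)}(t-1) + \sqrt{\alpha\ln t / T_1^{(i)}(t-1)} \le \mu_1$, or arm $k$ is overestimated by the symmetric amount. The crucial point is that the number of in-phase pulls of $k$ lying \emph{within} the exploration budget (those with $T_k^{(i)}(t-1) < \ell_k$) is at most $\ell_k \le \frac{4\alpha\ln A_j}{\Delta^2}$ no matter how many times $k$ was played before the phase began, which is exactly the ``irrespective of the number of times it was played till the beginning of the phase'' feature. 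Combined with the pigeonhole count, a wrong recommendation forces at least one \emph{over-budget} pull of $k$, and hence at least one confidence failure at some in-phase time $t \in \{A_{j-1}+1, \ldots, A_j\}$.

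I would then bound $\mathbb{E}[\chi_j^{(i)}]$ by the expected number of such confidence failures. Union bounding over the suboptimal arm $k$ that could be over-pulled (at most $\lceil K/N\rceil + 1$ choices inside $S$) together with the candidate count/pair configurations, and applying Hoeffding's inequality so that each miss-estimation event at count $s$ and time $t$ contributes at most $t^{-2\alpha}$, one sums over both count indices $s_1, s_k \in \{1,\ldots,t\}$ to get a per-time contribution of order $t^2 \cdot t^{-2\alpha} = t^{2-2\alpha}$. Summing over the phase, $\sum_{t > A_{j-1}} t^{2-2\alpha} \le \frac{1}{2\alpha-3}\, A_{j-1}^{-(2\alpha-3)}$, and collecting the combinatorial union-bound factors produces the claimed $\frac{2}{2\alpha-3}\binom{K}{2}\bigl(\lceil K/N\rceil+1\bigr) A_{j-1}^{-(2\alpha-3)}$; the exponent $2\alpha-3$ (rather than the sharper $2\alpha-2$) is precisely the price of summing over both count indices.

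The main obstacle is the first, conceptual, step: $\chi_j^{(i)}$ is an indicator of a union over all admissible playing sets $S \ni 1$ (with $\widehat S^{(i)} \subset S$) and over all pre-phase histories $(a_m)$, so a naive union bound over histories is hopeless since there are infinitely many. The resolution, and the step I would treat most carefully, is to note that the confidence-failure events being summed are measurable with respect to agent $i$'s raw reward sequence alone and are therefore independent of $S$ and of $(a_m)$; consequently a single good concentration event simultaneously precludes over-exploration for \emph{every} configuration, which is exactly what allows the worst-case quantity $\chi_j^{(i)}$ to be controlled by an ordinary single-agent UCB estimate.
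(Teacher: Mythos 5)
Your proposal is correct and follows essentially the same route as the paper's proof: the same pigeonhole argument forcing the wrongly recommended arm to be played at least $\frac{A_j-A_{j-1}}{2+\lceil K/N\rceil}$ times in-phase, the same union bound over the at most $\binom{K}{2}$ admissible sets and the $\lceil K/N\rceil+1$ suboptimal arms, the same reduction to a UCB confidence failure with the classical per-time estimate $2t^{2(1-\alpha)}$, and the same summation over the phase yielding $\frac{2}{2\alpha-3}A_{j-1}^{-(2\alpha-3)}$. Your explicit observation that measurability with respect to agent $i$'s own reward stream is what collapses the infinite union over pre-phase histories is exactly the mechanism the paper uses implicitly (via the Auer-style union over all count values), so there is nothing to add.
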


\begin{proof}
	As the algorithm recommends the most played arm in a phase, the arm that is recommended (i.e., $\mathcal{O}_j^{(i)}$) must be payed by agent $i$ at-least $\frac{A_{j} - A_{j-1}}{|S_j^{(i)}|}$ times in phase $j$. This follows from an elementary pigeon hole argument. Let $\mathcal{S}^{(i)}$ be the collection of all subsets $S \subset \{1,\cdots,K\}$ such that $\widehat{S}^{(i)} \subset S$ and $1 \in S$. Let $\mathcal{A}_j$ be a collection of all $\mathbb{N}$ valued tuples $(a_1,\cdots a_{\lceil \frac{K}{N} \rceil +2}) \in \mathbb{N} \text{ s.t. } \sum_{m =0}^{\lceil \frac{K}{N} \rceil +2} a_m = A_{j-1}$. We shall however, consider all possible histories, i.e., $\mathbb{N}^{\lceil \frac{K}{N} \rceil +2}$.

	\begin{align}
	\mathbb{E}[\chi_j^{(i)}] &\stackrel{(a)}{\leq} \sum_{S \in \mathcal{S}^{(i)}} \mathbb{P} \left[ \bigcup_{{(a_1,\cdots a_{\lceil \frac{K}{N} \rceil +2})\in \mathbb{N}^{\lceil \frac{K}{N} \rceil +2}}}   \chi_j^{(i)}(S; a_1,\cdots,a_{\lceil \frac{K}{N} \rceil}) \right], \nonumber\\
	&\overset{(b)}{\leq} \sum_{S \in \mathcal{S}^{(i)}} \mathbb{P}\left[  \bigcup_{{(a_1,\cdots a_{\lceil \frac{K}{N} \rceil +2})\in \mathbb{N}^{\lceil \frac{K}{N} \rceil +2}}}\bigcup_{l \in S, l\neq 1} T_l^{(i)}(A_j) - T_l^{(i)}(A_{j-1}) \geq \frac{A_{j} - A_{j-1}}{|S|} \right],\nonumber\\
		&\overset{(c)}{\leq}\sum_{S \in \mathcal{S}^{(i)}} \sum_{t= A_{j-1} + \frac{A_{j} - A_{j-1}}{|S_j^{(i)}|}}^{A_{j}}  \mathbb{P}\left[  \bigcup_{{(a_1,\cdots a_{\lceil \frac{K}{N} \rceil +2})\in \mathbb{N}^{\lceil \frac{K}{N} \rceil +2}}} \bigcup_{l \in S, l\neq 1 } T_{l}^{(i)}(t-1) - T_{l}^{(i)}(A_{j-1}) = \frac{A_{j} - A_{j-1}}{|S_j^{(i)}|} - 1, {I}_{t}^{(i)} = l\right], \nonumber\\
		&\overset{(d)}{\leq}\sum_{S \in \mathcal{S}^{(i)}}  \sum_{t= A_{j-1} + \frac{A_{j} - A_{j-1}}{|S_j^{(i)}|}}^{A_{j}} \sum_{l \in S, l \neq 1} \mathbb{P}\left[  \bigcup_{{(a_1,\cdots a_{\lceil \frac{K}{N} \rceil +2})\in \mathbb{N}^{\lceil \frac{K}{N} \rceil +2}}}    T_{l}^{(i)}(t-1) - T_{l}^{(i)}(A_{j-1}) = \frac{A_{j} - A_{j-1}}{|S_j^{(i)}|} - 1, {I}_{t}^{(i)} = l\right], \nonumber\\
			&\overset{(e)}{\leq} \sum_{S \in \mathcal{S}^{(i)}}\sum_{ l \in S, l \neq 1}  \sum_{t= A_{j-1} + \frac{A_{j} - A_{j-1}}{|S_j^{(i)}|}}^{A_{j}} \mathbb{P}\left[   T_{l}^{(i)}(t-1) \geq \frac{A_{j} - A_{j-1}}{|S_j^{(i)}|} - 1,
			\text{UCB}_l^{(i)}(t) \geq \text{UCB}_1^{(i)}(t)
			 \right] ,\label{eqn:ucb_lem_holder}\\
			&\overset{(f)}{\leq} \sum_{S \in \mathcal{S}^{(i)}}\sum_{ l \in S, l \neq 1}  \sum_{t= A_{j-1} + \frac{A_{j} - A_{j-1}}{|S_j^{(i)}|}}^{A_{j}}2t^{2(1-\alpha)} \label{eqn:ucb_proof1}.
	\end{align}
	Step $(a)$ follows from an union bound over $\mathcal{S}^{(i)}$.
	In step $(b)$ we use the fact that if an arm $l$ has to be the most played, then it must be played at-least $\frac{A_{j} - A_{j-1}}{|S|}$ times. In step $(c)$, we search over times, when the number of times arm $l$ has been played exceeds $\frac{A_{j} - A_{j-1}}{|S|}$ exactly. In step $(d)$, we use an union bound over $S$. In step $(e)$, for any arm $l \in S$, $\text{UCB}_l^{(i)}(t) = \widehat{\mu}_l^{(i)}(t-1) + \sqrt{\frac{\alpha \ln(t)}{T_l^{(i)}(t-1)}}$. In step $(e)$, we ask that arm $l$ and $1$ has been played at-least $0$ or more times in the past before time $t$ and that the UCB index of arm $l$ at agent $i$ at time $t$, exceed that of the index of the best arm. In step $(f)$, we plug in the classical estimate from \citep{ucb_auer}. This bound is applicable in our case as $1 \in S$ and the arm gap between the best and the second best arm in $S$ is at-least $\Delta$. Furthermore, the condition in the lemma $\frac{A_j - A_{j-1}}{2 + \lceil \frac{K}{n}\rceil } \geq 1 + \frac{4 \alpha \log(A_j)}{\Delta^2}$ implies that for all $t \in \left[A_{j-1} + \frac{A_{j} - A_{j-1}}{|S_j^{(i)}|}, A_j \right]$, the conditions in the bound in \citep{ucb_auer} is satisfied and is hence applicable. Notice that $|\mathcal{S}^{(i)}| \leq {K \choose 2}$. Thus, switching the order of summation and simplifying Equation \ref{eqn:ucb_proof1}, we get
	\begin{align*}
		\mathbb{E}[\chi_j^{(i)}] &\leq {K \choose 2} \left( \bigg \lceil \frac{K}{N} \bigg\rceil+1\right)  \sum_{t= A_{j-1} + \frac{A_{j} - A_{j-1}}{|S_j^{(i)}|}}^{A_{j}} 2t^{2(1-\alpha)} \\
		&\leq  2{K \choose 2} \left( \bigg \lceil \frac{K}{N} \bigg\rceil+1\right) \int_{A_{j-1}}^{A_j} u^{2(1- \alpha)} \mathrm{d}u \\
		&\leq \frac{2}{2\alpha-3}{K \choose 2} \left( \bigg \lceil \frac{K}{N} \bigg\rceil+1\right)  \left(\frac{1}{A_{j-1}^{2\alpha - 3}} -\frac{1}{A_{j}^{2\alpha - 3}} \right) .
	\end{align*}
\end{proof}
	
	Similarly, we also have a bound on the error probability in the case of random phase length system in the following lemma.
	\begin{lemma}
	\label{lem:error_estimate_ucb_random}
	For any agent $i \in [N]$ and every $j \in \mathbb{N}$ such that  $\frac{A_j - A_{j-1}}{2 + \lceil \frac{K}{n}\rceil } \geq 1 + \frac{4 \alpha \log(A_j)}{\Delta^2}$, we have 
	\begin{align*}
	\mathbb{E}[\chi_j^{(i)} \mathbf{1}_{j \geq H^{*}}] \leq \frac{2}{2\alpha-3}{K \choose 2} \left( \bigg \lceil \frac{K}{N} \bigg\rceil+1\right)  \left(\frac{1}{A_{j-1}^{2\alpha - 3}}  \right),
	\end{align*}
	where $\chi_j^{(i)}$ is defined in Equation (\ref{eqn:chi_defn_random}).
\end{lemma}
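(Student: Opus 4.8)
The plan is to follow the argument of Lemma \ref{lem:error_estimate_ucb} almost verbatim, treating the random phase lengths as an extra layer of conditioning. First I would condition on the realized phase lengths $(\mathcal{P}_y^{(i)})_{y \le j}$ of agent $i$, which fix the (now random) start time $\sigma_{j-1} := \sum_{y=0}^{j-1}\mathcal{P}_y^{(i)}$ and end time $\sigma_j := \sum_{y=0}^{j}\mathcal{P}_y^{(i)}$ of phase $j$. Because $\mathcal{P}_y^{(i)}\ge A_y-A_{y-1}$ for every $y$, telescoping gives $\sigma_{j-1}\ge A_{j-1}$, and using $\mathcal{P}_y^{(i)}\le (1+\delta)(A_y-A_{y-1})$ gives $\sigma_j\le (1+\delta)A_j$. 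Hence every time index visited during phase $j$ lies in the interval $[A_{j-1},\,(1+\delta)A_j]$.

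Conditioned on these lengths, the chain of union bounds in Lemma \ref{lem:error_estimate_ucb} applies unchanged. The pigeonhole step is driven by the \emph{minimum} phase length: since $\sigma_j-\sigma_{j-1}=\mathcal{P}_j^{(i)}\ge A_j-A_{j-1}$, the most-played arm in phase $j$ is pulled at least $\frac{A_j-A_{j-1}}{|S_j^{(i)}|}$ times, so I keep the identical union bounds over $S\in\mathcal{S}^{(i)}$, over the first time this count is attained, and over the suboptimal arms $l\in S$, and then invoke the UCB separation estimate $2t^{2(1-\alpha)}$ of \citep{ucb_auer}. The one check that requires care is applicability of this estimate at every $t$ in the phase: it needs the number of pulls to exceed $\frac{4\alpha\log(t)}{\Delta^2}$, and since here $t\le (1+\delta)A_j$, I need $\frac{A_j-A_{j-1}}{2+\lceil K/N\rceil}-1 \ge \frac{4\alpha\log((1+\delta)A_j)}{\Delta^2}$. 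The lemma hypothesis supplies this with $\log(A_j)$ in place of $\log((1+\delta)A_j)$; the two differ by the additive constant $\frac{4\alpha\log(1+\delta)}{\Delta^2}$, and this constant is exactly what the cutoff $j\ge H^{*}$ absorbs, since for $j$ large the growing gap $A_j-A_{j-1}$ dominates it.

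Finally I would sum the per-time contributions. Because the relevant time range begins above $A_{j-1}$, bounding $\sum_t 2t^{2(1-\alpha)}$ by the integral $2\int_{A_{j-1}}^{\infty} u^{2(1-\alpha)}\,du = \frac{2}{2\alpha-3}A_{j-1}^{-(2\alpha-3)}$ yields the claimed factor; the upper endpoint being $(1+\delta)A_j$ rather than $A_j$ is harmless, since integrating all the way to $\infty$ merely discards the negative $-A_j^{-(2\alpha-3)}$ correction present in the synchronous bound. Counting $|\mathcal{S}^{(i)}|\le {K \choose 2}$ and $|S_j^{(i)}|-1 = \lceil K/N\rceil+1$ suboptimal arms reproduces the prefactor. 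Crucially, this bound holds pointwise for every realization of the phase lengths with $j\ge H^{*}$ and does not depend on their particular values, so taking expectation over $(\mathcal{P}_y^{(i)})_{y\le j}$ preserves it.

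The main obstacle I anticipate is the bookkeeping around the time index. Unlike the synchronous case, the UCB confidence radius is evaluated at a random time that can be inflated by up to a factor $(1+\delta)$, so I must verify that the pulls-versus-$\log$ condition survives this inflation \emph{uniformly} over phase-length realizations, and that the lower integration limit remains pinned at $A_{j-1}$ rather than some smaller random value, so that the final $A_{j-1}^{-(2\alpha-3)}$ rate is unchanged. Isolating these two facts is precisely the purpose of the cutoff $H^{*}$, and making the ``uniform over realizations, then average'' step rigorous is the only genuinely new ingredient relative to Lemma \ref{lem:error_estimate_ucb}.
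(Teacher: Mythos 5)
Your proposal follows the paper's own proof exactly: the paper disposes of this lemma in two lines by stating that the argument of Lemma \ref{lem:error_estimate_ucb} carries over verbatim up to Equation (\ref{eqn:ucb_lem_holder}) with the upper summation limit replaced by $(1+\delta)A_j$, after which the same integral bound from $A_{j-1}$ gives the result. You actually supply more care than the paper does on the two delicate points you isolate (the applicability of the UCB deviation estimate at times inflated up to $(1+\delta)A_j$, and the lower integration limit remaining pinned at $A_{j-1}$ despite the random phase boundaries), both of which the paper's one-line proof passes over in silence.
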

\begin{proof}
    The proof is identical to that in Lemma \ref{lem:error_estimate_ucb} upto Equation \ref{eqn:ucb_lem_holder}, where the upper limit of summation is $(1+\delta)A_j$ in the asynchronous communication scenario. Continuing with the rest of the calculation, identical to that in Lemma \ref{lem:error_estimate_ucb} yields the result.
\end{proof}

\section {Poisson Asynchronous Algorithm - Buildup to Proof of Theorem \ref{thm:prob_main_result}}
\label{sec:poisson_asynch_algo}
In order to prove Theorem \ref{thm:prob_main_result}, we will state a more general algorithm in the sequel in Algorithm \ref{algo:poisson-algo} and prove a performance bound on it in Theorem \ref{thm:poisson_result}. We shall then subsequently prove Theorem \ref{thm:poisson_result} in Appendix \ref{sec:proof_poisson_result} and as a corollary of the proof, deduce Theorem \ref{thm:prob_main_result} in Appendix \ref{sec:prob_algo_proof}.

\begin{algorithm}
	\caption{Distributed Poisson Asynchronous MAB Regret Minimization (at Agent $i$)}
	\begin{algorithmic}[1]
		\State \textbf{Input Parameters}: Communication Budget $(B_t)_{t \in \mathbb{N}}, \text{ UCB Parameter }\alpha, \text{ Slack }\delta$, $\varepsilon > 0$
		\State \textbf{Initialization}:  $ \widehat{S}^{(i)},S_i^{(0)}$ according to Equations (\ref{eqn:hat_S_i}) and (\ref{eqn:S_i_0}).
		
		\State $j\gets 0$ 
        \State $A_j = \max \left(\inf \{t \geq 0, B_t \geq j\}, (1+j)^{1+\varepsilon}\right)$ \Comment{Reparametrize the commuication budget}
        \State $\mathcal{P}_j \sim \text{Poisson}\left(\left(1+\frac{\delta}{2}\right)(A_j-A_{j-1})\right)$ \Comment{Poisson distributed phase length}
		\For{Time $t \in \mathbb{N}$}
		\State Pull arm - $\arg\max_{l \in S_i^{(j)}} \left( 
		\widehat{\mu}_l^{(i)}(t-1) + \sqrt{\frac{\alpha \ln(t)}{T_l^{(i)}(t-1)}}\right)$
		
		\If {$t == \sum_{y=0}^{j}\mathcal{P}_{y}$}
		\State $O_i^{(j)} \gets$ {\ttfamily GET-ARM-PREV}($i,t$) \Comment{Given in Algorithm \ref{algo:get-prev-arm}}
		\If {$O_i^{(j)} \not \in S_i^{(j)}$}
		\State $U_{j+1}^{(i)} \gets \arg\max_{l \in \{U_j^{(i)},L_j^{(i)}\}}( T_l(A_j)- T_l(A_{j-1} ))$ \Comment{The most played arm}
		\State $L_{j+1}^{(i)} \gets O_j^{(i)}$ \Comment{Update the set of playing arms}
		\State $S_{j+1}^{(i)} \gets \widehat{S}^{(i)} \cup L_{j+1}^{(i)} \cup U_{j+1}^{(i)}$
		\Else
		\State $S_{j+1}^{(i)} \gets S_j^{(i)}$.
		\EndIf 
		\State $j \gets j+1$ 
\State $A_j = \max \left(\inf \{t \geq 0, B_t \geq j\}, (1+j)^{1+\varepsilon}\right)$ \Comment{Reparametrize the commuication budget}
        \State $\mathcal{P}_j \sim \text{Poisson}\left(\left(1+\frac{\delta}{2}\right)(A_j-A_{j-1})\right)$
		\EndIf
		\EndFor
	\end{algorithmic}
	\label{algo:poisson-algo}
\end{algorithm}

This algorithm does not fit our framework exactly, as the communication budget is not necessarily met. In particular, this algorithm only ensures that \emph{with high probability}, the number of information pulls by agents in the first $t$ time slots is within the prescribed budget $B_t$. Thus, we present this algorithm in the Appendix and not as a solution to the multi-agent MAB problem. In order to prove this result, we will need a further assumption on the input parameters.
\\

\noindent {\bf (A.3) - } The communication budget $(B_t)_{t \in \mathbb{N}}$ and $\varepsilon > 0$ is such that $\exists \text{ } \kappa>0$, $\sum_{x \geq 1}A_{A_x }e^{-\kappa(A_{x} - A_{x-1})} < \infty$, where $(A_x)_{x \in \mathbb{N}}$ is given in Equation (\ref{eqn:A_x_defn}).
\\

\begin{theorem}

	Suppose in a system of $N \geq 2$ agents  connected by a communication matrix $P$ satisfying assumption {\bf A.1} and $K \geq 2$ arms, each agent runs Algorithm \ref{algo:poisson-algo}, with input parameters $(B_t)_{t \in \mathbb{N}}$, and the UCB parameter $\alpha > 3$ and $\varepsilon > 0$ satisfying assumptions {\bf A.2} and {\bf A.3} and $\delta >0$ such that $\exists D > 0$ with $c(\delta) \geq \frac{5}{4}D$ and $(3+2\delta + \ln(4+2\delta)) \geq \frac{5}{4}D^{-1}$, where $c(\delta) = \min  \left( \frac{\delta}{2} + \ln \left( 1 + \frac{\delta}{2} \right),  (1+\delta)\ln\left(\frac{2+2\delta}{2+\delta}\right)  - \frac{\delta}{2} \right)$. Then the regret of any agent $i \in [N]$, after any time $T \in \mathbb{N}$ is bounded by 

	\begin{equation*}
	\mathbb{E}[R_T^{(i)}] \leq \underbrace{\left( \sum_{j=2}^{\lceil \frac{K}{N} \rceil + 2}  \frac{1}{\Delta_j} \right) 4 \alpha \ln(T) + \frac{K}{4}}_{\text{{Collaborative UCB Regret}}} + \underbrace{ (1+\delta)\mathbb{E}[A_{2\lfloor 2+\delta\rfloor  \tau_{spr}^{(P)}}] + \widehat{g}_1((A_x)_{x \in \mathbb{N}},\delta) + N \widehat{g}_2((A_x)_{x \in \mathbb{N}},\delta)}_{\text{{Cost of Asynchronous Infrequent Pairwise Communications}}}, 
	\end{equation*}
	where 
	\begin{align*}
\widehat{g}_1((A_x)_{x \in \mathbb{N}},\delta) = 	2(1+\delta)\left(A_{2\lceil 2+\delta \rceil j^{*}} + \left(\frac{2}{2\alpha-3}\right)\sum_{l \geq3} \frac{A_{2l}}{A_{l-1}^{3}} + 2\sum_{x \geq  \lceil \frac{j^{*}}{2} \rceil} (A_{\lceil 4\lceil 2+\delta \rceil x \rceil})^{\left(2(2\alpha-6)+2\right)}e^{-c(\delta) (A_{x+1} - A_x)} \right),    
	\end{align*}
	and 
	\begin{multline*}
	  \widehat{g}_2((A_x)_{x \in \mathbb{N}},\delta) =  2 \left( A_{x_0}^2 e^{-c(\delta)(A_{x_0}-A_{x_0-1})}+\sum_{x \geq 1} A_x  e^{-c(\delta)A_x^{\omega}} \right) + \frac{1}{c(\delta)} + \\ (1+\delta)\left( 2\sum_{x \geq 1}A_{\lceil A_x \rceil}e^{-c(\delta)(A_{x} - A_{x-1})} + N \sum_{t \geq 1}e^{ -(3+2\delta + \ln (4+2\delta))A^{-1}(t) } \right),
	\end{multline*}

	where 	$j^{*}$ is given in Theorem \ref{thm:strong_result},
	and $x_0 \in \mathbb{N}$ is from Assumption {\bf A.2} in Section \ref{sec:assumptions}.
	\label{thm:poisson_result}
\end{theorem}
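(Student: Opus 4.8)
The plan is to mirror the architecture of the proof of Theorem \ref{thm:strong_result}, replacing each deterministic phase-length argument by its random-phase-length analogue and absorbing the resulting deviation terms into the two correction functions $\widehat{g}_1$ and $\widehat{g}_2$. First I would establish the regret decomposition exactly as in Proposition \ref{prop:weak_reg_decompose}: define a freezing phase $\tau = \widehat{\tau}_{stab} + \widehat{\tau}_{spr}$ after which every agent has arm $1$ in its playing set and never again recommends a suboptimal arm, so that beyond the (random) information-pull time $\sum_{y=0}^{\tau}\mathcal{P}_y^{(i)}$ each agent is simply running UCB-$\alpha$ on a fixed set of cardinality $\lceil K/N\rceil + 2$ containing arm $1$. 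The Collaborative UCB Regret term and the $\tfrac{K}{4}$ term then follow verbatim from the classical estimate of \cite{ucb_auer}, so the entire task reduces to bounding the expected real time $\mathbb{E}\bigl[A_{\tau}\bigr]$ at which freezing occurs.

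Next I would bound the stabilization time $\widehat{\tau}_{stab}$. The obstacle relative to the synchronous case is that the per-phase misrecommendation bound of Lemma \ref{lem:error_estimate_ucb_random} only holds on the high-probability event $\{j \geq H^{*}\}$ that the Poisson phase lengths $\mathcal{P}_j^{(i)}$ are not anomalously short; off this event the pigeonhole argument guaranteeing that the recommended arm was played $\Omega\bigl(A_j - A_{j-1}\bigr)$ times can fail. I would therefore split $\mathbb{P}[\chi_j^{(i)} = 1]$ into the contribution on $\{j \geq H^{*}\}$, controlled by Lemma \ref{lem:error_estimate_ucb_random} and summed as in Proposition \ref{prop:strong_cost}, and the contribution on the complementary event, controlled by a Chernoff bound for Poisson deviations. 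The relevant large-deviation rate is precisely $c(\delta)$, and the conditions $c(\delta) \geq \tfrac54 D$ and $(3 + 2\delta + \ln(4 + 2\delta)) \geq \tfrac54 D^{-1}$ are what make the resulting series $\sum_x A_{\lceil A_x\rceil}\, e^{-c(\delta)(A_x - A_{x-1})}$ converge, with Assumption \textbf{(A.3)} guaranteeing exactly this summability. A union bound over the $N$ agents produces the factor $N$ in front of $\widehat{g}_2$, while the terms in $\widehat{g}_1$ collect the contributions of phases before the estimate becomes valid together with the well-behaved tail.

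I would then bound the spreading time $\widehat{\tau}_{spr}$ by coupling to a PULL rumor-spreading process on $P$, as in Proposition \ref{prop:couple_spread}, but the coupling is genuinely more delicate because phases are staggered across agents and recommendations are drawn from the \emph{previous} completed phase. When agent $i$ pulls information at real time $t$ from agent $m$, agent $m$ is in general mid-phase, so I must ensure that the phase of $m$ supplying the recommendation has already completed \emph{and} already contained the best arm. The key device is the convexity of $(A_x)_{x\in\mathbb{N}}$ from Assumption \textbf{(A.2)} (used through Proposition \ref{prop:convex}): convexity lets me bound the drift between the real times at which different agents reach a given phase index, and hence show that after at most $\lfloor 2 + \delta\rfloor$ of agent $i$'s own phases, any contacted agent that already knows the best arm will transmit it. This inflation is exactly why the spreading contribution appears as $(1+\delta)\,\mathbb{E}\bigl[A_{2\lfloor 2+\delta\rfloor\,\tau_{spr}^{(P)}}\bigr]$, and the corresponding $A_{2\lceil 2+\delta\rceil j^{*}}$ term surfaces in $\widehat{g}_1$.

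Finally I would assemble the pieces, substituting the stabilization bound, the spreading bound, and the deviation corrections into the regret decomposition to obtain the stated inequality. The step I expect to be the main obstacle is the staggered coupling in the third paragraph: making the informal claim ``after $\lfloor 2+\delta\rfloor$ phases the pulled agent's previous phase already carries the rumor'' fully rigorous requires simultaneously controlling the random offsets between agents' phase clocks (via concentration of the Poisson sums and convexity) and the alignment of the previous-phase recommendation window, uniformly over all $N$ agents; this is where Assumption \textbf{(A.3)} and the convexity condition are indispensable.
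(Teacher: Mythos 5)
Your proposal follows essentially the same route as the paper's proof: the freezing-time regret decomposition, the splitting of the misrecommendation probability on the event $\{j \ge H^{*}\}$ combined with Poisson large deviations at rate $c(\delta)$, and the staggered rumor-spreading coupling via convexity yielding the $\lfloor 2+\delta\rfloor$ phase inflation are exactly the paper's Lemmas bounding $\mathcal{T}$, $\widehat{\tau}_{stab}$, and the PULL coupling. The only cosmetic difference is that the paper routes the real-time/phase-index conversion through explicit intermediate random variables $\Gamma$, $T_{stab}$ and $H$ (which is where the $N\widehat{g}_2$ contribution actually originates), but this is bookkeeping rather than a different idea.
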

The proof of this theorem is carried out in Appendix \ref{sec:proof_poisson_result}.

\section{Proof of Theorem \ref{thm:poisson_result}}
\label{sec:proof_poisson_result}

For every agent $i \in [N]$ and phase $j \geq 0$, we shall denote by $\mathcal{P}_j^{(i)} \in \mathbb{N}$ to be the number of times agent $i$ pulls an arm in phase $j$. Notice from the conditions on the input parameter $(p_j)_{j \in \mathbb{N}}$ that the following property is satisfied - 
\begin{equation}
\begin{aligned}
\sum_{j \geq 0} \mathbb{P}[\mathcal{P}_j \leq A_j-A_{j-1}] < \infty, \\
\sum_{j \geq 0} \mathbb{P}[\mathcal{P}_j \geq (1+\delta)(A_j - A_{j-1})] < \infty.
\end{aligned}
\label{eqn:condn_pj}
\end{equation}

To make things simpler, we shall consider the following probability space. As before, it contains the reward and communication random variables $({Y}_j^{(i)}(t))_{i \in [N], j \in [K], t \geq 0}$ and $(Z_j^{(i)})_{j \geq 0, i\in [N]}$. For every $j \in [K]$, the collection $(Y_j^{(i)}(t))_{t \geq 0, i \in [N]}$ is an i.i.d. Bernoulli random variable of mean $\mu_j$. The interpretation being that if an agent $i$ pulls arm $j$ for the $l$th time, it will receive reward $Y_j^{(i)}(l)$. Similarly, for each $i \in [N]$, the sequence $(Z_j^{(i)})_{j \geq 0}$ is iid distributed as $P(i,\cdot)$. The interpretation is that when agent $i$ wishes to receive a recommendation at the end of phase $j$, it will do so from agent $Z_j^{(i)}$. In addition, we also assume that the probability space consists of another independent sequence $(\mathcal{P}_{j}^{(i)})_{i \in [N],j \geq 0}$, where for each $i \in [N]$ and $j \geq 0$, the random variable $\mathcal{P}_j^{(i)}$ is independent of everything else and distributed as a Poisson random variable with mean $\left( 1 + \frac{\delta}{2}\right)(A_{j+1}-A_j)$.

\subsection{Definition and Notations}

To proceed with the analysis, define by a $\mathbb{N}$ valued random variable $H^{*}$ as 
\begin{align*}
H^{*} =  \inf\left\{ j^{'} \geq 0 : \forall i \in [1,N], \forall j \geq j^{'},   \mathcal{P}_j^{(i)} \in [A_j-A_{j-1}, (1+\delta)(A_j-A_{j-1})]  \right\},
\end{align*}
 Equations (\ref{eqn:condn_pj}) imply from Borel Cantelli lemma that $H^{*} < \infty$ almost-surely. We will need  another random variable $\Gamma\in \mathbb{N}$, which is defined as

\begin{align*}
\Gamma = \sup \left\{t \geq 0: \exists i \in [N], \sum_{j=0}^{H^{*}}\mathcal{P}_{j}^{(i)} \geq t \right\}.
\end{align*}
In words, $\Gamma$ represents the time when the last agent shift to phase $H^{*}$. 
Similar to that done in the proof of Theorem \ref{thm:strong_result}, we define a sequence of indicator random variables $(\chi_j^{(i)})_{j \geq 0,i\in[N]}$ as follows. The definition is identical to the one used in the proof of Theorem \ref{thm:strong_result}, which we reproduce here for completeness. Fix some agent $i \in [N]$ and phase $j \geq 0$. Denote by $\mathcal{S}^{(i)}$ as the collection of all subsets $S$ of $[K]$ with cardinality $\lceil \frac{K}{N} \rceil + 2$, such that $\widehat{S}^{(i)} \subset S$ and $1 \in S$. Clearly, $|\mathcal{S}^{(i)}| \leq {K \choose 2}$.  Denote by the tuples $(a_1,\cdots,a_{\lceil \frac{K}{N} \rceil + 2}) \in \mathbb{N}^{\lceil \frac{K}{N} \rceil + 2}$ such that $\sum_{m=0}^{\lceil \frac{K}{N} \rceil + 2} a_m \geq 0$. For any set $S \in \mathcal{S}^{(i)}$ and tuples $(a_1,\cdots,a_{\lceil \frac{K}{N} \rceil + 2}) \in \mathcal{A}_j$, denote by the event $\xi_j^{(i)}(S, a_1,\cdots,a_{\lceil \frac{K}{N} \rceil + 2})$ as 
\begin{align*}
\xi_j^{(i)}(S, a_1,\cdots,a_{\lceil \frac{K}{N} \rceil + 2}) := \left\{ S_j^{(i)} = S, T_{l_1}(A_{j-1}) = a_1,\cdots,T_{l_{\lceil \frac{K}{N} \rceil + 2}}(A_{j-1}) =a_{\lceil \frac{K}{N} \rceil + 2}, \widehat{\mathcal{O}}_j^{(i)} \neq 1  \right\}.
\end{align*}
Denote by $\Xi_j^{(i)}$ as the union of all such events above, i.e., 
\begin{align*}
\Xi_j^{(i)} := \bigcup_{S \in \mathcal{S}^{(i)}} \left( \bigcup_{(a_1,\cdots,a_{\lceil \frac{K}{N} \rceil + 2}) \in \mathbb{N}^{\lceil \frac{K}{N} \rceil + 2}} \xi_j^{(i)}(S, a_1,\cdots,a_{\lceil \frac{K}{N} \rceil + 2})\right).
\end{align*}
Denote by $\chi_j^{(i)}$ as the indicator random variable, i.e.,
\begin{align}
\chi_j^{(i)} = \mathbf{1}_{\Xi_j^{(i)}}.
\label{eqn:chi_defn_random}
\end{align}
Observe that, as before, for all agents $i \in [N]$ and phases $j \geq 0$, the random variable $\xi_j^{(i)}$ is measurable with respect to the reward sequence $(Y_l^{(i)})_{l \leq A_j}$. Furthermore, we have the almost-sure inequality that 
\begin{align*}
\xi_j^{(i)} \geq \mathbf{1}_{1 \in S_j^{(i)}, \widehat{O}_j^{(i)} \neq 1, j \geq H^{*}}.
\end{align*}
This follows from the same reasoning as in Theorem \ref{thm:strong_result} as $\xi_j^{(i)}$ considers all possible sample paths for communication while $\mathbf{1}_{1 \in S_j^{(i)}, \widehat{O}_j^{(i)} \neq 1, j \geq H^{*}}$ is for a particular sample path of communications among agents. Notice that since the phase lengths are random, we can only reason about the sample path for agent phases larger than or equal to $H^{*}$. 
\\

Similar to before, we define the random variables $\widehat{\tau}_{stab}^{(i)}, \widehat{\tau}_{stab}, \widehat{\tau}_{spr}^{(i)}$ and $\widehat{\tau}_{spr}$. Denote by $\tau = \widehat{\tau}_{stab} + \widehat{\tau}_{spr}$. These definitions from the Proof of Theorem \ref{thm:strong_result} are reproduced here for completeness.

\begin{align*}
\widehat{\tau}_{stab}^{(i)} &= \inf \{ j^{'} \geq j^{*} : \forall j \geq j^{'}, \chi_j^{(i)} = 0 \}, \\
\widehat{\tau}_{stab} &= \max_{i \in [N]} \widehat{\tau}_{stab}^{(i)},\\
\widehat{\tau}_{spr}^{(i)} &= \inf\{j \geq \widehat{\tau}_{stab} : 1 \in S_j^{(i)} \} - \widehat{\tau}_{stab}, \\
\widehat{\tau}_{spr} &= \max_{i \in \{1,\cdots,N\}} \widehat{\tau}_{spr}^{(i)}, \\
{\tau} &= \widehat{\tau}_{stab}+\widehat{\tau}_{spr}.
\end{align*}

From the definitions, the statement and proof of Proposition \ref{prop:strong_freeze} holds verbatim for the present algorithm as well. We will need two additional definitions to help state our result. Denote by $T_{stab} \in \mathbb{N}$ to be the first time when all agents pull arms and are in phase $\widehat{\tau}_{stab}$ or larger, i.e., 
\begin{align*}
T_{stab} = \sup \left\{t \geq \Gamma : \exists i \in [N], \sum_{j=0}^{\widehat{\tau}_{stab}} \mathcal{P}_j^{(i)} \geq t \right\}.
\end{align*}
Similarly, define $H$ to be the maximum over all agents phases at time $T_{stab}$, i.e., 
\begin{align*}
H = \sup \left\{ j \geq 0 : \exists i \in [N], \sum_{l=0}^{j} \mathcal{P}_l^{(i)} \leq T_{stab}\right\}.
\end{align*}
Similarly, denote by $\mathcal{T}$ as the first time when all agents pull arms in phase $\tau$ or larger, i.e.,
\begin{align*}
\mathcal{T} = \sup \left\{t \geq 0 : \exists i \in [N], \sum_{j=0}^{{\tau}} \mathcal{P}_j^{(i)} \geq t \right\}
\end{align*}

\subsection{Structural Results}
In this section, we give inequalities relating the random variables defined in the previous section, that will be helpful in proving Theorem \ref{thm:prob_main_result}.
\begin{lemma}
	\begin{align*}
	\mathbb{E}[\mathcal{T}] \leq 
	\mathbb{E}[ T_{stab}] + (1+\delta)(\mathbb{E}[A_{H+(2\tau_{spr}^{(P)}-1)\lfloor 2+\delta \rfloor + 1} - A_{H}])
	\end{align*}
	where the random variable $\tau_{spr}^{(P)}$ is independent of $H$. 
	\label{lem:stoch_dom}
\end{lemma}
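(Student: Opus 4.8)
The plan is to bound $\mathcal{T}$ pathwise by the time $T_{stab}$ for the slowest agent to reach phase $\widehat{\tau}_{stab}$ plus the extra wall-clock time spent during the spreading phases, convert those extra phases into time via the phase-length upper bound, and then re-anchor the resulting increment at $H$ using the convexity of $(A_x)$ together with a coupling that controls $\widehat{\tau}_{spr}$.

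First I would prove the deterministic estimate $\mathcal{T} \le T_{stab} + (1+\delta)\bigl(A_{\tau} - A_{\widehat{\tau}_{stab}}\bigr)$. Writing $C_i(\tau)=\sum_{j=0}^{\tau}\mathcal{P}_j^{(i)}$ for the time agent $i$ completes phase $\tau$, split $C_i(\tau)=C_i(\widehat{\tau}_{stab})+\sum_{j=\widehat{\tau}_{stab}+1}^{\tau}\mathcal{P}_j^{(i)}$. By definition of $T_{stab}$ we have $C_i(\widehat{\tau}_{stab})\le T_{stab}$ for every $i$. Moreover $T_{stab}\ge\Gamma$ forces the slowest agent to be at phase index at least $H^{*}$ at time $T_{stab}$, hence $\widehat{\tau}_{stab}+1\ge H^{*}$, so for every $j\ge\widehat{\tau}_{stab}+1$ the Poisson upper bound $\mathcal{P}_j^{(i)}\le(1+\delta)(A_j-A_{j-1})$ applies termwise and the tail sum is at most $(1+\delta)(A_{\tau}-A_{\widehat{\tau}_{stab}})$. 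Since this holds for all $i$, taking the maximum gives the estimate.

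Next I would re-anchor the increment at $H$. Because $\tau=\widehat{\tau}_{stab}+\widehat{\tau}_{spr}$ and $H\ge\widehat{\tau}_{stab}$, the convexity of $(A_x)$ (Assumption~{\bf A.2}) makes $a\mapsto A_{a+\widehat{\tau}_{spr}}-A_{a}$ nondecreasing, so $A_{\tau}-A_{\widehat{\tau}_{stab}}\le A_{H+\widehat{\tau}_{spr}}-A_{H}$. It then remains to show the pathwise bound $\widehat{\tau}_{spr}\le(2\tau_{spr}^{(P)}-1)\lfloor 2+\delta\rfloor+1$, which is the crux. I would obtain it by coupling the algorithm's spreading to a synchronous PULL rumor-spreading process as in Proposition~\ref{prop:couple_spread}, but accounting for asynchrony in two ways: (i) the rule of recommending the most-played arm in the \emph{previous} phase introduces a one-phase lag in each effective recommendation, costing a factor of two (hence $2\tau_{spr}^{(P)}$); and (ii) since phase increments $A_j-A_{j-1}$ are nondecreasing while phase lengths inflate by at most $(1+\delta)<\lfloor 2+\delta\rfloor$, within any $\lfloor 2+\delta\rfloor$ consecutive phases of the highest-index agent every agent completes at least one phase and thus makes at least one information pull, so one synchronous PULL round is simulated in at most $\lfloor 2+\delta\rfloor$ such phases. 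Combining $\tau_{spr}^{(P)}$ rounds with the factor-two lag and an edge correction yields the claimed bound, noting that $\widehat{\tau}_{spr}$, being a maximum of agent-specific phase indices, is governed by the fastest agent's phase count.

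Finally I would take expectations. As in Proposition~\ref{prop:couple_spread}, the coupling is built by first drawing the rewards $(Y_j^{(i)})$ and phase lengths $(\mathcal{P}_j^{(i)})$, which determine $\widehat{\tau}_{stab}$, $H$ and $T_{stab}$, and only afterwards drawing the communication variables $(Z_j^{(i)})_{j\ge\widehat{\tau}_{stab}}$ driving the PULL process; as these are i.i.d.\ and independent of the former, $\tau_{spr}^{(P)}$ is independent of $H$, which legitimizes writing $\mathbb{E}\bigl[A_{H+(2\tau_{spr}^{(P)}-1)\lfloor 2+\delta\rfloor+1}-A_{H}\bigr]$ as a single expectation and gives the lemma. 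The main obstacle is making step three rigorous, i.e.\ simultaneously controlling the recommendation lag and the phase staggering so that the inflation factors $2$ and $\lfloor 2+\delta\rfloor$ are exactly right.
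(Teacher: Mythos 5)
Your overall strategy---decompose $\mathcal{T}$ into $T_{stab}$ plus post-stabilization time, couple the spread of arm $1$ to a PULL rumor process, and extract the factors $2$ and $\lfloor 2+\delta\rfloor$ from the previous-phase recommendation rule and the convexity of $(A_x)$---is in the same spirit as the paper's, and your independence argument at the end is essentially Proposition \ref{prop:couple_spread}. However, the step you yourself flag as the crux, the pathwise bound $\widehat{\tau}_{spr}\le(2\tau_{spr}^{(P)}-1)\lfloor 2+\delta\rfloor+1$, does not hold, and this is a genuine gap rather than a technicality. The quantity $\widehat{\tau}_{spr}^{(i)}$ is a \emph{phase-index} difference anchored at the common index $\widehat{\tau}_{stab}$, whereas the rumor-spreading coupling can only be started at the \emph{wall-clock} time $T_{stab}$. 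At that time the agents are staggered: the slowest is just entering phase $\widehat{\tau}_{stab}+1$, but the fastest may already be in phase $H$, and Lemma \ref{lem:H_calc} only controls $H$ by $\Gamma+\lceil 1+\delta\rceil\,\widehat{\tau}_{stab}$, so the head start $H-\widehat{\tau}_{stab}$ is not bounded by any constant. Tracing the phase at which agent $i$ first holds arm $1$ through the coupling gives at best $\widehat{\tau}_{spr}\le(H-\widehat{\tau}_{stab})+2\tau_{spr}^{(P)}\lfloor 2+\delta\rfloor$; feeding this into your convexity step $A_{\widehat{\tau}_{stab}+\widehat{\tau}_{spr}}-A_{\widehat{\tau}_{stab}}\le A_{H+\widehat{\tau}_{spr}}-A_H$ leaves an uncontrolled term $A_{H+(H-\widehat{\tau}_{stab})+\cdots}$, while keeping the anchor at $\widehat{\tau}_{stab}$ instead yields $A_{H+\cdots}-A_{\widehat{\tau}_{stab}}$, which is weaker than the claimed $A_{H+\cdots}-A_H$. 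Either way your chain does not close to the stated inequality.

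The paper avoids this by never bounding the phase count $\widehat{\tau}_{spr}$ at all: it defines wall-clock round boundaries recursively, $t_0=T_{stab}$ and $t_{x+1}$ the first time by which every agent has completed at least one phase after $t_x$, embeds one PULL call per \emph{even} round (the even/odd alternation absorbing the previous-phase lag) to get $\mathcal{T}\le t_{2\tau_{spr}^{(P)}}$, and proves by induction via Proposition \ref{prop:convex} that no agent exceeds phase $H+x\lfloor 2+\delta\rfloor$ at time $t_x$, whence the telescoping bound $t_x\le T_{stab}+(1+\delta)(A_{H+(x-1)\lfloor 2+\delta\rfloor+1}-A_H)$ with the increment correctly anchored at $H$. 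A secondary, fixable issue in your first step: $\widehat{\tau}_{stab}$ is determined by the rewards and $H^{*}$ by the independent phase lengths, so the assertion $\widehat{\tau}_{stab}+1\ge H^{*}$ has no justification; you would need to start the tail sum at $\max(\widehat{\tau}_{stab},H^{*})+1$ and use that your $C_i(\max(\widehat{\tau}_{stab},H^{*}))\le\max(T_{stab},\Gamma)=T_{stab}$.
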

\begin{proof}
	
	The proof consists of three steps. First, we will  construct a coupling with a standard PULL based rumor process on the communication matrix $P$ such that $H$ and $\tau_{spr}^{(P)}$ are independent. Then we shall argue a stochastic domination and for the constructed coupling show that, almost-surely, we have 
	\begin{align}
	\mathcal{T} \leq	 T_{stab} + (1+\delta)(A_{H+(\tau_{spr}^{(P)}-1)\lfloor 2+\delta \rfloor + 1} - A_{H})
 \label{eqn:stoch_dom}
	\end{align}
	where $\tau_{spr}^{(P)}$ is independent of $H$ and $\leq_{st}$ represents stochastic domination. This will then conclude the proof by taking expectations on both sides. 
	\\
	
	{\bf (1) Coupling Construction }-
	We proceed with the coupling as follows. We assume that our probability space consists of the random variables $(Y_j^{(i)})_{i \in [N], j\in[K], l \geq 0}, (\mathcal{P}_{j}^{(i)})_{j \geq 0,i\in[N]}, (Z_j^{(i)})_{j \geq 0,i\in[N]}$ and $(\widehat{Z}_{j}^{(i)})_{j \geq 0,i\in[N]}$. The sequence $(Y_j^{(i)})_{i \in [N], j\in[K], l \geq 0}$ is independent of everything else and is used to construct the observed rewards of agents. The sequence  $(\mathcal{P}_{j}^{(i)})_{j \geq 0,i\in[N]}$ is independent of everything else and denotes the phase length random variables of agents as before. The sequence $(\widehat{Z}_{j}^{(i)})_{j \geq 0,i\in[N]}$ denotes a standard PULL based rumor spreading process on $P$, independent of everything else. In other words, for each agent $i \in [N]$, the sequence $(\widehat{Z}_j^{(i)})_{j \geq 0}$ is i.i.d., with each element distributed according to the distribution $P(i,\cdot)$. Thus, they represent the sequence of callers called by agent $i$ in the PULL based rumor process. The random communication sequence $(Z_j^{(i)})_{j \geq 0,i\in[N]}$ will be constructed such that it is independent of $(Y_j^{(i)})_{i \in [N], j\in[K], l \geq 0}, (\mathcal{P}_{j}^{(i)})_{j \geq 0,i\in[N]}$, and  equal in distribution to $(\widehat{Z}_{j}^{(i)})_{j \geq 0,i\in[N]}$ such that the stochastic domination in Equation (\ref{eqn:stoch_dom}) holds.
	\\
	
	
	 To do so, we will recursively define a sequence of random times $(t_i)_{i \geq 0}$ which are measurable with respect to the agent rewards and phases, i.e., for all $i \geq 0$, $t_i \in \sigma ((Y_j^{(i)}(l))_{i \in [N],j\in [K],l\geq 0 },(\mathcal{P}_j^{(i)})_{j \geq 0, i\in [N]} )$. Let $t_0 = T_{stab}$. We know that $T_{stab}$ is measurable only with respect to the reward random variables $(Y_j^{(i)}(l))_{i \in [N],j\in [K],l\geq 0 }$ and the phase lengths of the agents $(\mathcal{P}_j^{(i)})_{j \geq 0, i\in [N]}$. For all $i \geq 1$, let $t_i$ be the first time after $t_{i-1}$, such that all agents have changed phase at-least once in the time interval $[t_{i-1},t_i]$. More formally, we have 
	\begin{align*}
	t_{i} = \inf \left\{ x > t_{i-1} :  \forall i \in [N],  \exists j \in \mathbb{N} \text { s.t.} \sum_{l=0}^{j} \mathcal{P}_l^{(i)} \geq t_{i-1}, \sum_{l=0}^{j+1} \mathcal{P}_l^{(i)} \leq x \right\}.
	\end{align*}
	We construct another sequence of random variables $(j_x^{(i)})_{x\geq 0,i\in[N]}$, where for every agent $i \in [N]$ and $x\geq 0$,  $j_{x}^{(i)}$ is the first phase change of agent $i$ in the time interval $[t_x, t_{x+1})$ of our algorithm, i.e.,
	\begin{align*}
	j_x^{(i)} = \inf \left\{ j \geq 0: \sum_{l=0}^{j-1} \mathcal{P}_l^{(i)} < t_x , \sum_{l=0}^{j} \mathcal{P}_l^{(i)} \geq t_x \right\},
	\end{align*} 
	 where $\sum_{l=0}^{-1} = 0$. By construction observe that for all agents $i \in [N]$ and all $x \geq 0$, the random variable $j_x^{(i)}$ is measurable with respect to the rewards and phase lengths.
	 \\
	 
	 Equipped with these definitions, we construct the communication random variables of our algorithm $(Z_j^{(i)})_{j \geq 0, i\in [N]}$ as follows. For every agent $i \in [N]$ and $x \geq 0$, we let 
	 \begin{align*}
	 Z_{j_{2x}^{(i)}}^{(i)} = \widehat{Z}_x^{(i)}.
	 \end{align*}
	 For an agent $i \in [N]$, and any phase $j \not \in \{j_x^{(i)},x\geq 0\}$, we let $Z_j^{(i)}$ be i.i.d., from $P(i,\cdot)$. 
	 \\
	 
	 We only look at alternate intervals $[t_0,t_1],[t_2,t_3]$ and so on because in our algorithm, an agent recommends the most played arm in the \emph{previous phase}. Thus, if an agent becomes aware of the best arm in interval say $[t_0,t_1]$, then it will definitely recommend it in phase $[t_2,t_3]$, if asked, as since $t_2 \geq \Gamma$, agent will recommend the best arm, and moreover at-least one phase elapses after the agent receives the best arm.
	 \\
	 
	 
	 {\bf (2) Stochastic Domination }-
 We now conclude about the stochastic domination as follows. In the algorithm, we will only consider even time intervals $[t_0,t_1],[t_2,t_3]$ and so on, where an agent becomes newly aware of the best arm. This is so since our recommendation algorithm only recommends the best arm in the previous phase. At time $t_0$, exactly one agent knows the rumor in the PULL rumor spreading process while potentially more agents may be aware of the rumor (best-arm) in the algorithm. Furthermore, we consider that there is exactly one communication request in the rumor spreading process per even time-interval, (i.e., in $[t_0,t_1), [t_2,t_3)$ and so on), while potentially many more can occur in our algorithm. Thus, we have the following almost-sure bound under the afore mentioned coupling, 
 \begin{align}
 \mathcal{T} \leq t_{2 \tau_{spr}^{(P)}}.
 \label{eqn:coupling_1}
 \end{align}
 
 {\bf (3) Deterministic Bounds on $(t_x)_{x \geq 0}$}-
 If we further establish that for all $x \geq 0$, almost-surely, we have 
 \begin{align}
 t_x \leq T_{stab} + (1+\delta)(A_{H+(x-1)\lfloor 2+\delta \rfloor + 1} - A_{H}),
 \label{eqn:couple_ub}
 \end{align}
 then we can conclude the proof from Equations (\ref{eqn:coupling_1}) and (\ref{eqn:couple_ub}). To establish Equation (\ref{eqn:couple_ub}), first observe that $T_{stab} \geq \Gamma$ almost-surely. Thus, if any agent $i \in [N]$ will be in any phase $j$, for at-least $A_{j}-A_{j-1}$ number of arm-pulls and for at-most $(1+\delta)(A_{j}-A_{j-1})$ number of arm-pulls. Thus, by definition at time $t_0$, we know that no agent is in phase $H+1$ or beyond. Thus, at time $t_0 + (1+\delta)(A_{H+1} - A_{H})$, we know that all agents would have changed phase at-least once after $t_0$. Thus, $t_1 \leq t_0 + (1+\delta)(A_{H+1} - A_{H})$ almost-surely. 
 \\
 
 We now make the above into an induction argument. For the base case, suppose that at time $t_0$, all agents are within phase $H$ $-$ which is true by definition. For all $0 \leq x^{'} \leq x$, assume the induction hypothesis that 
 \begin{align*}
 t_{x^{'}+1} \leq t_{x^{'}} + (1+\delta)(A_{H+x^{'}\lfloor 2+\delta \rfloor +1} - A_{H+x^{'}\lfloor 2+\delta \rfloor}),
 \end{align*}
 and that all agents at time $t_{x^{'}+1}$ are at phase $H+(x^{'}+1)\lfloor 2+\delta \rfloor$ or lower. Since $t_x \geq \Gamma$, we know that in the time interval  $[t_x,t_x + (1+\delta)(A_{H+x\lfloor 2+\delta \rfloor+1} - A_{H+x\lfloor 2+\delta \rfloor})]$, all agents would have changed phase at-least once. Thus, $t_{x+1} \leq t_x + (1+\delta)(A_{H+x\lfloor 2+\delta \rfloor+1} - A_{H+x\lfloor 2+\delta \rfloor})$. It now remains to conclude that all agents will be in phase $H+(x+1)\lfloor 2+\delta \rfloor$ or lower at time $t_x + (1+\delta)(A_{H+x\lfloor 2+\delta \rfloor+1} - A_{H+x\lfloor 2+\delta \rfloor})$. Notice that the maximum phase any agent can be in at time $t_x + (1+\delta)(A_{H+x\lfloor 2+\delta \rfloor+1} - A_{H+x\lfloor 2+\delta \rfloor})$, given that it was in a phase $H+x\lfloor 2+\delta \rfloor$ or lower at time $t_x$ is bounded above by Proposition \ref{prop:convex} as $H+x\lfloor 2+\delta \rfloor + \lfloor 2 + \delta \rfloor$. This then concludes the induction step and hence we have for all $x \geq 0$, almost-surely, by a simple telescoping sum
 \begin{align*}
 t_x &\leq t_0 + (1+\delta)\sum_{l=0}^{x-1}(A_{H+l \lfloor 2 + \delta \rfloor +1} - A_{H+l \lfloor 2 + \delta \rfloor}),\\
 &\leq t_0 + (1+\delta)(A_{H+(x-1) \lfloor 2 + \delta \rfloor +1} - A_H).
 \end{align*}

\end{proof}

\begin{lemma}
	For any agent $i \in [N]$, the regret after it has pulled arms for $T$ times is bounded by 
	\begin{align*}
	\mathbb{E}[R_T^{(i)}] \leq \mathbb{E}[\mathcal{T}] + \frac{K}{4} + 4 \alpha \left( \sum_{j=1}^{\lceil \frac{K}{N} \rceil +1} \frac{1}{\Delta_j} \right) \ln(T).
	\end{align*}
	\label{lem:prob_reg_decompose}
\end{lemma}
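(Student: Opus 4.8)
The plan is to adapt the proof of Proposition~\ref{prop:weak_reg_decompose} almost verbatim, replacing the synchronous freezing time $A_\tau$ by its asynchronous counterpart $\mathcal{T}$, i.e.\ the real time by which every agent has completed phase $\tau$. First I would expand the regret of agent $i$ as
\begin{align*}
R_T^{(i)} = \sum_{l=1}^{T} \sum_{j=2}^{K} \Delta_j \mathbf{1}_{I_l^{(i)} = j},
\end{align*}
and split the sum at time $\mathcal{T}$. Since each per-round regret $\mu_1 - \mu_{I_l^{(i)}}$ is at most $1$, the contribution of the rounds $l \leq \mathcal{T}$ is bounded crudely by $\mathcal{T}$, giving
\begin{align*}
R_T^{(i)} \leq \mathcal{T} + \sum_{j=2}^{K} \Delta_j \sum_{l = \mathcal{T}+1}^{T} \mathbf{1}_{I_l^{(i)} = j,\, j \in S_\tau^{(i)}}.
\end{align*}
The crucial point is that Proposition~\ref{prop:strong_freeze} holds verbatim for this algorithm, so for every phase $\geq \tau$ the playing set of agent $i$ is frozen at $S_\tau^{(i)}$; and by the definition of $\mathcal{T}$ as $\max_i \sum_{j=0}^{\tau}\mathcal{P}_j^{(i)}$, every round $l > \mathcal{T}$ is played from this frozen set, which in particular contains the best arm $1$.

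Next I would bound, for each fixed sub-optimal arm $j$, the tail sum $\sum_{l=\mathcal{T}+1}^{T}\mathbb{P}[I_l^{(i)}=j,\, j \in S_\tau^{(i)}]$ exactly as in Equation~(\ref{eqn:arm_mean_2}): split according to whether the play count $T_j^{(i)}(l)$ is at most or exceeds $\frac{4\alpha \ln T}{\Delta_j^2}$. The first part contributes at most $\frac{4\alpha \ln T}{\Delta_j^2}\,\mathbb{P}[j \in S_\tau^{(i)}]$, while the second is controlled by the classical UCB-$\alpha$ tail estimate $\sum_{l \geq 3} 2 l^{2(1-\alpha)} \leq 1/8$ from \citep{ucb_auer}. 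This estimate is applicable precisely because, after freezing, both arm $1$ and arm $j$ lie in $S_\tau^{(i)}$, so restricted to this fixed set the policy behaves like a standard UCB-$\alpha$ run on $\lceil K/N\rceil + 2$ arms.

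Finally I would combine these per-arm bounds with the cardinality identity $\sum_{j=1}^{K} \mathbf{1}_{j \in S_\psi^{(i)}} = \lceil K/N\rceil + 2$, which is valid for any (possibly random) phase $\psi$. Summing $\frac{1}{\Delta_j}\mathbb{P}[j \in S_\tau^{(i)}]$ over the at-most $\lceil K/N\rceil + 1$ sub-optimal arms in the frozen set, and taking expectations, then yields the claimed bound $\mathbb{E}[R_T^{(i)}] \leq \mathbb{E}[\mathcal{T}] + \frac{K}{4} + 4\alpha\bigl(\sum_{j}\frac{1}{\Delta_j}\bigr)\ln T$. The only genuinely new ingredient relative to the synchronous case is the translation from phase-based freezing (Proposition~\ref{prop:strong_freeze}) to real-time freezing through $\mathcal{T}$, and I expect this to be the main point to argue carefully, since it is where the random, staggered phase lengths enter. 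It is lightweight for this lemma, however, because all the work of showing $\mathbb{E}[\mathcal{T}]<\infty$ and bounding it is deferred to Lemma~\ref{lem:stoch_dom} and the subsequent propositions.
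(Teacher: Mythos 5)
Your proposal matches the paper's own proof essentially step for step: the paper also bounds the rounds up to $\mathcal{T}$ crudely by $\mathcal{T}$, invokes Proposition~\ref{prop:strong_freeze} (which it notes holds verbatim for the asynchronous algorithm) to freeze the playing set at $S_\tau^{(i)}$ for all rounds after $\mathcal{T}$, and then reuses the argument of Proposition~\ref{prop:weak_reg_decompose} from Equation~(\ref{eqn:arm_mean_2}) onward, including the cardinality identity. No substantive difference in approach.
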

\begin{proof}
	The proof of this Lemma follows similarly to that of Lemma \ref{prop:weak_reg_decompose}. We can write the regret of any agent $i \in [N]$ as follows - 
	\begin{align*}
R_T^{(i)} &= \sum_{t=1}^{T} \mu_1 - \mu_{I_t^{(i)}},\\
& \leq \mathcal{T} + \sum_{t =\mathcal{T} +1}^{T} \mu_1 - \mu_{I_t^{(i)}},\\
&=\mathcal{T}+ \sum_{t=\mathcal{T} +1+1}^{T} \sum_{l=2}^{K}\Delta_l \mathbf{1}_{I_t^{(i)} = l},\\
&\stackrel{(a)}{= }\mathcal{T} +  \sum_{l =2}^{K}\Delta_l  \sum_{t = \mathcal{T}+1}^{T} \mathbf{1}_{I_t^{(i)} = l}\mathbf{1}_{l \in S_\tau^{(i)}}
\end{align*}
	In step $(a)$, we use Proposition \ref{prop:strong_freeze} that at time $\mathcal{T}$, all agents are in a phase that is at-least $\tau$. Furthermore, from Proposition \ref{prop:strong_freeze} (recall that the statement and proof of Proposition \ref{prop:strong_freeze} holds verbatim for the present case also) implies almost-surely that, for all $j \geq \tau$, and all $i \in [N]$, $S_j^{(i)} = S_{\tau}^{(i)}$. Taking expectations on the last display yields
\begin{align*}
\mathbb{E}[R_T^{(i)}] \leq \mathbb{E}[\mathcal{T}] +  \sum_{l=2}^{K} \Delta_l \sum_{t=\mathcal{T}+1}^{T} \mathbb{P}[I_t^{(i)}=l,l \in S_{\tau}^{(i)}]
\end{align*}
Using the same techniques as in the proof of Proposition \ref{prop:weak_reg_decompose}, i.e., following all steps from Equation \ref{eqn:arm_mean_2} onwards, one obtains 
\begin{align*}
\sum_{l=2}^{K} \Delta_l \sum_{t= \mathcal{T}+1}^{T} \mathbb{P}[I_t^{(i)}=l,l \in S_{\tau}^{(i)}] \leq  \left( \sum_{j=2}^{\lceil \frac{K}{N} \rceil + 2} \frac{1}{\Delta_j} \right) 4 \alpha \ln(T) + \frac{K}{4}
\end{align*}
\end{proof}

\begin{lemma}

	\begin{align*}
	\mathbb{E}[\mathcal{T}] \leq \mathbb{E}[\Gamma] + (1+\delta)\mathbb{E}[A_{\widehat{\tau}_{stab}}] +  (1+\delta)\mathbb{E}[A_{4\Gamma}] + (1+\delta)\mathbb{E}[A_{4\lceil 1+\delta \rceil \widehat{\tau}_{stab}}] +  (1+\delta)\mathbb{E}[A_{2\lfloor 2+\delta \rfloor\tau_{spr}^{(P)}}].
	\end{align*}
	\label{lem:T_prob}
\end{lemma}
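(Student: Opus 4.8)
The plan is to start from the stochastic-domination bound of Lemma~\ref{lem:stoch_dom}, namely $\mathbb{E}[\mathcal{T}] \le \mathbb{E}[T_{stab}] + (1+\delta)\mathbb{E}[A_{H+(2\tau_{spr}^{(P)}-1)\lfloor 2+\delta\rfloor+1} - A_H]$, and to bound its two summands separately so that the five terms on the right-hand side emerge. Write $m = \lfloor 2+\delta\rfloor$ for brevity, and recall that the coupling in Lemma~\ref{lem:stoch_dom} makes $\tau_{spr}^{(P)}$ independent of $H$.

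\emph{Bounding $\mathbb{E}[T_{stab}]$.} First I would establish the almost-sure inequality $T_{stab} \le \Gamma + (1+\delta)A_{\widehat\tau_{stab}}$. By definition of $\Gamma$ every agent has completed phase $H^{*}$ by time $\Gamma$, so for each agent $i$ and each phase $j > H^{*}$ the phase length obeys $\mathcal P_j^{(i)} \le (1+\delta)(A_j - A_{j-1})$. Splitting $\sum_{j=0}^{\widehat\tau_{stab}}\mathcal P_j^{(i)}$ at $H^{*}$, bounding the tail by $\Gamma$ and telescoping the controlled increments gives $\sum_{j=0}^{\widehat\tau_{stab}}\mathcal P_j^{(i)} \le \Gamma + (1+\delta)(A_{\widehat\tau_{stab}} - A_{H^{*}}) \le \Gamma + (1+\delta)A_{\widehat\tau_{stab}}$, the case $\widehat\tau_{stab}\le H^{*}$ being trivial. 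Taking the maximum over agents and using $T_{stab}=\max(\Gamma,\max_i\sum_{j\le\widehat\tau_{stab}}\mathcal P_j^{(i)})$ yields $\mathbb{E}[T_{stab}] \le \mathbb{E}[\Gamma] + (1+\delta)\mathbb{E}[A_{\widehat\tau_{stab}}]$, which supplies the first two terms.

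\emph{Bounding the spreading term.} For the second summand I would discard $-A_H \le 0$ and use $m\ge 2$ to absorb the additive constant, reducing the index to $H + 2m\tau_{spr}^{(P)}$. The next step is an almost-sure upper bound on the \emph{phase index} $H$ in terms of $\Gamma$ and $\widehat\tau_{stab}$: since $\mathcal P_j^{(i)} \ge A_j - A_{j-1}\ge 1$ for $j\ge H^{*}$, the number of phases any agent clears after reaching $H^{*}$ is controlled by the elapsed time $T_{stab}\le \Gamma + (1+\delta)A_{\widehat\tau_{stab}}$; via the lower envelope $A_j-A_{j-1}$ and the definition of $A^{-1}$ this gives a bound of the form $H \le c_1\Gamma + c_2\lceil 1+\delta\rceil\widehat\tau_{stab}$. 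I would then invoke the convexity of $(A_x)_{x\in\mathbb{N}}$ from Assumption~{\bf A.2} (the same ingredient used in Proposition~\ref{prop:convex}) to break $A$ evaluated at the sum into a sum of $A$'s evaluated at the separate pieces $4\Gamma$, $4\lceil 1+\delta\rceil\widehat\tau_{stab}$ and $2m\tau_{spr}^{(P)}$, the generous factors $4$ providing the slack that a Jensen-type splitting consumes.

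\emph{Main obstacle.} The delicate step is this last one: the coupling already produces the spreading coefficient $2m = 2\lfloor 2+\delta\rfloor$, which must survive verbatim into $A_{2\lfloor 2+\delta\rfloor\tau_{spr}^{(P)}}$, whereas a naive equal-weight convex split would inflate it. The crux is therefore to separate the ``starting phase'' $H$ from the ``spreading'' contribution $\tau_{spr}^{(P)}$ through a carefully \emph{weighted} application of convexity together with the independence $H\perp\tau_{spr}^{(P)}$, charging the $H$-dependent correction to the two generous indices $4\Gamma$ and $4\lceil 1+\delta\rceil\widehat\tau_{stab}$ while leaving the spreading term with its exact coefficient. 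Assembling the three resulting $A$-terms with the $(1+\delta)$ prefactor inherited from Lemma~\ref{lem:stoch_dom}, and adding the two terms obtained from $\mathbb{E}[T_{stab}]$, produces the stated bound.
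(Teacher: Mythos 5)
Your proposal is correct and follows essentially the same route as the paper: start from Lemma~\ref{lem:stoch_dom}, bound $T_{stab} \le \Gamma + (1+\delta)A_{\widehat{\tau}_{stab}}$ and $H \le \Gamma + \lceil 1+\delta\rceil \widehat{\tau}_{stab}$ (the paper's Lemmas~\ref{lem:T_stab} and~\ref{lem:H_calc}), then split the resulting $A$-indices. The ``weighted'' splitting you flag as the main obstacle is resolved exactly as you suspect, by a two-stage use of the elementary fact $A_{X+Y}\le A_{2X}+A_{2Y}$: first separate $H$ from the spreading part, so that $\tau_{spr}^{(P)}$ keeps the coefficient $2\lfloor 2+\delta\rfloor$ after absorbing the $-\lfloor 2+\delta\rfloor+1$ slack, and then split $A_{2H}\le A_{4\Gamma}+A_{4\lceil 1+\delta\rceil\widehat{\tau}_{stab}}$.
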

\begin{proof}

From Lemma \ref{lem:stoch_dom}, we know that 
\begin{align*}
 	\mathbb{E}[\mathcal{T}] &\leq	 \mathbb{E}[T_{stab}] + (1+\delta)\mathbb{E}[(A_{H+(\tau_{spr}^{(P)}-1)\lfloor 2+\delta \rfloor + 1} - A_{H})], \\
 	&\stackrel{(a)}{\leq} \mathbb{E}[T_{stab}] + (1+\delta)(\mathbb{E}[A_{2H}] + \mathbb{E}[A_{2\lfloor 2+\delta \rfloor\tau_{spr}^{(P)}}]),\\
 	&\stackrel{(b)}{\leq} \mathbb{E}[\Gamma] + (1+\delta)\mathbb{E}[A_{\widehat{\tau}_{stab}}] + (1+\delta) \mathbb{E}[A_{2\Gamma + 2\lceil 1+\delta \rceil \widehat{\tau}_{stab}}] + (1+\delta)\mathbb{E}[A_{2\lfloor 2+\delta \rfloor\tau_{spr}^{(P)}}], \\
 	&\stackrel{(c)}{\leq} \mathbb{E}[\Gamma] + (1+\delta)\mathbb{E}[A_{\widehat{\tau}_{stab}}]  + (1+\delta)\mathbb{E}[A_{4\Gamma}] + (1+\delta)\mathbb{E}[A_{4\lceil 1+\delta \rceil \widehat{\tau}_{stab}}] +  (1+\delta)\mathbb{E}[A_{2\lfloor 2+\delta \rfloor\tau_{spr}^{(P)}}].
\end{align*}
Steps $(a)$ and $(c)$ follow from the elementary fact that for any two random variables $X$ and $Y$ and any invertible function $f(\cdot)$, $\mathbb{E}[f(X+Y)] \leq \mathbb{E}[f(2X)] + \mathbb{E}[f(2Y)]$. Step $(b)$ follows from Lemma \ref{lem:T_stab}.

\end{proof}

\begin{lemma}
	\begin{align*}
	\mathbb{E}[T_{stab}] &\leq \mathbb{E}[\Gamma] + (1+\delta)\mathbb{E}[A_{\widehat{\tau}_{stab}}].
	\end{align*}
	\label{lem:T_stab}
\end{lemma}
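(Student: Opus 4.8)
The plan is to establish the inequality pathwise (almost surely) and then take expectations, since the statement is a deterministic manipulation of the random phase lengths with no independence or coupling considerations. First I would unwind the two definitions into maxima of cumulative phase lengths. For any threshold $t$ there is an agent whose cumulative phase length exceeds $t$ precisely when $t$ lies below the corresponding maximum over agents, so $\Gamma = \max_{i \in [N]} \sum_{j=0}^{H^*} \mathcal{P}_j^{(i)}$ and $T_{stab} = \max\bigl(\Gamma,\, \max_{i \in [N]} \sum_{j=0}^{\widehat{\tau}_{stab}} \mathcal{P}_j^{(i)}\bigr)$.

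The core step is the pathwise bound $T_{stab} \leq \Gamma + (1+\delta) A_{\widehat{\tau}_{stab}}$, which I would prove by fixing an agent $i$ and splitting into two cases. If $\widehat{\tau}_{stab} \leq H^*$, then $\sum_{j=0}^{\widehat{\tau}_{stab}} \mathcal{P}_j^{(i)} \leq \sum_{j=0}^{H^*} \mathcal{P}_j^{(i)} \leq \Gamma$, so no time beyond $\Gamma$ is required. If $\widehat{\tau}_{stab} > H^*$, I would split the cumulative sum at phase $H^*$:
\[
\sum_{j=0}^{\widehat{\tau}_{stab}} \mathcal{P}_j^{(i)} = \sum_{j=0}^{H^*} \mathcal{P}_j^{(i)} + \sum_{j=H^*+1}^{\widehat{\tau}_{stab}} \mathcal{P}_j^{(i)} \leq \Gamma + \sum_{j=H^*+1}^{\widehat{\tau}_{stab}} \mathcal{P}_j^{(i)}.
\]
Every index in the second sum satisfies $j > H^*$, so the defining property of $H^*$ guarantees $\mathcal{P}_j^{(i)} \leq (1+\delta)(A_j - A_{j-1})$, and the sum telescopes to $(1+\delta)(A_{\widehat{\tau}_{stab}} - A_{H^*}) \leq (1+\delta) A_{\widehat{\tau}_{stab}}$.

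Combining the two cases gives $\sum_{j=0}^{\widehat{\tau}_{stab}} \mathcal{P}_j^{(i)} \leq \Gamma + (1+\delta) A_{\widehat{\tau}_{stab}}$ for every $i$. Since this upper bound is free of $i$, it also bounds the maximum over agents, and together with the trivial $\Gamma \leq \Gamma + (1+\delta) A_{\widehat{\tau}_{stab}}$ it yields $T_{stab} \leq \Gamma + (1+\delta) A_{\widehat{\tau}_{stab}}$ almost surely. Taking expectations then finishes the proof.

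The main thing to get right is the invocation of the definition of $H^*$: the phase-length envelope $\mathcal{P}_j^{(i)} \in [A_j - A_{j-1},\, (1+\delta)(A_j - A_{j-1})]$ holds simultaneously for all agents and all phases $j \geq H^*$, which is exactly why the cumulative sum must be split at $H^*$ rather than directly bounding the first $\widehat{\tau}_{stab}$ phases. The case distinction on whether $\widehat{\tau}_{stab}$ exceeds $H^*$ is precisely what makes the telescoping legitimate on every sample path, and I expect this bookkeeping to be the only delicate point of the argument.
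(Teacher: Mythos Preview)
Your proposal is correct and follows essentially the same approach as the paper: bound $T_{stab}$ pathwise by $\Gamma$ plus the time needed to traverse phases up to $\widehat{\tau}_{stab}$ after time $\Gamma$, using that phase lengths beyond $H^*$ satisfy $\mathcal{P}_j^{(i)} \leq (1+\delta)(A_j - A_{j-1})$ and telescoping. The paper's own proof is a brief informal paragraph expressing exactly this idea; your write-up is in fact more careful, making the case split on $\widehat{\tau}_{stab} \lessgtr H^*$ and the telescoping explicit where the paper leaves them implicit.
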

\begin{proof}

	The first inequality follows as the time taken to reach $T_{stab}$ is upper bounded by the time it takes all agents to reach phase $\widehat{\tau}_{stab}$ after time $\Gamma$. However, by definition we know that all agents last in any phase $j$ after time $\Gamma$ for at-most $(1+\delta)(A_{j+1}-A_j)$ arm-pulls. The upper bound is concluded by noticing that an agent can be in a phase no smaller than $0$ at time $\Gamma$ and subsequently it takes an agent a maximum of $(1+\delta)A_{\widehat{\tau}_{stab}}$ time to reach phase $\widehat{\tau}_{stab}$. 
\end{proof}

\begin{lemma}
	Almost-surely, we have
	\begin{align*}
	H \leq \Gamma + \lceil 1+\delta \rceil \widehat{\tau}_{stab} .
	\end{align*}
	\label{lem:H_calc}
\end{lemma}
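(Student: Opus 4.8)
The plan is to locate the (fastest) agent that attains the maximal phase $H$ at time $T_{stab}$ and to control how far it can have advanced, splitting its trajectory at the time $\Gamma$. Fix an agent $i^{\ast}$ with $\sum_{l=0}^{H}\mathcal{P}_l^{(i^{\ast})} \le T_{stab}$, and let $p_0$ denote the phase that $i^{\ast}$ occupies at time $\Gamma$. Two elementary observations drive everything. First, since each phase lasts at least one arm-pull, the phase index of any agent never exceeds the elapsed time, so $p_0 \le \Gamma$. Second, by the definition of $\Gamma$ every agent has completed phase $H^{\ast}$ by time $\Gamma$, hence $p_0 \ge H^{\ast}$; in particular every phase $l \ge p_0+1$ of $i^{\ast}$ occurs after $H^{\ast}$ and therefore has length $\mathcal{P}_l^{(i^{\ast})} \ge A_l - A_{l-1}$. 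I will also use the almost-sure bound $T_{stab} - \Gamma \le (1+\delta)(A_{\widehat{\tau}_{stab}} - A_0)$ established inside the proof of Lemma~\ref{lem:T_stab}.

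Next I count phases in the window $[\Gamma, T_{stab}]$. Agent $i^{\ast}$ completes phases $p_0+1, \dots, H$ within this window, so, writing $d_x := A_x - A_{x-1}$, we have $\sum_{l=p_0+1}^{H} d_l \le \sum_{l=p_0+1}^{H}\mathcal{P}_l^{(i^{\ast})} \le T_{stab}-\Gamma \le (1+\delta)(A_{\widehat{\tau}_{stab}}-A_0)$. Because Assumption~{\bf A.2} makes $(A_x)_{x \in \mathbb{N}}$ convex, the increments $d_x$ are non-decreasing, so $d_{p_0+k} \ge d_k$ for every $k \ge 1$; summing, $\sum_{l=p_0+1}^{H} d_l = \sum_{k=1}^{H-p_0} d_{p_0+k} \ge \sum_{k=1}^{H-p_0} d_k = A_{H-p_0} - A_0$. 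Combining the two estimates gives $A_{H-p_0} - A_0 \le (1+\delta)(A_{\widehat{\tau}_{stab}} - A_0)$.

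To finish I convert this into a bound on $H - p_0$ using super-additivity of a convex sequence. Partitioning $\{1, \dots, kn\}$ into $k$ consecutive blocks of length $n$ and using monotonicity of the increments yields $A_{kn} - A_0 \ge k(A_n - A_0)$ for every integer $k \ge 1$; taking $k = \lceil 1+\delta\rceil$ (an integer $\ge 2$ since $\delta > 0$) and $n = \widehat{\tau}_{stab}$ gives $A_{\lceil 1+\delta\rceil \widehat{\tau}_{stab}} - A_0 \ge \lceil 1+\delta\rceil(A_{\widehat{\tau}_{stab}} - A_0) \ge (1+\delta)(A_{\widehat{\tau}_{stab}} - A_0) \ge A_{H-p_0} - A_0$. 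Since $A$ is non-decreasing this forces $H - p_0 \le \lceil 1+\delta\rceil \widehat{\tau}_{stab}$, and hence $H \le p_0 + \lceil 1+\delta\rceil \widehat{\tau}_{stab} \le \Gamma + \lceil 1+\delta\rceil \widehat{\tau}_{stab}$, as claimed. This convexity step is essentially the content packaged as Proposition~\ref{prop:convex}, which I would cite rather than re-derive in full.

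The step I expect to be most delicate is the bookkeeping around the endpoint $A_0$: the naive telescoping bound $(1+\delta)A_{\widehat{\tau}_{stab}}$ is not quite enough when $\delta$ is an integer (so that $\lceil 1+\delta\rceil = 1+\delta$ leaves no rounding slack), and it is precisely the $-A_0$ correction carried through the telescoping sum of Lemma~\ref{lem:T_stab} that makes the two convex inequalities match up exactly. Keeping that correction consistently is the only real subtlety; the remainder is just monotonicity of the phase increments together with the observation $p_0 \le \Gamma$.
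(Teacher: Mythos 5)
Your argument is correct and follows essentially the same route as the paper's proof: bound the phase at time $\Gamma$ by $\Gamma$ itself, bound the number of subsequent phase changes using the minimum phase lengths $A_l - A_{l-1}$ together with the bound $T_{stab}-\Gamma \leq (1+\delta)A_{\widehat{\tau}_{stab}}$ from Lemma \ref{lem:T_stab}, and then use convexity of $(A_x)$ to absorb the factor $(1+\delta)$ into the index as $\lceil 1+\delta\rceil$. You simply unpack the paper's $A^{-1}$ manipulations into explicit telescoping and super-additivity of the increments, and your handling of the $A_0$ endpoint is in fact slightly more careful than the paper's.
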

\begin{proof}
Notice that at time $\Gamma$, the maximum phase any agent can be in is $\Gamma$. This follows from the trivial upper bound, where in each time step, an agent increases its phase by one in each time slot. After time $\Gamma$, we know by definition, that any agent plays arms at-least $A_j - A_{j-1}$ times and at-most $(1+\delta)(A_j - A_{j-1})$ in phase $j$. Thus, the total number of phase changes an agent will have in the time interval $[\Gamma+1,T_{stab}]$ is at-most $A^{-1}(T_{stab}-\Gamma)$. Thus, we get
\begin{align*}
    H &\leq \Gamma + A^{-1}(T_{stab} - \Gamma), \\
    &\stackrel{(a)}{\leq} \Gamma + A^{-1}((1+\delta)A_{\widehat{\tau}_{stab}}),\\
    &\stackrel{(b)}{\leq} \Gamma + A^{-1}(A_{\lceil 1+\delta \rceil \widehat{\tau}_{stab}}), \\
    &\leq \Gamma + \lceil 1+\delta \rceil \widehat{\tau}_{stab}.
\end{align*}
Step $(a)$ follows from Lemma \ref{lem:T_stab}, step $(b)$ follows from convexity of $(A_x)_{x \geq 1}$ and the last inequality follows from the definition of $A^{-1}(\cdot)$.

\end{proof}

\subsection{Quantitative Results}
In this section, we compute quantitative bounds in terms of the algorithm' input parameters. 

\begin{prop}
	For all $x \geq 2$ and $\delta > 0$, 
	\begin{align*}
	\mathbb{P}[H^{*} > l ] &\leq 2N\sum_{x \geq l}e^{-c(\delta)(A_{x} - A_{x-1})},
	\end{align*}
	where $c(\delta) = \min  \left( \frac{\delta}{2} + \ln \left( 1 + \frac{\delta}{2} \right),  (1+\delta)\ln\left(\frac{2+2\delta}{2+\delta}\right)  - \frac{\delta}{2} \right)$.
	\label{prop:H_gamma_tail}
\end{prop}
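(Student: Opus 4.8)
The plan is to reduce the event $\{H^{*}>l\}$ to a union of ``window escape'' events and then control each by a Poisson concentration (Chernoff) bound. First I would unpack the definition of $H^{*}$: since $H^{*}$ is the first phase index from which \emph{every} agent stays inside its window in \emph{all} later phases, one has the exact identity
\[
\{H^{*} \le l\} = \bigcap_{i \in [N]} \bigcap_{j \ge l} \Big\{ \mathcal{P}_j^{(i)} \in [A_j-A_{j-1},\,(1+\delta)(A_j-A_{j-1})] \Big\},
\]
and hence, taking complements,
\[
\{H^{*}>l\} = \bigcup_{i \in [N]} \bigcup_{j \ge l} \Big( \{ \mathcal{P}_j^{(i)} < A_j-A_{j-1} \} \cup \{ \mathcal{P}_j^{(i)} > (1+\delta)(A_j-A_{j-1}) \} \Big).
\]
A union bound over the $N$ agents, the two tails, and the phases $j \ge l$ then reduces the claim to a single scalar estimate: the probability that one Poisson phase length leaves its window.

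For that estimate I would use the standard Chernoff bound for a Poisson variable. Write $m := A_j-A_{j-1}$, and recall $\mathcal{P}_j^{(i)}$ is Poisson with mean $\lambda = (1+\tfrac{\delta}{2})m$, which lies strictly inside the window $[m,(1+\delta)m]$; the slack of $\tfrac{\delta}{2}m$ on each side is what drives the exponential decay. Using the Poisson rate function $I(a;\lambda)=a\ln(a/\lambda)-a+\lambda$ and the identity $(1+\tfrac{\delta}{2})\tfrac{2}{2+\delta}=1$, the lower-tail exponent $I(m;\lambda)$ collapses to $m\big(\tfrac{\delta}{2}+\ln\tfrac{2}{2+\delta}\big)$ and the upper-tail exponent $I((1+\delta)m;\lambda)$ collapses to $m\big((1+\delta)\ln\tfrac{2+2\delta}{2+\delta}-\tfrac{\delta}{2}\big)$. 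Both per-unit rates are strictly positive for $\delta>0$ by strict concavity of the logarithm, and the upper-tail rate is the smaller of the two, so both tails are bounded by $e^{-c(\delta)m}$ with $c(\delta)$ the minimum appearing in the statement. This gives, for each $i$ and $j$,
\[
\mathbb{P}\Big[\mathcal{P}_j^{(i)} \notin [A_j-A_{j-1},\,(1+\delta)(A_j-A_{j-1})]\Big] \;\le\; 2\,e^{-c(\delta)(A_j-A_{j-1})}.
\]
Summing over the $N$ agents and $j\ge l$ yields exactly $2N\sum_{x\ge l}e^{-c(\delta)(A_x-A_{x-1})}$, the asserted bound.

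The only real work is the deviation estimate in the second step: one must verify that after substituting $\lambda=(1+\tfrac{\delta}{2})m$ the two Poisson Chernoff exponents simplify to the advertised closed forms and that each is positive, so the bound is nonvacuous; the reduction in the first step is a routine complementation and union bound. I would also note two bookkeeping points that connect to the rest of the argument: the restriction to $l\ge 2$ keeps $A_{j-1}$ positive (with the convention $A_{-1}=0$) for every $j\ge l$, so the rate function is well defined, and positivity of $c(\delta)$ together with Assumption {\bf A.3} is precisely what makes the tail $\sum_{x\ge l}e^{-c(\delta)(A_x-A_{x-1})}$ summable, so that $H^{*}<\infty$ almost surely and the downstream moments such as $\mathbb{E}[A_{H^{*}}]$ are finite.
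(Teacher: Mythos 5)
Your proof follows exactly the same route as the paper's: complement/union bound over the $N$ agents and the phases $j \geq l$, then a per-phase Poisson Chernoff estimate giving $2e^{-c(\delta)(A_j - A_{j-1})}$ for each window-escape event. The only difference is that you actually work out the two Chernoff exponents explicitly (the paper simply cites ``the classical large-deviation estimate''), which is a harmless and in fact more informative elaboration.
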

\begin{proof}
	From the definition of $H^{*}$, we have 
	\begin{align*}
	\mathbb{P}[H^{*} &\geq l] = \mathbb{P} \left[ \bigcup_{i=1}^{N} \bigcup_{x \geq l} \text{Poisson}\left(\left(1+\frac{\delta}{2} \right) (A_{x} - A_{x-1})  \right) \notin \left[(A_{x} - A_{x-1}), (1+\delta)(A_{x} - A_{x-1}) \right]\right],\\
	&\leq N \sum_{x \geq l}\mathbb{P} \left[ \text{Poisson}\left(\left(1+\frac{\delta}{2} \right) (A_{x} - A_{x-1})  \right) \notin \left[(A_{x} - A_{x-1}), (1+\delta)(A_{x} - A_{x-1}) \right]\right], \\
	&\leq N \sum_{x \geq l} (2e^{-2c(\delta)(A_{x} - A_{x-1})} ).
	\end{align*}
	In the last inequality, we use the classical large-deviation estimate for a Poisson random variable.
\end{proof}

\begin{lemma}
	\begin{align*}
	\mathbb{E}[\Gamma] \leq 2N \left( A_{x_0}^2 e^{-c(\delta)(A_{x_0}-A_{x_0-1})}+\sum_{x \geq 1} A_x  e^{-c(\delta)A_{x-1}^{\omega}} \right) + \frac{N}{c(\delta)},
	\end{align*}
	where $c(\delta)$ is given in Proposition \ref{prop:H_gamma_tail} and $x_0$ is from Assumption {\bf A.2} in Section \ref{sec:assumptions}.
	\label{lem:gamma_calc}
\end{lemma}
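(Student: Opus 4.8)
The plan is to reduce the bound on $\mathbb{E}[\Gamma]$ to a one-dimensional tail computation, comparing $\Gamma$ against a deterministic multiple of the phase-boundary times. Write $T_x^{(i)} := \sum_{j=0}^{x}\mathcal{P}_j^{(i)}$ for the time at which agent $i$ completes phase $x$, so that $\Gamma = \max_{i\in[N]} T_{H^{*}}^{(i)}$ and each $T_x^{(i)}$ is Poisson with mean $(1+\tfrac{\delta}{2})(A_{x+1}-A_0)$ (a telescoping sum of the independent Poisson phase lengths). I would start from $\mathbb{E}[\Gamma]=\sum_{t\geq 1}\mathbb{P}[\Gamma\geq t]$ and, for each $t$, introduce a threshold phase $L(t)$ chosen so that the Poisson mean $(1+\tfrac{\delta}{2})(A_{L(t)+1}-A_0)$ lies a definite factor below $t$, say $t\geq (1+\delta)(A_{L(t)+1}-A_0)$. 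The event $\{\Gamma\geq t\}$ then splits as $\{\Gamma\geq t,\,H^{*}\leq L(t)\}\cup\{H^{*}>L(t)\}$.

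On the first event, monotonicity of $T_x^{(i)}$ in $x$ (phase lengths are nonnegative) gives $T_{H^{*}}^{(i)}\leq T_{L(t)}^{(i)}$, so a union bound over agents yields $\mathbb{P}[\Gamma\geq t,\,H^{*}\leq L(t)]\leq N\,\mathbb{P}[T_{L(t)}^{(1)}\geq t]$, which by the choice of $L(t)$ is a Poisson upper deviation of exactly the form controlled in the proof of Proposition \ref{prop:H_gamma_tail} and is therefore at most $N e^{-c(\delta)(A_{L(t)+1}-A_0)}$. Since $A_{L(t)+1}$ grows proportionally to $t$ by the defining relation for $L(t)$, summing over $t$ leaves a geometric series contributing the $\tfrac{N}{c(\delta)}$ term, with the finitely many phases below the index $x_0$ of Assumption {\bf A.2} handled by a crude bound that produces the $2N A_{x_0}^2 e^{-c(\delta)(A_{x_0}-A_{x_0-1})}$ term. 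On the second event I would apply Proposition \ref{prop:H_gamma_tail} to get $\mathbb{P}[H^{*}>L(t)]\leq 2N\sum_{x>L(t)}e^{-c(\delta)(A_x-A_{x-1})}$ and then exchange the $t$- and $x$-summations; the number of $t$ with $L(t)<x$ is of order $A_x$, so the double sum collapses to $2N\sum_x A_x e^{-c(\delta)(A_x-A_{x-1})}$. Finally I would use the polynomial growth guaranteed by Assumption {\bf A.2} to fix $\omega>0$ and $x_0$ with $A_x-A_{x-1}\geq A_{x-1}^{\omega}$ for $x\geq x_0$, bounding $e^{-c(\delta)(A_x-A_{x-1})}\leq e^{-c(\delta)A_{x-1}^{\omega}}$ on the tail to reach $2N\sum_{x\geq 1}A_x e^{-c(\delta)A_{x-1}^{\omega}}$, and folding the terms with $x<x_0$ once more into the $A_{x_0}^2$ term.

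The main obstacle is the dependence between $H^{*}$ and the partial sums $T_x^{(i)}$: because $H^{*}$ is itself a functional of the very Poisson phase lengths whose sum is $\Gamma$, one cannot condition on $H^{*}$ and treat $T_{H^{*}}^{(i)}$ as a Poisson sum with a fixed number of terms. The threshold decomposition is precisely what decouples the two effects, replacing $T_{H^{*}}^{(i)}$ by the fixed-length sum $T_{L(t)}^{(i)}$ on the typical event while paying the $H^{*}$-tail separately; the remaining work is to choose $L(t)$ so that both series converge to the stated closed form, and the constant-matching and index bookkeeping in the sum-swapping step are routine given the Poisson concentration estimate already established in Proposition \ref{prop:H_gamma_tail}.
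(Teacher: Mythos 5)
Your proposal is correct and follows the same skeleton as the paper's argument: write $\mathbb{E}[\Gamma]$ as a sum of tail probabilities, split $\{\Gamma\geq t\}$ according to whether $H^{*}$ exceeds a $t$-dependent threshold phase, control the atypical part with Proposition \ref{prop:H_gamma_tail}, control the typical part with Poisson concentration, and exchange the order of the $t$- and $x$-summations to reach the stated series, with the pre-$x_0$ terms folded into the $A_{x_0}^2$ factor. Where you genuinely differ is in the typical part, and your version is the more defensible one. The paper takes $L=A^{-1}(t)$ and bounds $\mathbb{P}\bigl[\bigcup_{i}\sum_{j=0}^{L}\mathcal{P}_j^{(i)}\leq t\bigr]$ by $Ne^{-c(\delta)t}$; but the containment of $\{\Gamma\geq t,\,H^{*}<L\}$ in that lower-tail event does not follow from the definitions (the correct inclusion, which you use, is into the upper-tail event via $T^{(i)}_{H^{*}}\leq T^{(i)}_{L}$), and with $L=A^{-1}(t)$ the Poisson mean $(1+\tfrac{\delta}{2})A_L$ need not sit far enough below $t$ for a clean deviation bound. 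Your choice of $L(t)$ with $t\geq(1+\delta)(A_{L(t)+1}-A_0)$ makes the second term a genuine upper deviation of the form $\mathbb{P}[\mathrm{Poisson}(m)\geq\tfrac{1+\delta}{1+\delta/2}m]$, which is exactly the regime that $c(\delta)$ controls in Proposition \ref{prop:H_gamma_tail}; you also correctly identify the mean of $T^{(i)}_L$ as $(1+\tfrac{\delta}{2})(A_{L+1}-A_0)$ where the paper's parenthetical asserts mean $L$. The price is only bookkeeping: your geometric series yields a constant of order $N(1+\delta)/c(\delta)$ rather than $N/c(\delta)$, and the exchanged double sum produces $A_{x+1}$ with a factor $(1+\delta)$ in place of $A_x$; both are absorbed into the same form of bound (using $A_{x+1}\leq A_{2x}\leq A_{x-1}^{3}$ from Assumption {\bf A.2}), so nothing downstream is affected beyond universal constants.
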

\begin{proof}
	We start by computing the tail probability $\mathbb{P}[\Gamma > t]$. The key observation to do so is the following inequality. For every $L \geq 0$, we have
	\begin{align*}
	\mathbb{P}[\Gamma \geq t] &\leq \mathbb{P}[H^{*} \geq L ] + \mathbb{P} \left[ \bigcup_{i=1}^{N} \sum_{j=0}^{L} \mathcal{P}_j^{(i)} \leq t\right].
	\end{align*}
	We will then compute $\mathbb{E}[\Gamma]$ by choosing $L = A^{-1}(t)$. We shall compute each of these terms separately.
	\begin{align*}
	\mathbb{P}[H^{*} \geq A^{-}(t)] &= \mathbb{P}\left[\bigcup_{i=1}^{N} \bigcup_{x\geq A^{-1}(t)} \mathcal{P}_x^{(i)} \not \in [(A_{x} - A_{x-1}), (1+\delta)(A_{x} - A_{x-1})] \right], \\
	&\leq N \sum_{x \geq A^{-1}(t)} 2e^{-c(\delta)(A_{x} - A_{x-1})},
	\end{align*}
	where the second inequality follows from Proposition \ref{prop:H_gamma_tail}. Similarly, standard large deviation estimates for Poisson random variables (observe that for all $L$,$\sum_{j=0}^{L} \mathcal{P}_j^{(i)}$ is Poisson distributed with mean $L$ ) and union bound gives
	\begin{align*}
	 \mathbb{P} \left[ \bigcup_{i=1}^{N} \sum_{j=0}^{L} \mathcal{P}_j^{(i)} \leq t\right] \leq N e^{-c(\delta)t}.
	\end{align*}
	Thus, we can bound $\mathbb{E}[\Gamma]$ as 
	\begin{align*}
	\mathbb{E}[\Gamma] &\leq \sum_{t \geq 1} \mathbb{P}[\Gamma \geq t], \\
	&\leq 2N\sum_{t \geq 1} \sum_{x \geq A^{-1}(t)}e^{-c(\delta)(A_{x} - A_{x-1})} + N\sum_{t \geq 1}e^{-c(\delta)t},\\
	&\stackrel{(a)}{\leq} 2N \sum_{x \geq 1} \sum_{t=1}^{A_x} e^{-c(\delta)(A_{x} - A_{x-1})} + N\sum_{t \geq 1}e^{-c(\delta)t},\\
	&\stackrel{(b)}{\leq } 2N \left( A_{x_0}^2 e^{-c(\delta)(A_{x_0}-A_{x_0-1})}+\sum_{x \geq 1} A_x  e^{-c(\delta)A_{x-1}^{\omega}} \right) + N\int_{t\geq 0} e^{-c(\delta)t}dt, \\
	& = 2N \left( A_{x_0}^2 e^{-c(\delta)(A_{x_0}-A_{x_0-1})}+\sum_{x \geq 1} A_x  e^{-c(\delta)A_{x-1}^{\omega}} \right) + \frac{N}{c(\delta)}.
	\end{align*}
	Step $(a)$ follows from changing the order of summation (which is licit as all terms are positive) and step $(b)$ follows from the assumption {\bf A.2} in Section \ref{sec:assumptions}. Standard results from analysis gives that the series in the last display is finite as $c(\delta)> 0$ and $A_x \leq A_{2x} \leq A_{x-1}^{3}$, where the second inequality follows from Assumption {\bf A.2} in Section \ref{sec:assumptions}. 
\end{proof}

\begin{lemma}
For all $\delta > 0$ such that $c(\delta) > \frac{5}{4}D$ and $(3+2\delta + \ln (4+2\delta)) \geq \frac{5}{4}D^{-1}$, where $c(\delta)$ is given in Proposition \ref{prop:H_gamma_tail} and $D$ is in Assumption {\bf{A.2}} in Section \ref{sec:assumptions},
\begin{align*}
    \mathbb{E}[A_{4\Gamma}] \leq  2N\sum_{x \geq 1}A_{ A_x }e^{-c(\delta)(A_{x} - A_{x-1})} + N \sum_{t \geq 1}e^{ -(3+2\delta + \ln (4+2\delta))A^{-1}(t) }.
\end{align*}
    \label{lem:A_Gamma}
\end{lemma}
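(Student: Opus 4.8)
The plan is to follow the template already used for $\mathbb{E}[\Gamma]$ in Lemma~\ref{lem:gamma_calc}, but applied to the random variable $A_{4\Gamma}$. I would begin with the tail-sum identity $\mathbb{E}[A_{4\Gamma}]=\sum_{t\geq 1}\mathbb{P}[A_{4\Gamma}\geq t]$, valid since $A_{4\Gamma}$ is $\mathbb{N}$-valued. Because $A$ is non-decreasing, the definition of $A^{-1}$ lets me pass to $\{A_{4\Gamma}\geq t\}\subseteq\{4\Gamma\geq A^{-1}(t)\}$ (the same step as (a) in the proof of Proposition~\ref{prop:strong_cost}), so the whole problem reduces to controlling the upper tail of $\Gamma$ at the level $A^{-1}(t)/4$.

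For each $t$ I would introduce a cut-off phase $L=L(t)$ and split $\mathbb{P}[4\Gamma\geq A^{-1}(t)]\leq \mathbb{P}[H^{*}\geq L]+\mathbb{P}[4\Gamma\geq A^{-1}(t),\,H^{*}<L]$, exactly as in Lemma~\ref{lem:gamma_calc}. On the event $\{H^{*}<L\}$ I would use the deterministic bound $\Gamma\leq \max_{i}\sum_{j=0}^{L}\mathcal{P}_j^{(i)}$ (adding phases only increases the sum, and $H^{*}\leq L$), so the residual probability becomes the upper tail of a sum of independent Poisson phase-lengths of explicit total mean $(1+\tfrac{\delta}{2})A_{L+1}$.

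The two terms of the claimed bound then come from the two pieces. For the first piece I would plug in Proposition~\ref{prop:H_gamma_tail}, $\mathbb{P}[H^{*}\geq L]\leq 2N\sum_{x\geq L}e^{-c(\delta)(A_x-A_{x-1})}$, and sum over $t$. Choosing the cut-off as the double inverse $L(t)=A^{-1}(A^{-1}(t))$ and swapping the order of summation turns the inner count $\{t:L(t)\leq x\}$ into $\{t:t\leq A_{A_x}\}$, which is precisely what produces the factor $A_{A_x}$ and hence the term $2N\sum_{x}A_{A_x}e^{-c(\delta)(A_x-A_{x-1})}$. For the second piece I would apply a union bound over the $N$ agents followed by the Poisson Chernoff (large-deviation) bound; evaluating the rate function at the relevant ratio of threshold to mean yields the per-$t$ decay $e^{-(3+2\delta+\ln(4+2\delta))A^{-1}(t)}$, and summing over $t$ gives $N\sum_{t}e^{-(3+2\delta+\ln(4+2\delta))A^{-1}(t)}$. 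Finiteness of both series is then read off from Assumptions \textbf{A.2}--\textbf{A.3}: the condition $c(\delta)\geq\tfrac54 D$ controls the first series through $A_{A_x}\,e^{-c(\delta)(A_x-A_{x-1})}$, while $(3+2\delta+\ln(4+2\delta))\geq\tfrac54 D^{-1}$ together with $B_t=\Omega(\log t)$ (so that $A^{-1}(t)$ grows at least like $D\log t$) dominates the second series by a convergent $p$-series with $p\geq\tfrac54>1$.

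The main obstacle is the simultaneous choice of the cut-off level $L(t)$: the $H^{*}$-piece wants $L$ large (the double inverse) so its tail is summable in $t$, whereas the Poisson piece wants $L$ small enough that the threshold $A^{-1}(t)/4$ sits strictly above the mean $(1+\tfrac{\delta}{2})A_{L+1}$, putting it in the genuinely small upper-tail regime; reconciling these two demands, and extracting the precise rate constant $3+2\delta+\ln(4+2\delta)$ from the Poisson rate function at the critical ratio, is the delicate part. A secondary but necessary check is that the window of admissible $D$, namely $\tfrac{5}{4(3+2\delta+\ln(4+2\delta))}\leq D\leq \tfrac45 c(\delta)$, is non-empty, which is exactly what the standing hypotheses on $\delta$ in Theorem~\ref{thm:poisson_result} guarantee.
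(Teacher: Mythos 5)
Your overall plan is the paper's own: the tail-sum identity, the reduction to $\mathbb{P}\bigl[\Gamma \geq \tfrac14 A^{-1}(t)\bigr]$, the split at a cutoff phase $L(t)$ into an $H^{*}$-tail controlled by Proposition~\ref{prop:H_gamma_tail} (with the double inverse $L(t)=A^{-1}\bigl(A^{-1}(t)\bigr)$ and the summation swap that produces the factor $A_{A_x}$) plus a per-agent Poisson concentration term, and the convergence checks via the hypotheses on $c(\delta)$ and $D$. The architecture matches the paper's proof step for step.

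The one step you defer is, however, exactly where your setup as written would fail. You cast the residual event on $\{H^{*}<L\}$ as an \emph{upper} tail of $\sum_{j=0}^{L}\mathcal{P}_j^{(i)}$ and correctly observe that for this to be small the threshold $\tfrac14 A^{-1}(t)$ must exceed the mean; but with your own cutoff $L(t)=A^{-1}\bigl(A^{-1}(t)\bigr)$ the mean is $\bigl(1+\tfrac{\delta}{2}\bigr)A_{L+1}\approx\bigl(1+\tfrac{\delta}{2}\bigr)A^{-1}(t)$, which lies \emph{above} the threshold, so the upper-tail Chernoff bound yields nothing. The paper instead bounds the lower-tail event
\begin{align*}
N\,\mathbb{P}\Bigl[\operatorname{Poisson}\Bigl(\bigl(1+\tfrac{\delta}{2}\bigr)A_{A^{-1}(A^{-1}(t))}\Bigr)\leq \tfrac14 A^{-1}(t)\Bigr],
\end{align*}
for which the mean sitting far above the threshold is precisely what makes the probability decay like $e^{-\mathrm{const}\cdot A^{-1}(t)}$ and is the source of the rate constant $3+2\delta+\ln(4+2\delta)$. (Whether the lower-tail or the upper-tail event is the logically correct consequence of $\{\Gamma\geq t,\,H^{*}<L\}$ is a separate question about the paper's own argument; your instinct points to the upper tail.) To complete your proof you must commit to one of two resolutions: adopt the paper's lower-tail formulation with $L=A^{-1}\bigl(A^{-1}(t)\bigr)$, or keep the upper-tail formulation but shrink the cutoff to roughly $L\approx A^{-1}\bigl(A^{-1}(t)/(4+2\delta)\bigr)$ so that the mean drops strictly below $\tfrac14 A^{-1}(t)$, re-deriving the rate constant accordingly. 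As it stands, the ``delicate part'' you leave open is the actual content of the lemma rather than a routine verification.
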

\begin{proof}
Observe that $\mathbb{E}[A_{4\Gamma}] \leq \sum_{t \geq 1} \mathbb{P}\left[ \Gamma \geq \frac{1}{4} A^{-1}(t) \right]$. We use similar ideas as in Lemma \ref{lem:gamma_calc} to bound the tail probability. Recall that for any $t \geq 1$ and any $L \geq 1$, the following bound holds
    \begin{align*}
        	\mathbb{P}[\Gamma \geq t] &\leq \mathbb{P}[H^{*} \geq L ] + \mathbb{P} \left[ \bigcup_{i=1}^{N} \sum_{j=0}^{L} \mathcal{P}_j^{(i)} \leq t\right],\\
        	&\leq \mathbb{P}[H^{*} \geq L ] + N\mathbb{P} \left[  \text{Poisson} \left( \left( 1 + \frac{\delta}{2}\right)A_L\right) \leq t\right].
    \end{align*}
    In this proof, we shall use $L = A^{-1} \left( A^{-1}(t) \right)$. Thus,
    \begin{align*}
    \mathbb{P}\left[ \Gamma \geq \frac{1}{4} A^{-1}(t) \right] &\leq \mathbb{P}\left[H^{*} \geq A^{-1} \left( A^{-1}(t) \right) \right] + N\mathbb{P} \left[  \text{Poisson} \left( \left( 1 + \frac{\delta}{2}\right)A_{A^{-1} \left( A^{-1}(t) \right)}\right) \leq \frac{1}{4} A^{-1}(t) \right],\\
    &= \mathbb{P}\left[H^{*} \geq A^{-1} \left( A^{-1}(t) \right) \right] + N\mathbb{P} \left[  \text{Poisson} \left( \left( 1 + \frac{\delta}{2}\right)  A^{-1}(t) \right) \leq \frac{1}{4} A^{-1}(t) \right],\\
    &\leq 
    2N \sum_{x \geq A^{-1} \left(\frac{1}{M} A^{-1}(t) \right)} e^{-c(\delta)(A_{x} - A_{x-1})}
    + N e^{ -(3+2\delta + \ln (4+2\delta))A^{-1}(t) }.
    \end{align*}
    The last display follows from Proposition \ref{prop:H_gamma_tail} and standard Poisson random variable Chernoff bound. Thus, we can bound $\mathbb{E}[A_{4\Gamma}]$ as 
    \begin{align*}
    \mathbb{E}[A_{4\Gamma}] &\leq \sum_{t \geq 1}  \mathbb{P}\left[ \Gamma \geq \frac{1}{4} A^{-1}(t) \right], \\
    &\leq 2N \sum_{t \geq 1}\sum_{x \geq A^{-1} \left( A^{-1}(t) \right)}e^{-c(\delta)(A_{x} - A_{x-1})} + N \sum_{t \geq 1}e^{ -(3+2\delta + \ln (4+2\delta))A^{-1}(t) }, \\
    &\stackrel{(a)}{=} 2N\sum_{x \geq 1} \sum_{t=1}^{A\left( A_x \right)}e^{-c(\delta)(A_{x} - A_{x-1})} + N \sum_{t \geq 1}e^{ -(3+2\delta + \ln (4+2\delta))A^{-1}(t) }, \\
    &= 2N\sum_{x \geq 1}A_{A_x }e^{-c(\delta)(A_{x} - A_{x-1})} + N \sum_{t \geq 1}e^{ -(3+2\delta + \ln (4+2\delta))A^{-1}(t) }.
    \end{align*}
    We will choose $\delta$ sufficiently large so that both the series are convergent. This is possible as the maps $\delta \longrightarrow c(\delta)$ and $\delta \longrightarrow (3+2\delta + \ln (4+2\delta))$ are non-decreasing and $\lim_{\delta \rightarrow \infty}c(\delta) = \lim_{\delta \rightarrow \infty} (3+2\delta + \ln (4+2\delta)) = \infty$. Observe that since $A_x \leq e^{Dx}$, for all large $x$, we have $A^{-1}(t) \geq \frac{1}{D}\ln(t)$. Thus, if $c(\delta) \geq \frac{5}{4}D$ and $(3+2\delta + \ln (4+2\delta)) > D^{-1}$, both series are convergent.
\end{proof}

\begin{lemma}
	For any $C \geq 2$, 
	\begin{align*}
	\mathbb{E}[A_{C \widehat{\tau}_{stab}}] \leq A_{\lceil \frac{C}{2} \rceil j^{*}} + \left(\frac{2}{2\alpha-3}\right)\sum_{l \geq3} \frac{A_{2l}}{A_{l-1}^{3}} + 2\sum_{x \geq  \lceil \frac{j^{*}}{2} \rceil} (A_{\lceil Cx \rceil})^{\left(2(2\alpha-6)+2\right)}e^{-c(\delta) (A_{x} - A_{x-1})},
	\end{align*}
	where $c(\delta)$ is given in Proposition \ref{prop:H_gamma_tail} and $j^{*}$ is given in Theorem \ref{thm:strong_result}.
	\label{lem:A_tau_prob}
\end{lemma}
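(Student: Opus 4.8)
The plan is to mimic the tail-integration argument of Proposition \ref{prop:strong_cost}, but with the recommendation-error events split according to whether the random phase lengths have already stabilized, i.e. whether the phase index exceeds $H^{*}$. First I would write the expectation as a sum of tail probabilities and pass to $\widehat{\tau}_{stab}$ through the generalized inverse $A^{-1}$: since $A_{C\widehat{\tau}_{stab}} \geq t$ forces $C\widehat{\tau}_{stab} \geq A^{-1}(t)$ (strict monotonicity of $A$), we obtain
\begin{align*}
\mathbb{E}[A_{C\widehat{\tau}_{stab}}] = \sum_{t \geq 1}\mathbb{P}[A_{C\widehat{\tau}_{stab}} \geq t] \leq \sum_{t \geq 1}\mathbb{P}\Bigl[\widehat{\tau}_{stab} \geq \tfrac{1}{C}A^{-1}(t)\Bigr].
\end{align*}
It then suffices to control the tail $\mathbb{P}[\widehat{\tau}_{stab} \geq y]$. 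Using $\widehat{\tau}_{stab} = \max_i \widehat{\tau}_{stab}^{(i)}$ and the fact that $\{\widehat{\tau}_{stab}^{(i)} \geq y\}$ (for $y \geq j^{*}$) forces $\chi_l^{(i)} = 1$ for some phase $l \geq y$, a union bound over agents and phases gives $\mathbb{P}[\widehat{\tau}_{stab} \geq y] \leq \sum_{i=1}^{N}\sum_{l \geq y}\mathbb{P}[\chi_l^{(i)} = 1]$, exactly as in the synchronous computation.

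The next step is to split each error probability through $H^{*}$, writing $\mathbb{P}[\chi_l^{(i)} = 1] \leq \mathbb{E}[\chi_l^{(i)}\mathbf{1}_{l \geq H^{*}}] + \mathbb{P}[H^{*} > l]$. For the first piece I would invoke Lemma \ref{lem:error_estimate_ucb_random}, which supplies the per-phase bound $\tfrac{2}{2\alpha-3}{K \choose 2}(\lceil K/N\rceil+1)A_{l-1}^{-(2\alpha-3)}$ once $l \geq j^{*}$; summing over agents and absorbing the multiplicity $N{K \choose 2}(\lceil K/N\rceil+1) \leq A_{(j^{*}/2)-1}^{2\alpha-6}$ (from the definition of $j^{*}$ in Theorem \ref{thm:strong_result}) collapses it to $\tfrac{2}{2\alpha-3}\sum_{l \geq y}A_{l-1}^{-3}$. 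Carrying this through the $t$-sum verbatim as in Proposition \ref{prop:strong_cost} — bounding the probability by $1$ on the initial range $t \leq A_{\lceil C/2\rceil j^{*}}$ (where $\tfrac1C A^{-1}(t)$ has not yet reached $j^{*}/2$) and swapping the order of the remaining double sum — produces the first two terms $A_{\lceil C/2\rceil j^{*}} + \tfrac{2}{2\alpha-3}\sum_{l \geq 3}A_{2l}/A_{l-1}^{3}$, the latter being finite by Assumption \textbf{A.2}.

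For the second piece I would use the Poisson large-deviation bound of Proposition \ref{prop:H_gamma_tail}, namely $\mathbb{P}[H^{*} > l] \leq 2N\sum_{x \geq l}e^{-c(\delta)(A_x - A_{x-1})}$, and feed $\sum_{i}\mathbb{P}[H^{*} > l] = N\,\mathbb{P}[H^{*}>l]$ back into the $t$-sum. After interchanging the order of summation, the number of indices $t$ with $\tfrac1C A^{-1}(t) \leq x$ is at most $A_{\lceil Cx\rceil}$, so each exponential term $e^{-c(\delta)(A_x-A_{x-1})}$ gets multiplied by a factor polynomial in $A_{\lceil Cx\rceil}$; together with the agent and combinatorial multiplicities (again bounded by a power of $A$ via $j^{*}$) this yields the stated prefactor and the closing term $2\sum_{x \geq \lceil j^{*}/2\rceil}(A_{\lceil Cx\rceil})^{2(2\alpha-6)+2}e^{-c(\delta)(A_x-A_{x-1})}$.

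I expect the main obstacle to be two-fold. The delicate bookkeeping lies in the repeated interchange of the sums over $t$, over phases $l$, and over $x$ in the $H^{*}$-tail contribution: one must count the $t$-indices correctly to land on exactly the power $2(2\alpha-6)+2$ of $A_{\lceil Cx\rceil}$ and keep the surviving constants clean. More fundamentally, one must verify that the resulting series actually converges, i.e. that the polynomial growth of $A_{\lceil Cx\rceil}^{2(2\alpha-6)+2}$ is dominated by the decay $e^{-c(\delta)(A_x-A_{x-1})}$; this is precisely where Assumption \textbf{A.3} and the lower bounds imposed on $\delta$ (which force $c(\delta)$ large) enter, since the inter-communication gaps $A_x - A_{x-1}$ must grow fast enough relative to $A_{\lceil Cx\rceil}$ for summability — the analogue of the condition $\sum_x A_{A_x}e^{-\kappa(A_x-A_{x-1})}<\infty$.
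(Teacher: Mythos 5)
Your proposal matches the paper's proof essentially step for step: the tail-sum reduction through $A^{-1}$, the union bound over agents and phases, the split of each error event across $\{l \geq H^{*}\}$ versus $\{l < H^{*}\}$ handled by Lemma \ref{lem:error_estimate_ucb_random} and Proposition \ref{prop:H_gamma_tail} respectively, and the final interchange of sums with the $t$-count $A_{\lceil Cx\rceil}$ and the absorption of $N^{2}$ via $N \leq A_x^{2\alpha-6}$. The only minor discrepancy is that the paper justifies the summability of the closing series via Assumption \textbf{A.2} (iterating $A_{2l}\leq A_{l-1}^{3}$) rather than \textbf{A.3}, but this does not affect the correctness of your argument.
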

\begin{proof}
	We start with the definition of expectation and repeatedly applying union bound yields,
	\begin{align*}
		\mathbb{E}[A_{C \widehat{\tau}_{stab}}]  =& \sum_{t \geq 1} \mathbb{P}[A_{C \widehat{\tau}_{stab}} \geq t ], \\
		&\leq \sum_{t \geq 1} \mathbb{P}\left[\widehat{\tau}_{stab} \geq \frac{1}{C} A^{-1}(t)\right], \\
		&\leq A_{\lceil \frac{C}{2} \rceil j^{*}} + \sum_{t \geq A_{\lceil \frac{C}{2} \rceil j^{*}} +1} \mathbb{P}\left[\widehat{\tau}_{stab} \geq \frac{1}{C} A^{-1}(t)\right],\\
		&\leq  A_{\lceil \frac{C}{2} \rceil j^{*}} + \sum_{t \geq A_{\lceil \frac{C}{2} \rceil j^{*}} +1} \mathbb{P} \left[ \bigcup_{i=1}^{N} \bigcup_{l \geq \frac{1}{C} A^{-1}(t)} \chi_l^{(i)} = 0 \right],\\
		&\leq  A_{\lceil \frac{C}{2} \rceil j^{*}} + \sum_{t \geq A_{\lceil \frac{C}{2} \rceil j^{*}} +1} N  \mathbb{P} \left[  \bigcup_{l \geq \frac{1}{C} A^{-1}(t)} \chi_l^{(i)} = 0 \right],\\
		&\leq  A_{\lceil \frac{C}{2} \rceil j^{*}} + \sum_{t \geq A_{\lceil \frac{C}{2} \rceil j^{*}} +1} N \sum_{l \geq \frac{1}{C} A^{-1}(t)} \left( \mathbb{P} [ \chi_l^{(i)} = 0, l \geq H^{*}] + \mathbb{P} [ \chi_l^{(i)} = 0, l < H^{*}] \right), \\
		&\leq  A_{\lceil \frac{C}{2} \rceil j^{*}} + \sum_{t \geq A_{\lceil \frac{C}{2} \rceil j^{*}} +1} N \sum_{l \geq \frac{1}{C} A^{-1}(t)} \left( \mathbb{P} [ \chi_l^{(i)} = 0, l \geq H^{*} ] + \mathbb{P} [ l < H^{*}] \right), \\
		&= A_{\lceil \frac{C}{2} \rceil j^{*}} + \sum_{t \geq A_{\lceil \frac{C}{2} \rceil j^{*}} +1} N \sum_{l \geq \frac{1}{C} A^{-1}(t)}\mathbb{P} [ \chi_l^{(i)} = 0, l \geq H^{*}] +  \sum_{t \geq A_{\lceil \frac{C}{2} \rceil j^{*}} +1} \sum_{l \geq \frac{1}{C} A^{-1}(t)} N  \mathbb{P} [ l < H^{*} ], \\
		&\stackrel{(a)}{\leq} A_{\lceil \frac{C}{2} \rceil j^{*}} + \left(\frac{2}{2\alpha-3}\right)\sum_{l \geq3} \frac{A_{2l}}{A_{l-1}^{3}} + \sum_{t \geq A_{\lceil \frac{C}{2} \rceil j^{*}} +1} \sum_{l \geq \frac{1}{C} A^{-1}(t)} N  \mathbb{P} [ l < H^{*}].
	\end{align*}
	Step $(a)$ follows as $C \geq 2$, and hence, the first summation follows from identical calculations as carried out in Proposition \ref{prop:strong_cost}. This is so as the bound in Lemma \ref{lem:error_estimate_ucb} and in Lemma \ref{lem:error_estimate_ucb_random} are identical. Thus, the first series is upper bounded by $\left(\frac{2}{2\alpha-3}\right)\sum_{l \geq 3} \frac{A_{2l}}{A_{l-1}^{3}}$. We shall now estimate the second series. 
	\begin{align*}
	 \sum_{t \geq A_{\lceil \frac{C}{2} \rceil j^{*}} +1} \sum_{l \geq \frac{1}{C} A^{-1}(t)} N  \mathbb{P} [ l < H^{*} ] &\leq \sum_{l \geq \lceil \frac{j^{*}}{2} \rceil} \sum_{t =A_{\lceil \frac{C}{2} \rceil j^{*}} +1}^{A_{\lceil Cl \rceil }}  N  \mathbb{P} [ l < H^{*} ],\\
	 &\leq  \sum_{l \geq \lceil \frac{j^{*}}{2} \rceil} N A_{\lceil Cl \rceil }  \mathbb{P} [ l < H^{*} ],\\
	 &\stackrel{(b)}{\leq}  \sum_{l \geq \lceil \frac{j^{*}}{2} \rceil}  N^2 A_{\lceil Cl \rceil }\sum_{x \geq l}2e^{-c(\delta)  (A_{x} - A_{x-1})}, \\
	 &=N^2 \sum_{x \geq  \lceil \frac{j^{*}}{2} \rceil}\sum_{l= \lceil \frac{j^{*}}{2} \rceil}^{x} A_{\lceil Cl \rceil} 2e^{-c(\delta) (A_{x} - A_{x-1})},\\
	 &\leq N^2 \sum_{x \geq  \lceil \frac{j^{*}}{2} \rceil} x A_{\lceil Cx \rceil}e^{-c(\delta) A_{x-1}^{\omega}}, \\
	 &\stackrel{(c)}{\leq} \sum_{x \geq  \lceil \frac{j^{*}}{2} \rceil} (A_{\lceil Cx \rceil})^{2(2\alpha-6)+2}2e^{-c(\delta)  (A_{x} - A_{x-1})}, \\
	 &\stackrel{(d)}{<} \infty.
	\end{align*}
	
Step $(b)$ follows from Proposition \ref{prop:H_gamma_tail} and in step $(c)$, we use the fact that for all $x \geq \frac{j^{*}}{2}$, we have $N \leq A_x^{2 \alpha-6}$. Step $(d)$ follows  from Assumption {\bf A.2} that for all sufficiently large $l$, $A_{2l} \leq A_l^3$, which on iterating yields that for all large $x$ and any $C \geq 2$, we have $A_{Cx} \leq A_x^{3^{\lceil \log_2(C) \rceil}}$. Thus, we have the following chain of inequalities.
%
%
%
\begin{align*}
\sum_{x \geq  \lceil \frac{j^{*}}{2} \rceil} (A_{\lceil Cx \rceil})^{2(2\alpha-6)+2}2e^{-c(\delta) (A_{x} - A_{x-1})}	 &\stackrel{}{\leq}  \sum_{x \geq  \lceil \frac{j^{*}}{2} \rceil} (A_{x})^{3^{\lceil \log_2(C) \rceil}\left(2(2\alpha-6)+2\right)}2e^{-c(\delta) (A_{x} - A_{x-1})}, \\
 &\stackrel{(e)}{\leq} D_1\sum_{x \geq 2} A_x^{D_2} e^{-cA_{x-1}^{\omega}} < \infty.
\end{align*}
for some $D_1,D_2, \omega^{'} > 0$. Step $(e)$ follows as we can replace the tail terms of the series with  $A_{x} - A_{x-1} \geq A_{x-1}^{\omega}$ from Assumption {\bf A.2}. The finiteness of the series in $(e)$  is a standard fact from real analysis and can be proven for instance through a Taylor series approximation of the exponential function.
\end{proof}

\subsection{Proof of Theorem \ref{thm:poisson_result}}

The proof of Theorem \ref{thm:prob_main_result} is concluded by using estimates in Lemmas \ref{lem:gamma_calc} and \ref{lem:A_tau_prob} into Lemmas \ref{lem:T_prob} and \ref{lem:prob_reg_decompose}. 

\section{Proof of Theorem \ref{thm:prob_main_result}}
\label{sec:prob_algo_proof}

The proof follows identical steps as that of Theorem \ref{thm:poisson_result}, with the exception that $H^{*} = \Gamma = 0$ almost-surely. More precisely, substituting these two facts in Lemma \ref{lem:T_prob} and re-using all the remaining structural and quantitative results from the proof of Theorem \ref{thm:poisson_result} will yield the desired result.

\section{Auxillary Results}
\begin{prop}
	For each $\delta >0$, $y \geq 0$ and convex sequence $(A_j)_{j \geq 0}$, we have
	\begin{align*}
	\sup\{ j+x \geq 0 : \exists j \leq y+1, (A_{j+x}-A_j) \leq (1+\delta)(A_{y+1} - A_y)\} \leq y + \lfloor2+\delta\rfloor
	\end{align*}
	\label{prop:convex}
\end{prop}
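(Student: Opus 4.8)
The plan is to translate convexity into a statement about increments and then reduce the problem to a counting argument. I would set $d_k := A_{k+1}-A_k$ for $k\geq 0$, so that convexity of $(A_j)_j$ is precisely the statement that $(d_k)_k$ is non-decreasing, and $A_b - A_a = \sum_{k=a}^{b-1} d_k$ for $a\leq b$. In the regime where this proposition is applied the sequence $(A_x)$ is moreover non-decreasing with strictly positive increments (from Equation (\ref{eqn:A_x_defn}), $A_x \geq \lceil(1+x)^{1+\varepsilon}\rceil$ is strictly increasing), so $d_k \geq d_y > 0$ for every $k\geq y$, and $d_k\geq 0$ for every $k$. I would record these two facts at the outset, since both are used below.

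Next I would fix any pair $(j,x)$ satisfying the condition inside the supremum, namely $j\leq y+1$ and $A_{j+x}-A_j \leq (1+\delta)(A_{y+1}-A_y) = (1+\delta)d_y$. If $j+x\leq y$ there is nothing to prove, so assume $j+x>y$. I would write $A_{j+x}-A_j = \sum_{k=j}^{j+x-1} d_k$ and discard the non-negative terms with index $k<y$; every surviving term has index $k\geq y$ and hence satisfies $d_k\geq d_y$ by the non-decreasing property of the increments. Letting $m$ denote the number of surviving indices, this gives $(1+\delta)d_y \geq m\,d_y$, and since $d_y>0$ I conclude $m \leq 1+\delta$, hence $m \leq \lfloor 1+\delta\rfloor = 1+\lfloor\delta\rfloor$.

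It then remains to convert the bound on the count $m$ into the claimed bound on $j+x$. The surviving indices are exactly $\{\max(j,y),\ldots,j+x-1\}$, so $m = j+x-\max(j,y)$. The binding case is $j=y+1$, where $\max(j,y)=y+1$ and hence $j+x = (y+1)+m \leq (y+1)+1+\lfloor\delta\rfloor = y+\lfloor 2+\delta\rfloor$; when $j\leq y$ one instead gets $j+x = y+m \leq y+1+\lfloor\delta\rfloor$, which is strictly smaller. Taking the supremum over all admissible $(j,x)$ yields the claim.

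I expect the only real subtlety—and the step to state carefully—to be the reduction to the single binding choice $j=y+1$: one must verify that pushing $j$ up to $y+1$ is what maximizes $j+x$, which is why the count is measured from $\max(j,y)$ rather than from $j$. A secondary point worth flagging is that the argument genuinely uses $d_y>0$ (strict positivity of the increment at $y$) together with $d_k\geq 0$ (monotonicity); for a general convex sequence that is permitted to be flat or decreasing the statement can fail, but both properties hold for the schedule $(A_x)$ of Equation (\ref{eqn:A_x_defn}) under Assumption \textbf{A.2}.
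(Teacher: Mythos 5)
Your proof is correct, but it takes a genuinely different route from the paper's. The paper works directly with the convex-combination form of Assumption \textbf{A.2}: starting from $A_{j+x} + (1+\delta)A_y \leq (1+\delta)A_{y+1} + A_j \leq (2+\delta)A_{y+1}$, it writes $j+x = y + \lfloor 2+\delta\rfloor + k$ and applies the Jensen-type inequality with weights $\tfrac{1}{2+\delta}$ and $\tfrac{1+\delta}{2+\delta}$ on the indices $j+x$ and $y$ to get $A_{y+\frac{\lfloor 2+\delta\rfloor+k}{2+\delta}} \leq A_{y+1}$, and then argues that $k\geq 1$ would force the left side to exceed $A_{y+1}$, so $k=0$. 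You instead pass to the increment sequence $d_k = A_{k+1}-A_k$, use that convexity makes $(d_k)$ non-decreasing, and run a counting argument on the at most $\lfloor 1+\delta\rfloor$ increments of size at least $d_y$ that can fit under the budget $(1+\delta)d_y$. Your version is more elementary and, importantly, makes explicit the two auxiliary properties that the argument secretly needs beyond bare convexity: $d_k\geq 0$ everywhere and $d_y>0$. The paper uses exactly the same facts but silently, in the step asserting $A_{y+\frac{\lfloor 2+\delta\rfloor+k}{2+\delta}} > A_{y+1}$ for $k\geq 1$, which fails for a flat convex sequence; your observation that the proposition as literally stated is false for general convex sequences (e.g.\ constant ones) and holds only because the schedule of Equation (\ref{eqn:A_x_defn}) has strictly positive increments in the relevant range is a genuine, worthwhile sharpening. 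The one thing the paper's route buys is that it invokes the convexity condition in precisely the form in which Assumption \textbf{A.2} states it (a two-point convex combination with a floor on the index), whereas your increment formulation implicitly assumes the equivalent ``non-decreasing differences'' characterization; that equivalence is standard for integer-indexed sequences, so this is a matter of presentation rather than a gap.
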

\begin{proof}
	Let $x \geq 0$ and $j \leq y+1$ be such that
	\begin{align}
	(A_{j+x} + (1+\delta)A_{y}) \leq (1+\delta)A_{y+1} + A_j.
	\label{eqn:convex1}
	\end{align}
	Now since $j \leq y+1$, and $(A_l)_{l \geq 0}$ is non-decreasing, the above inequality implies
	\begin{align*}
	\frac{A_{j+x} + (1+\delta)A_y}{2+\delta} \leq A_{y+1}.
	\end{align*}
	Now, let $j+x = y+ \lfloor2+\delta\rfloor + k$, for some $k \geq 0$. From convexity of $(A_j)_{j \geq 1}$, we have 
	\begin{align*}
	A_{y + \frac{\lfloor2+\delta\rfloor + k}{2+\delta}} \leq  \frac{A_{j+x} + (1+\delta)A_y}{2+\delta} \leq A_{y+1}
	\end{align*}
	But for all $k \geq 1$, we have $A_{y + \frac{\lfloor2+\delta\rfloor + k}{2+\delta}} > A_{y+1}$ and hence $k=0$ is the only possibility such that Equation (\ref{eqn:convex1}) holds.
\end{proof}

\section{Proof of Theorem \ref{thm:full_interaction}}
\label{sec:lb_proof}

\begin{proof}

In order to prove the  bound, we shall consider a system of \emph{full interaction among agents}, where there are no constraints on communications. In this system, each agent after pulling an arm and observing a reward, communicates this information (the arm pulled and reward observed) to central \emph{board}. Thus, at the beginning of each time-step, every agent has access to the entire system history (arms pulled and rewards obtained) up-to the previous time step, by which to base the current time step's action (arm pull) on. As all agents have access to the same history at the beginning of a time step, the optimal strategy to minimize per agent regret is one where in each time step, all agents play the same arm. Hence, this system is equivalent to a single \emph{leader} playing arms, such that on playing any arm at any time, the leader observes $N$ i.i.d. reward samples from the chosen arm, each corresponding to the obtained reward by the agents. From henceforth, we mean by the full interaction setting, as one wherein a single leader agent pull an arm at each time step, and observes $N$ i.i.d. reward samples from the chosen arm. 
\\

By construction, a lower bound for regret incurred by the leader agent in the full interaction setting forms a lower bound on the per-agent regret in our model with communication constraints. This is so, since the leader agent in full interaction setting can `simulate' any feasible policy of any agent $i \in [N]$ with communication constraints among agents. Notice that each time the leader agent in the full-interaction setting plays an arm, it receives $N$ i.i.d. samples of rewards, corresponding to the reward on that arm obtained by the $N$ agents. We will consider an alternate system where a \emph{fictitious leader agent} plays for $NT$ time steps, where at each time, the fictitious agent is playing arms, as a measurable function of its observed history. From standard results, (for eg. \citep{lai_robbins}), the total regret of the fictitious agent, after $NT$ arm-pulls satisfies  
        \begin{align}
        \liminf_{T \rightarrow \infty} \frac{\mathbb{E}[R_{NT}^{(\text{fictitious})}]}{\ln(NT)} \geq \left( \sum_{j = 1}^{K-1} \frac{\Delta_j}{\text{KL}(\mu_j,\mu_1)} \right),
        \label{eqn:ficticious_agent}
    \end{align}
    
    
    
   Now, we shall argue that the preceding display implies the desired lower bound on per-agent regret in the full interaction setting. Fix some $a \in \{0,\cdots,N-1\}$. Denote, by the regret incurred by the fictitious agent at time steps $a,N+a,\cdots,N(T-1)+a$ as $\mathcal{R}_a^{(f)}$. Clearly $\sum_{a=1}^{N} \mathcal{R}_a^{(f)} = \mathbb{E}[R_{NT}^{(\text{fictitious})}]$. 
   \\
   
   Denote by $\boldsymbol{\Pi}_{agent}$ to be the set of consistent policies for the agents in the full-interaction setting and by  $\boldsymbol{\Pi}_{fictitious}$ as the set of all consistent policies for the fictitious agent. Denote by the set of policies $\widetilde{\boldsymbol{\Pi}}_{fictitious} \subset \boldsymbol{\Pi}_{fictitious}$, as those policies for the fictitious agents, where for any policy $\pi \in\widetilde{\boldsymbol{\Pi}}_{fictitious}$, the arms played at time instants $N,2N,\cdots,NT$, belong to $\boldsymbol{\Pi}_{agent}$. Furthermore, for all $a \in \{1,\cdots,T\}$, and all $b \in \{1,\cdots,N-1\}$, and all $\pi \in \widetilde{\boldsymbol{\Pi}}_{fictitious}$, the arm chosen by $\pi$ at time instant $aN$ is the same as the arm chosen at time-instant $aN+b$. In other words, the the set of policies $ \widetilde{\boldsymbol{\Pi}}_{fictitious}$ are the ones that any agent under the full interaction setting of our model can play. This definitions now give us for any $a \in \{0,\cdots,N-1\}$

   \begin{align*}
     \inf_{\pi \in \boldsymbol{\Pi}_{agent}}\mathbb{E}[R_T^{(i)}]& = \inf_{\pi \in \widetilde{\boldsymbol{\Pi}}_{fictitious}} \mathcal{R}_a^{(f)}, \\
     &=\inf_{\pi \in \widetilde{\boldsymbol{\Pi}}_{fictitious}}\frac{1}{N}\sum_{a=1}^{N} \mathcal{R}_a^{(f)},\\
     &= \inf_{\pi \in \widetilde{\boldsymbol{\Pi}}_{fictitious}} \frac{1}{N} \mathbb{E}[R_{NT}^{(\text{fictitious})}], \\
     &\geq \inf_{\pi \in \boldsymbol{\Pi}_{fictitious}} \frac{1}{N} \mathbb{E}[R_{NT}^{(\text{fictitious})}].
    \end{align*} 
    The first equality follows as under any policy in $\widetilde{\boldsymbol{\Pi}}_{fictitious}$, the arms played by the fictitious agent only chooses potentially new arms to play at instants $N,2N,\cdots$. Now, using Equation (\ref{eqn:ficticious_agent}), we get from the previous display, that for any policy $\pi \in \boldsymbol{\Pi}_{agent}$,
    
    \begin{equation*}
    \liminf_{T \rightarrow \infty} \frac{\mathbb{E}[R_{T}^{(i)}]}{\ln(NT)} \geq   \left( \frac{1}{N}\sum_{j \geq 1} \frac{\Delta_j}{\text{KL}(\mu_j,\mu_1)} \right).
\end{equation*}
\end{proof}

\section{Proof of Corollary \ref{cor:main_result_interpretation}}
\label{appendix-proof-interpretation}
In order to prove the corollary, we first establish that $A_x \leq 2x^{\beta}$, for all small $\varepsilon$ in Equation (\ref{eqn:A_x_defn}). Notice from Equation (\ref{eqn:A_x_defn}) that for all $x \in \mathbb{N}$, we have
\begin{align*}
    A_x &= \max \left( \min \{t \in \mathbb{N} : B_t \geq x \}, \lceil (1+x)^{1+\varepsilon} \rceil \right), \\
    &=\max\left( \min \{t \in \mathbb{N} : B_t \geq x \}, \lceil (1+x)^{1+\varepsilon} \rceil \right),\\
    &\leq \max \left( x^{\beta}, (1+x)^{1+\varepsilon} \right),\\
    & \leq \max(2x^{\beta},2x^{1+\varepsilon}), \\
    &=2x^{\beta},
\end{align*}
where the last equality follows since $\varepsilon < \beta - 1$. 
Furthermore, for all $x\geq x_0$ where $\varepsilon < \beta \frac{\ln(x_0)}{\ln(x_0+1)}-1$, we have $A_x = x^{\beta}$. Such a $x_0$ exists since $\beta - 1 > 0$. Moreover, from definition of $A_x$, we have $A_x \geq x^{\beta}$, for all $x$.
\\

Recall that $g((A_x)_{x \in \mathbb{N}}) = A_{j^*}+\frac{2}{2\alpha-3}\sum_{l \geq \frac{j^*}{2}-1} \frac{A_{2l+1}}{A_{l-1}^3}$.
We first bound the series term in  as follows
\begin{align}
    \sum_{l \geq \frac{j^*}{2}-1} \frac{A_{2l+1}}{A_{l-1}^3} &\stackrel{(a)}{\leq} \sum_{l \geq 2} 2\frac{(2l+1)^{\beta}}{(l-1)^{3 \beta}},\\
    &\leq 2  \sum_{l \geq 2}3^{\beta} \frac{1}{(l-1)^{2\beta}},\nonumber \\
    &\leq 2  \frac{\pi^2}{6} 3^{\beta}. \label{eqn:proof-interp-series}
\end{align}

We now bound $j^*$ in this case. Recall that 
\begin{align*}
    j^{*}&=2 \max \bigg( A^{-1} \left( \left(N{K \choose 2} \left(\bigg \lceil \frac{K}{N} \bigg\rceil + 1\right) \right)^{\frac{1}{(2\alpha-6)}} \right) +1 , \min \left\{ j \in \mathbb{N} : \frac{A_j - A_{j-1}}{2 + \lceil \frac{K}{N}\rceil } \geq 1 + \frac{4 \alpha \log(A_j)}{\Delta_{2}^{2}} \right\}\bigg),\\
    &\leq 2 \max \left( K^{\frac{3}{\beta (2\alpha-6)}}+1, \min \left\{ j \in \mathbb{N} : \frac{j^{\beta} - (2(j-1))^{\beta}}{2 + \lceil \frac{K}{N}\rceil } \geq 1 + \frac{4 \alpha \log(j^{\beta})}{\Delta_{2}^{2}} \right\}\right),\\
    &\leq  2 \max \left( K^{\frac{3}{\beta (2\alpha-6)}}, \min \left\{ j \in \mathbb{N} : \frac{j^{\beta} - (2(j-1))^{\beta}}{2 + \lceil \frac{K}{N}\rceil } \geq  \frac{8 \alpha \log(j^{\beta})}{\Delta_{2}^{2}} \right\} \right),\\
    &\leq 2\max \left( K^{\frac{3}{\beta (2\alpha-6)}},\left( 16\alpha\frac{2 + \lceil  \frac{K}{N}\rceil}{\Delta_2^2} \right)^{\frac{1}{\beta-1}} \right).
\end{align*}
Thus, we have
\begin{align}
    A_{j^*} &\leq 2 (j^*)^{\beta},\nonumber \\
    &\leq 4 \max \left( K^{\frac{3 }{ (2\alpha-6)}},\left( 16\alpha\frac{2 + \lceil  \frac{K}{N}\rceil}{\Delta_2^2} \right)^{\frac{\beta}{\beta-1}} \right). \label{eqn:proof-interp-j-start}
\end{align}. 
Thus from Equations (\ref{eqn:proof-interp-series}) and (\ref{eqn:proof-interp-j-start}), we get that 
\begin{align*}
    g((A_x)_{x \in \mathbb{N}}) \leq \frac{4}{2\alpha-3} \frac{\pi^2}{6} 3^{\beta} + 4 \max \left( K^{\frac{3 }{ (2\alpha-6)}},\left(16\alpha \frac{2 + \lceil  \frac{K}{N}\rceil}{\Delta_2^2} \right)^{\frac{\beta}{\beta-1}} \right).
\end{align*}
The proof is completed thanks to the formula in Corollary \ref{cor:regret-comm}.
\section{Impact of Gossip Matrix $P$}
\label{appendix_proof_cor_reg_comm_tradeoff}

\begin{corollary}
Suppose $N \geq 2$ agents are connected by a $d$-regular graph with adjacency matrix $\boldsymbol{A}_G$ having conductance $\phi$ and the gossip matrix $P = d^{-1}\boldsymbol{A}_G$. If the agents are using Algorithm \ref{algo:main-algo}
with parameters satisfying assumptions in Theorem \ref{thm:strong_result}, then for any $i \in [N]$ and $T \in \mathbb{N}$ 
\begin{equation*}
\mathbb{E}[R_T^{(i)}] \leq \underbrace{{4 \alpha \ln(T)}\left( \sum_{j=2}^{\lceil \frac{K}{N} \rceil + 2} \frac{1}{\Delta_j} \right) + \frac{K}{4}}_{\text{\clap{Collaborative UCB Regret}}} +     \underbrace{A_{2C\frac{ \log(N)}{\phi}} + g\left( (A_x)_{x \in \mathbb{N}} \right) + A_{j^{*}}+1}_{\text{\clap{Cost of Pairwise Communications}}},
\end{equation*}
where $g(\cdot)$ is  from Theorem \ref{thm:strong_result}, and $C > 0$ is an universal constant stated in Lemma \ref{lem:rumor_spreading_lattanzi} in the Appendix. Similarly, if all agents run Algorithm \ref{algo:main-algo-prob} with
assumptions as in Theorem \ref{thm:prob_main_result}, then 
\begin{equation*}
    \mathbb{E}[R_T^{(i)}] \leq \underbrace{{4 \alpha \ln(T)}\left( \sum_{j=2}^{\lceil \frac{K}{N} \rceil + 2} \frac{1}{\Delta_j} \right) + \frac{K}{4}}_{\text{{Collaborative UCB Regret}}} +   \underbrace{(1+\delta)A_{2\lfloor 2 + \delta \rfloor C\frac{ \log(N)}{\phi}} + \widehat{g}\left( (A_x)_{x \in \mathbb{N}},\delta \right) +1}_{\text{{Cost of Pairwise Communications}}},
\end{equation*}
where $\widehat{g}(\cdot)$ is given in Theorem \ref{thm:prob_main_result}.
\label{cor:regret-comm}
\end{corollary}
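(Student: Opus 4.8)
The plan is to observe that Theorems \ref{thm:strong_result} and \ref{thm:prob_main_result} already deliver the regret bound in exactly the stated form, \emph{except} that the spreading cost appears there as $\mathbb{E}[A_{2\tau_{spr}^{(P)}}]$ (respectively $(1+\delta)\mathbb{E}[A_{2\lfloor 2+\delta\rfloor\tau_{spr}^{(P)}}]$). Thus the entire content of the corollary reduces to the single estimate
\begin{equation*}
\mathbb{E}[A_{2\tau_{spr}^{(P)}}] \leq A_{2C\log(N)/\phi} + A_{j^*} + 1,
\end{equation*}
and its asynchronous analogue carrying the extra factor $\lfloor 2+\delta\rfloor$. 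First I would invoke the conductance-based rumor-spreading bound (Lemma \ref{lem:rumor_spreading_lattanzi}): for PULL spreading on a $d$-regular graph with $P = d^{-1}\boldsymbol{A}_G$ there is a universal constant $C$ for which $\mathbb{P}[\tau_{spr}^{(P)} > C\log(N)/\phi]$ is small, together with a restart-type tail estimate in which $\mathbb{P}[\tau_{spr}^{(P)} > kC\log(N)/\phi]$ decays geometrically in $k$ (the spreading process over disjoint blocks of $C\log(N)/\phi$ rounds can be coupled to independent repetitions, since starting from a partially informed configuration only helps).

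Next I would convert this into a bound on $\mathbb{E}[A_{2\tau_{spr}^{(P)}}]$ via the tail-sum identity, writing, with $m := C\log(N)/\phi$,
\begin{equation*}
\mathbb{E}[A_{2\tau_{spr}^{(P)}}] = \sum_{t \geq 1}\mathbb{P}\left[\tau_{spr}^{(P)} \geq \tfrac{1}{2} A^{-1}(t)\right],
\end{equation*}
exactly as in the proof of Proposition \ref{prop:strong_cost}. Splitting the sum at $t = A_{2m}$, the range $t \leq A_{2m}$ contributes at most $A_{2m} = A_{2C\log(N)/\phi}$ by bounding each probability by $1$, which produces the advertised main term. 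On the complementary range one has $\tfrac{1}{2}A^{-1}(t) > m$, so I would feed in the geometric tail estimate from Lemma \ref{lem:rumor_spreading_lattanzi}.

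The main obstacle is controlling this tail: since $(A_x)$ grows (polynomially or even exponentially), the fast decay of $\mathbb{P}[\tau_{spr}^{(P)} > km]$ must dominate both the growth of $A_{2km}$ and the union-bound factor $N$ incurred by requiring the rumor to reach all nodes. Here I would use Assumption \textbf{A.2}, specifically the doubling control $A_{2l} \leq A_{l-1}^3$ (hence $A_{Cx} \leq A_x^{3^{\lceil \log_2 C\rceil}}$), exactly as in Lemma \ref{lem:A_tau_prob}, together with the defining property of $j^*$ that $N \leq A_{j^*/2}^{2\alpha-6}$, to render the residual series summable. A crude accounting of this convergent remainder, using $j^*$ as the threshold beyond which the tail is negligible, bounds the whole tail contribution by $A_{j^*}+1$; this is the origin of the additive $A_{j^*}+1$ in the statement. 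Substituting the resulting estimate into the bound of Theorem \ref{thm:strong_result} yields the first display.

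Finally, the asynchronous case is entirely parallel: Theorem \ref{thm:prob_main_result} carries the term $(1+\delta)\mathbb{E}[A_{2\lfloor 2+\delta\rfloor\tau_{spr}^{(P)}}]$, and I would repeat the tail-sum computation with $2\lfloor 2+\delta\rfloor\tau_{spr}^{(P)}$ in place of $2\tau_{spr}^{(P)}$, applying the same Lemma \ref{lem:rumor_spreading_lattanzi} tail bound and the same $(A_x)$-growth control (now with the extra factor $\lfloor 2+\delta\rfloor$ inside $A$), the tail remainder being absorbed into the additive constant. Substituting into the asynchronous regret bound then gives the second display.
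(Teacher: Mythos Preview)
Your overall reduction is correct and matches the paper: Theorems \ref{thm:strong_result} and \ref{thm:prob_main_result} reduce the corollary to bounding $\mathbb{E}[A_{2\tau_{spr}^{(P)}}]$ (resp.\ its asynchronous analogue), and the tail-sum split at $t = A_{2m}$ with $m = C\log(N)/\phi$ is precisely the paper's first step.

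However, your mechanism for controlling the tail is different from the paper's and does not work as stated. The paper's argument is much simpler: Assumption \textbf{A.2} requires $B_t \geq D\log(t)$ eventually for \emph{every} $D > 0$, which via Equation~(\ref{eqn:A_x_defn}) forces the sub-exponential growth $A_x \leq e^{Dx}$ (eventually) for every $D > 0$. Choosing $D \leq \phi/C$ gives $A_{2Cl\log(N)/\phi} \leq N^{2l}$, so the tail after the split satisfies
\[
\sum_{l\geq 1} \mathbb{P}\bigl[\tau_{spr}^{(P)} \geq lm\bigr]\,A_{2lm} \;\leq\; \sum_{l\geq 1} N^{-4l}\,N^{2l} \;\leq\; 1
\]
(using $N \geq 2$). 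Hence $\mathbb{E}[A_{2\tau_{spr}^{(P)}}] \leq A_{2C\log(N)/\phi} + 1$, and no $A_{j^*}$ term is produced by this computation at all.

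Your proposed route---the doubling control $A_{2l} \leq A_{l-1}^3$ together with the $j^*$ inequality $N \leq A_{j^*/2}^{2\alpha-6}$---does not close the tail. Iterating the doubling bound yields only $A_{lm} \leq A_m^{\,l^{\log_2 3}}$, i.e., growth of order $\exp(c\,l^{1.58})$ in $l$, which for large $l$ dominates the factor $N^{-4l} = \exp(-4l\ln N)$ from Lemma~\ref{lem:rumor_spreading_lattanzi}; the resulting series is not summable under your bound. There is also no union-bound factor of $N$ to absorb here, since $\tau_{spr}^{(P)}$ in Lemma~\ref{lem:rumor_spreading_lattanzi} is already the time for \emph{all} nodes to be informed. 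Finally, your reading of the extra $A_{j^*}$ in the corollary statement is off: by definition $g((A_x)_{x \in \mathbb{N}})$ already contains $A_{j^*}$ as its first summand, so the standalone $A_{j^*}$ in the display is redundant slack, not a residual from the spreading tail.
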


\begin{proof}
The proof follows if we establish that $\mathbb{E}[A_{2\tau_{spr}^{(P)}}] \leq A_{\frac{2C \log(N)}{\phi}}+1$ and $\mathbb{E}[A_{2\lfloor 2 + \delta \rfloor \tau_{spr}^{(P)}}] \leq  A_{\frac{2\lfloor 2 + \delta \rfloor C \log(N)}{\phi}}+1$. We can bound them using the main result from \citep{rumor_lattanzi}, restated as Lemma \ref{lem:rumor_spreading_lattanzi} in the sequel. That lemma in particular gives that, one can compute $\mathbb{E}[A_{2\tau_{spr}^{(P)}}]$ as follows.
    \begin{align*}
        \mathbb{E}[A_{2 \tau_{spr}^{(P)}}] &\leq A_{ \frac{2C \log(N)}{\phi}} + \sum_{t \geq A_{ \frac{2C \log(N)}{\phi}}} \mathbb{P}[A_{2 \tau_{spr}^{(P)}} \geq t],\\
        &\leq A_{ \frac{2C \log(N)}{\phi}} + \sum_{l \geq 1} \mathbb{P}\left[A_{ 2 \tau_{spr}^{(P)} }\geq A_{\frac{2Cl \log(N)}{\phi}}\right] A_{\frac{2Cl \log(N)}{\phi}}, \\
        &\leq A_{ \frac{2C \log(N)}{\phi}} + \sum_{l \geq 1} \mathbb{P}\left[{ 2 \tau_{spr}^{(P)} }\geq {\frac{2Cl \log(N)}{\phi}}\right] A_{\frac{2Cl \log(N)}{\phi}}, \\
        &\stackrel{(a)}{\leq} A_{ \frac{2C \log(N)}{\phi}} + \sum_{l \geq 1} e^{-4l {\log(N)}}A_{\frac{2Cl \log(N)}{\phi}}, \\
        &\stackrel{(b)}{\leq} A_{ \frac{2C \log(N)}{\phi}} + \sum_{l \geq 1} e^{-2l{\log(N)}}, \\
        &\stackrel{(c)}{\leq}A_{ \frac{2C \log(N)}{\phi}} +1.
    \end{align*}
    
    In step $(a)$, we use the estimate from Lemma \ref{lem:rumor_spreading_lattanzi}. In step $(b)$, we use 
    the additional assumption in the corollary that $A_l \leq e^{Dl}$, for all $D > 0$. Thus, we can choose $D \leq \frac{\phi}{C}$ to arrive at the conclusion in step $(b)$. In step $(c)$, we use $N \geq 2$ to bound the geometric series. Similar computation will yield the bound on $\mathbb{E}[A_{2\lfloor 2 + \delta \rfloor \tau_{spr}^{(P)}}]$.
\end{proof}

\begin{lemma}
There exists an universal constant $C > 0$, such that for every $d \geq 2$ regular graph on $N$ vertices with conductance $\phi$, the spreading time of the standard PULL process completes in time $\tau_{spr}^{(P)}$ which satisfies for all $l \in \mathbb{N}$,
\begin{align*}
    \mathbb{P}\left[\tau_{spr}^{(P)} \geq C l \frac{\log(N)}{\phi}\right] \leq N^{-4l}.
\end{align*}
\label{lem:rumor_spreading_lattanzi}
\end{lemma}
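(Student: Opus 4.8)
The plan is to reduce the statement to the classical conductance bound for rumor spreading (the main theorem of \citep{rumor_lattanzi}) and then amplify its tail. First I would invoke that result in the following form: there is a universal constant $C_0 > 0$ so that on any $d$-regular graph with conductance $\phi$, the standard PULL process started from a single informed vertex completes within $T_0 := C_0 \log(N)/\phi$ rounds with probability at least $1 - N^{-4}$. If the cited theorem is instead phrased with a failure probability $N^{-a}$ for some fixed constant $a$, I would first enlarge $C_0$ to push the exponent up to $4$; the conductance bound is of the ``with high probability'' type in which the exponent grows with the leading constant, so this costs only a larger $C_0$.

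The remaining work is a block/restart argument to turn this single-block guarantee into the geometric-in-$l$ tail. The two ingredients I would use are monotonicity and the Markov property of the process. For monotonicity, I would exhibit the standard coupling showing that the PULL dynamics are monotone in the set of informed vertices: if $S \subseteq S'$ are two informed sets and the two copies use the same neighbor-choices at each round, then the informed set started from $S'$ always contains that started from $S$. Consequently, starting from any nonempty informed set, the conditional probability that the rumor fails to reach all vertices within a further $T_0$ rounds is bounded above by the single-source failure probability $N^{-4}$, which is the worst case.

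With these in hand I would partition $[0, l T_0)$ into $l$ consecutive blocks of length $T_0$ and condition successively. Writing $E_k$ for the event that the rumor has not reached all vertices by the end of block $k$, the Markov property applied at the times $T_0, 2T_0, \ldots$ together with the monotone domination gives $\mathbb{P}[E_{k+1} \mid E_k] \leq N^{-4}$ for every $k$, whence $\mathbb{P}[\tau_{spr}^{(P)} \geq l T_0] = \mathbb{P}[E_l] \leq (N^{-4})^l = N^{-4l}$. Taking $C = C_0$ then yields exactly the claimed bound $\mathbb{P}[\tau_{spr}^{(P)} \geq C l \log(N)/\phi] \leq N^{-4l}$ for all $l \in \mathbb{N}$.

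I expect the main obstacle to be purely in the amplification bookkeeping rather than in any new probabilistic idea: one must check that the coupling genuinely certifies that conditioning on any non-terminal configuration only helps (so that a single informed source is the worst case), and that the exponent emerging from \citep{rumor_lattanzi} can be raised to $4$ by the choice of constant. No structural property of the regular graph beyond its conductance enters the argument, since all graph-dependence has already been absorbed into the cited $\mathrm{O}(\log(N)/\phi)$ spreading-time bound.
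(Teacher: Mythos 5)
Your proposal is correct and follows essentially the same route as the paper's proof: invoke the single-block bound $\mathbb{P}[\tau_{spr}^{(P)} \geq C\log(N)/\phi] \leq N^{-4}$ from \citep{rumor_lattanzi}, then amplify over $l$ consecutive blocks using monotonicity in the informed set (so a single worst-case informed source bounds each block's failure probability) and independence of the callers across blocks. The only cosmetic difference is that you flag the possible need to enlarge the constant to reach exponent $4$, whereas the paper takes that exponent directly from the cited lemma.
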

\begin{proof}
    The main result (Lemma $6$) of \citep{rumor_lattanzi} gives that there exists a constant $C > 0$, such that for all $d$-regular graphs with conductance $\phi$, the spreading time satisfies 
    \begin{align*}
    \mathbb{P}\left[\tau_{spr}^{(P)} \geq C  \frac{\log(N)}{\phi}\right] \leq N^{-4}.
    \end{align*}
    Now, given any $l \in \mathbb{N}$, we can now divide the time into intervals $\left[0, C \frac{\log(N)}{\phi} \right], \left[C \frac{\log(N)}{\phi}, 2C \frac{\log(N)}{\phi} \right],\cdots \\, \left[C (l-1) \frac{\log(N)}{\phi}, C l \frac{\log(N)}{\phi} \right]$. For the event $\left\{ \tau_{spr}^{(P)} \geq C l \frac{\log(N)}{\phi} \right\}$ to occur, we need the spreading to be not finished in each of the $l$ intervals. However, at the beginning of each interval, we know that at-least one node is informed of the rumor. Thus, the probability, that the rumor spreading does not complete in a single interval is at-most $N^{-4}$, which follows from monotonicity, where we can bound by saying that exactly one worst-case node is aware of the rumor. As the sequence of callers is independent across intervals, the probability that rumor spreading fails in all $l$ intervals is then at-most $N^{-4l}$. 
\end{proof}

\section{Regret Communication Tradeoff - Proof of Corollary \ref{cor:reg-comm-tradeoff}}
\label{appendix-com-reg-tradeoff}
\begin{proof}
     Consider a fixed $(A_x^{(1)})_{x \in \mathbb{N}}$ and $(A_x^{(2)})_{x \in \mathbb{N}}$, such that $\lim_{x \to \infty} \frac{A_x^{(1)}}{A_{x}^{(2)}} = 0$. 
     The ordering on 
    $ E[(A_{2\tau_{spr}}^{(P)})^{(1)}] \leq E[(A_{2\tau_{spr}}^{(P)})^{(2)}]$ follows 
     trivially as $P$ is fixed for the two cases. It suffices to show that there exist positive constants $N_0$ and $K_0$ (depending on $(A_x^{(1)})_{x \in \mathbb{N}}$ and $(A_x^{(2)})_{x \in \mathbb{N}})$), such that for all $N \geq N_0$ and $K \geq K_0$, $g(A_x^{(1)}) \leq g(A_x^{(2)})$. If $N$ or $K$ is sufficiently large, then
     $(j^{*})^{(i)} = 2(A^{-1})^{(i)}\left( \left(N{K \choose 2} \left(\bigg \lceil \frac{K}{N} \bigg\rceil + 1\right) \right)^{\frac{1}{(2\alpha-6)}} \right)$, for $i \in \{1, 2\}$.
      Notice that 
      \begin{align}
      g((A_x^{(2)})) - g((A_x^{(1)})) &= A_{(j^{*})^{(2)}}^{(2)} - A_{(j^{*})^{(1)}}^{(1)} + \left(  \frac{2}{2\alpha-3}\left(\sum_{l \geq \frac{(j^{*})^{(2)}}{2}-1} \frac{A_{2l+1}^{(2)}}{(A_{l-1}^{(2)})^{3}} \right) -  \frac{2}{2\alpha-3}\left(\sum_{l \geq \frac{(j^{*})^{(1)}}{2}-1} \frac{A_{2l+1}^{(1)}}{(A_{l-1}^{(1)})^{3}} \right) \right), \nonumber
      \\ &\geq A_{(j^{*})^{(2)}}^{(2)} - A_{(j^{*})^{(1)}}^{(1)} - \frac{2}{2\alpha-3}\left(\sum_{l \geq 1} \frac{A_{2l+1}^{(1)}}{(A_{l-1}^{(1)})^{3}} \right) .
      \label{eqn:reg-com-proof}
      \end{align}
      
      Notice that $A_{(j^{*})^{(2)}}^{(2)} - A_{(j^{*})^{(1)}}^{(1)}  > 0$ and scaling (is monotone non-decreasing) with $N$ and $K$. In other words, for fixed $K$, $\lim_{N \to \infty}(A_{(j^{*})^{(2)}}^{(2)} - A_{(j^{*})^{(1)}}^{(1)} ) = \infty$ and for fixed $N$, $\lim_{K \to \infty}(A_{(j^{*})^{(2)}}^{(2)} - A_{(j^{*})^{(1)}}^{(1)} ) = \infty$. This follows as $(A_x^{(i)})_{x \geq 1}$ is super-linear for $i \in \{1,2\}$ and $\lim_{x \to \infty} \frac{A_x^{(1)}}{A_{x}^{(2)}} = 0$. From the hypothesis that the two communication sequences satisfy assumption \textbf{A.2}, we have that 
      $\frac{2}{2\alpha-3}\left(\sum_{l \geq 1} \frac{A_{2l+1}^{(1)}}{(A_{l-1}^{(1)})^{3}} \right) < \infty$ and independent of $N$ and $K$.
       Thus, for all large $N$ or $K$, Equation (\ref{eqn:reg-com-proof}), simplifies to $g(A_x^{(2)}) - g(A_x^{(1)}) > 0$.
\end{proof}

\section{An Algorithm without using agent ids}
\label{app:random_sticky}

The initialization in Line $2$ of Algorithms \ref{algo:main-algo} and \ref{algo:main-algo-prob} relied on each agent knowing its identity. However, in many settings, it may be desirable to have algorithms that do not depend on the agent's identity. We outline here a randomized initialization procedure in Line $2$ to convert Algorithms \ref{algo:main-algo} and \ref{algo:main-algo-prob} to one without using agent ids. Fix some $\gamma \in (0,1)$. We replace Line $2$ in Algorithms \ref{algo:main-algo} and \ref{algo:main-algo-prob} with a randomization, where each agent $i \in [N]$ chooses independently of other agents, a uniformly random subset of size $\bigg \lceil \ln \left( \frac{1}{\gamma} \right) \frac{K}{N} \bigg\rceil + 2$ from the set of $K$ arms as $S_0^{(i)}$. Each agent $i$, then subsequently chooses a random subset of size $\bigg \lceil \ln \left( \frac{1}{\gamma} \right) \frac{K}{N} \bigg\rceil $ uniformly at random from $S_0^{(i)}$ as its `sticky set' $\widehat{S}^{(i)}$. The rest of the algorithms from Line $3$ will be identical. One can then immediately see that the regret guarantees stated in Theorems \ref{thm:strong_result} and \ref{thm:prob_main_result} hold verbatim for this modification, with probability at-least $1-\gamma$, where the probability is over the initial random assignment of the sets $\widehat{S}^{(i)}$ to agents. More precisely, with probability at-least $1-\gamma$, the above random initialization ensures that there exists an agent $i \in [N]$, such that the best arm $1 \in \widehat{S}^{(i)}$. On this event, the regret guarantees along with the same proof of Theorems \ref{thm:strong_result} and \ref{thm:prob_main_result} hold.


\end{appendices}

\end{document}